%% file: main.tex
\documentclass{article}

\PassOptionsToPackage{numbers}{natbib}

\usepackage[preprint]{neurips_2026}
\usepackage{microtype}
\usepackage{graphicx}
\usepackage{subcaption}
\usepackage{booktabs}
\usepackage{enumitem}
\usepackage{hyperref}
\usepackage{algorithm}
\usepackage[noend]{algorithmic}

\usepackage{amsmath}
\usepackage{amssymb}
\usepackage{mathtools}
\usepackage{amsthm}
\usepackage[most]{tcolorbox}

\usepackage[capitalize,noabbrev]{cleveref}

\theoremstyle{plain}
\newtheorem{theorem}{Theorem}[section]
\newtheorem{proposition}[theorem]{Proposition}
\newtheorem{lemma}[theorem]{Lemma}

\theoremstyle{definition}
\newtheorem{definition}[theorem]{Definition}
\newtheorem{assumption}[theorem]{Assumption}
\newtheorem{remark}[theorem]{Remark}

\usepackage{graphicx}
\usepackage{amsmath,amsthm,amssymb,amsfonts}
\usepackage{bm}

\usepackage{booktabs} 
\usepackage{threeparttable}
\usepackage{makecell}
\usepackage{multirow}

\usepackage{subcaption}
\usepackage[nice]{nicefrac}
\usepackage{pifont}
\usepackage{mathabx}
\usepackage[colorinlistoftodos,textwidth=2.1cm]{todonotes}

\usepackage{tcolorbox}
\usepackage{pifont}
\definecolor{mydarkgreen}{RGB}{39,130,67}
\definecolor{mydarkred}{RGB}{192,25,25}
\definecolor{mydarkblue}{RGB}{0,0,140}
\definecolor{lightorange}{RGB}{252,210,153}
\newcommand{\purple}{\color{p\purple}}

\usepackage{empheq}
\definecolor{darkgreen}{rgb}{0.00,0.5,0.00}

\usepackage{tcolorbox}
\usepackage{framed}
\usepackage{cleveref}
\crefname{assumption}{assumption}{assumptions}

\theoremstyle{definition}

\newcommand{\ve}{\varepsilon}
\newcommand{\supp}{\mathrm{supp}}

\usepackage{sidecap}

\input{macros}

\usepackage{tikz}
\usetikzlibrary{arrows.meta, positioning}
\usepackage{wrapfig}

\usepackage{wrapfig}

\usepackage[utf8]{inputenc} 
\usepackage[T1]{fontenc}    
\usepackage{hyperref}       
\usepackage{url}            
\usepackage{booktabs}       
\usepackage{amsfonts}       
\usepackage{nicefrac}       
\usepackage{microtype}      
\usepackage{xcolor}         

\title{Causal Evaluation of Membership Inference Attacks}

\author{%
  Mathieu Even \\
   Inria PreMeDICaL, Inserm, \\
   Montpellier, France \\
\texttt{mathieu.even@inria.fr}
\And
Clément Berenfeld \\
   Inria PreMeDICaL, Inserm, \\
   Montpellier, France \\
  \And
    Linus Bleistein \\
School of Computer and\\ Communication Science (EPFL)\\
 School of Life Sciences (EPFL)
\\
Lausanne, Switzerland
  \AND
  Tudor Cebere \\
   Inria PreMeDICaL, Inserm, \\
   Montpellier, France \\
  \And
  Julie Josse \\
   Inria PreMeDICaL, Inserm, \\
   Montpellier, France \\
  \And
  Aurélien Bellet \\
   Inria PreMeDICaL, Inserm, \\
   Montpellier, France \\
\texttt{aurelien.bellet@inria.fr}
}

\begin{document}

\maketitle

\begin{abstract}
  Membership Inference Attacks (MIAs) aim to distinguish training points (members) from unseen data (non-members), and are widely used to quantify memorization and assess privacy risks. Standard MIA evaluation requires repeated retraining, which is computationally costly for large models. One-run (single training with randomized data inclusion) and zero-run (post hoc evaluation) methods are often used instead, but their statistical validity remains unclear.
We address this gap by framing MIA evaluation as a causal inference problem, defining \emph{memorization as the causal effect of including a data point in the training set}. This novel formulation reveals and formalizes key sources of bias in existing protocols: one-run methods suffer from interference between jointly included points, while zero-run evaluations are additionally confounded by distribution shift between member and non-member evaluation data.
We derive causal analogues of standard MIA metrics and propose practical estimators for multi-run, one-run, and zero-run regimes with non-asymptotic consistency guarantees. We validate our approach in several settings, including pretrained and fine-tuned LLMs, showing that it enables reliable measurement of MIA performance without retraining and under distribution shift. Overall, our framework provides a principled foundation for privacy evaluation in modern AI systems.\looseness=-1
\end{abstract}

\section{Introduction}

The rapid deployment of large-scale machine learning models has intensified concerns regarding data privacy and intellectual property, as such models are susceptible to privacy attacks that can infer sensitive information about individual training data points \cite{shokri2017membership, carlini2019secret, carlini2021extracting, nasr2023scalable, barbero2025extracting, hayes2025strong}.
Consequently, data owners and regulators increasingly question whether unauthorized information has been added to training sets, potentially exposing personal or copyrighted content.
Recent guidance from data protection authorities emphasizes the use of attack-based evaluations to assess privacy leakage and to determine whether a trained model may itself constitute personal data under existing regulatory frameworks \citep{edpb}, while recent high-profile legal challenges concerning data provenance and usage have raised similar questions regarding whether specific data was used in model training \citep{noauthor_nyt_nodate,noauthor_concord_nodate}.\looseness=-1

\looseness=-1 In this context, \emph{Membership Inference Attacks} (MIAs) offer a compelling conceptual and practical
tool, as they probe the minimal form of information a model can retain about a data point: whether it was part of the training set \cite{carlini2022membership}.
MIAs provide a principled measure of memorization and unintended privacy leakage, revealing when a model's internal representations encode training-specific artifacts that distinguish members from non-members, even when direct data leakage is rare or difficult to observe.
MIAs also play a key role in auditing differential privacy \cite{jagielski2020auditing,nasr2023tight} and enabling other privacy attacks, including data reconstruction \citep{carlini2021extracting, carlini2019secret, nasr2023scalable, barbero2025extracting} and dataset inference \cite{maini_dataset_2021,maini_llm_nodate}.

The standard framework for evaluating MIAs, the \emph{multi-run} regime, involves retraining models hundreds or thousands of times 
with and without a given point \cite{shokri2017membership, yeom2018privacy, carlini2022membership, DBLP:conf/icml/ZarifzadehLS24}. While statistically rigorous, this approach is computationally prohibitive for modern large-scale models, prompting a shift toward more efficient alternatives. In \emph{one-run} approaches, a single model is trained on a randomized subset of data \cite{steinke2023privacy,mahloujifar2024auditing,Liu2025one_run_quantile_regression}, whereas \emph{zero-run} methods evaluate a deployed model post hoc \citep{shokri_enhanced,shi2023detecting,bertran2023scalable,meeus2024did}. 
The latter scenario is particularly relevant when an external evaluator cannot directly intervene in model training, which requires provider cooperation and trust, and cannot independently retrain the model due to withheld information about the training data or algorithm, as typically the case when using MIAs to assess the privacy risks of LLMs deployed by dominant tech companies.\looseness=-1

However, when moving from controlled retraining to more practical settings, the statistical validity of MIA evaluation becomes unclear. In particular, recent work shows that zero-run MIAs on LLMs suffer from \emph{systematic bias caused by distribution shift} in the non-member data used for evaluation \citep{duan2024membership,zhang2024membership,meeus2024sok}.
For example, in typical MIA evaluations, \emph{members} are documents published before the LLM training cutoff date—e.g., arXiv papers or Wikipedia articles that were almost certainly included in training—while \emph{non-members} are documents published afterward. These sets often differ in topic, style, and temporal context, causing MIA success in membership prediction to reflect distributional differences rather than true memorization, thereby overestimating privacy leakage \citep{meeus2024sok}.\looseness=-1

\textbf{Contributions.} In this work, we reframe the evaluation of MIAs as a causal inference problem, \emph{defining memorization as the causal effect of including a data point in the training set}.
This shift from a predictive to a causal perspective allows us to explicitly identify the assumptions required to interpret MIA performance as true memorization rather than an artifact of confounding factors, and enables the design of robust estimators that correct systematic biases in existing approaches. Specifically, our contributions are as follows:\looseness=-1
\begin{enumerate}[itemsep=1pt, parsep=1pt, topsep=0pt, partopsep=0pt, leftmargin=20pt
]
\vspace{-.2em}
    \item \textbf{Causal formalization of MIA evaluation}:
    We introduce a causal evaluation framework for MIAs that isolates the effect of a training point on the model, accounting for non-causal signals that can distort memorization estimates. Accordingly, we define causal counterparts to standard MIA metrics, shifting evaluation from correlation to causation.\looseness=-1

    \item \textbf{A causal taxonomy}: We analyze multi-run, one-run, and zero-run MIA evaluations from a causal lens, formalizing their error sources: \emph{interference} \citep{hudgens2008toward} between jointly inserted points in one-run settings, and \emph{confounding} \citep{rosenbaum1983central} from non-random assignments in zero-run scenarios.\looseness=-1
        
    \item \textbf{Principled estimators:} We propose practical estimators for causal MIA metrics that correct for distribution shift bias and come with non-asymptotic consistency guarantees. 
    As a byproduct, we obtain a new approach to causal effect estimation under random interference, a challenging setting \citep{cinelli2025challenges}, leveraging learning-theoretic tools such as algorithmic stability \citep{bousquet2002stability} in a novel way.\looseness=-1
        
    \item \textbf{Empirical validation:} Through experiments on synthetic data, ResNet on CIFAR-10, and both pretrained and fine-tuned LLMs,
    we show that our causal estimators provide practical and reliable solutions for post-hoc MIA evaluation, effectively correcting evaluation biases observed in prior work on large language models.
\end{enumerate}

\section{Background}

Here and throughout, a training algorithm $\cA$ is a (potentially randomized) function that maps a dataset $\cD \subset \cX$ to model parameters $\theta = \cA(\cD)$ in a parameter space $\Theta \subset \mathbb{R}^D$.

\subsection{Membership Inference Attacks (MIAs)}
\label{sec:different_settings}

\begin{figure*}[t]
\begin{minipage}[t]{0.39\textwidth}
  \begin{algorithm}[H]
  \small
    \caption{Multi-run}
    \label{alg:n}
    \begin{algorithmic}[1]
      \STATE \textbf{Input:} algorithm $\mathcal{A}$, dataset $\cD$, target distribution $\mathcal{P}_T$, number of samples $n$
      \FOR{$i \in [n]$}
        \STATE Sample $X_i \sim \mathcal{P}_T$
        \STATE {\color{darkgreen}$A_i \sim \mathrm{Bernoulli}(1/2)$}
        \STATE {\color{darkgreen}$\dtraini \gets \begin{cases}
        \cD \cup \{X_i\} & \text{\!\!\!\!if } A_i=1,\\
        \cD & \text{\!\!\!\!otherwise.}
        \end{cases}$}
        \STATE $\theta_i\leftarrow \cA(\dtraini)$
        \STATE $Y_i \gets s_{\rm MIA}(X_i,{\color{darkgreen}\theta_i})$
      \ENDFOR
      \STATE \textbf{Output:} $(X_i, A_i, Y_i)_{i\in[n]}$
    \end{algorithmic}
  \end{algorithm}
\end{minipage}\hfill
\begin{minipage}[t]{0.31\textwidth}
  \begin{algorithm}[H]
  \small
    \caption{One-run}
    \label{alg:one}
    \begin{algorithmic}[1]
      \STATE \textbf{Input:} algorithm $\cA$, dataset $\cD$, target distribution $\mathcal{P}_T$, number of samples $n$
      \FOR{$i \in [n]$}
        \STATE Sample $X_i \sim \mathcal{P}_T$
        \STATE {\color{darkgreen}$A_i \sim \mathrm{Bernoulli}(1/2)$}
      \ENDFOR
      \STATE {\color{darkred}$\dtrain \gets \cD \cup \{X_i : A_i = 1\}$}
      \STATE $\theta\leftarrow \cA(\dtrain)$
      \FOR{$i \in [n]$}
        \STATE $Y_i \gets s_{\rm MIA}(X_i,{\color{darkred}\theta})$
      \ENDFOR
      \STATE \textbf{Output:} $(X_i, A_i, Y_i)_{i\in[n]}$
    \end{algorithmic}
  \end{algorithm}
\end{minipage}\hfill
\begin{minipage}[t]{0.28\textwidth}
  \begin{algorithm}[H]
  \small
    \caption{Zero-run}
    \label{alg:zero}
    \begin{algorithmic}[1]
      \STATE \textbf{Input:} {\color{darkred}trained model $\theta\leftarrow \mathcal{A}(\cD_{\rm train})$}, datasets $\cD_T \subset \dtrain$ (of underlying distribution $\cP_T$) and $\cD_0 \subset \cX \setminus \dtrain$
      \STATE $(X_i)_{i\in[n]} \gets \cD_T \cup \cD_0$
      \FOR{$i \in [n]$}
        \STATE {\color{darkred}$A_i \gets \one_\set{X_i \in \cD_T}$}
        \STATE $Y_i \gets s_{\rm MIA}(X_i,{\color{darkred}\theta})$
      \ENDFOR
      \STATE \textbf{Output:} $(X_i, A_i, Y_i)_{i\in[n]}$
    \end{algorithmic}
  \end{algorithm}
\end{minipage}
\caption{MIA evaluation regimes studied in this work, with the MIA abstracted by its score function $s_{\rm MIA}$.
Green highlights the key steps of the multi-run setting; red indicates the successive modifications introduced when moving to the one-run and zero-run settings.}
\label{fig:algs_settings}
\end{figure*}

An MIA aims to determine whether a specific record $x\in\cX$ was included in a model's training set. Typical MIAs use a scoring function $s_{\rm MIA}(x, \theta) \in \R$ (e.g., negative loss or log-likelihood of $\theta$ on $x$) to compute a \textit{membership confidence score}, which is thresholded to predict membership.
\textbf{In this work, our focus is not on designing MIAs} (i.e., constructing a suitable $s_{\rm MIA}$), \textbf{but on the validity of the three evaluation regimes described in \Cref{fig:algs_settings}}. These regimes differ in how they collect the evidence $(X_i, A_i, Y_i)_{i\in[n]}$, where $X_i \in \cX$ is a data point, $A_i \in \{0,1\}$ is its membership in the training set, and $Y_i$ is its MIA score.
Evaluating MIAs requires specifying the distribution over which memorization is measured; throughout the paper, we denote this target distribution on $\cX$ by $\cP_T$.

In the \textbf{Multi-run (\Cref{alg:n})} regime, each sample $(X_i, A_i, Y_i)$ is generated from an independently trained model $\theta_i$, with $X_i \sim \cP_T$ and randomized membership $A_i$.
In the \textbf{One-run (\Cref{alg:one})} regime introduced by \citet{steinke2023privacy}, a single model $\theta$ is trained, with $n$ points drawn from $\cP_T$ 
randomly included in the training set $\dtrain$.
Finally, the \textbf{Zero-run (\Cref{alg:zero})} regime, widely used in recent studies on LLMs \citep{shi2023detecting,mattern-etal-2023-membership,meeus2024did}, evaluates MIAs post-hoc on a fixed model, 
without retraining or randomized membership. The training set $\dtrain$ is fixed and
unknown to the evaluator. Evaluation uses proxy datasets: $\cD_T \subset \dtrain$ (known members, e.g., pre-2023 Wikipedia) 
and $\cD_0 \subset \cX \setminus \dtrain$ (known non-members, e.g., post-release articles).


\textbf{Evaluation metrics.}
The evidence $(X_i, A_i, Y_i)_{i \in [n]}$ is used to compute standard MIA performance metrics, measuring memorization by how well the scores $Y_i$ separate members ($A_i = 1$) from non-members ($A_i = 0$):
\textbf{(i)} \textit{Membership advantage}: $\esp{Y_i \mid A_i=1} - \esp{Y_i \mid A_i=0}$;
\textbf{(ii)} \textit{AUC}: $\proba{Y_i > Y_j \mid A_i=1, A_j=0}$; \label{eq:AUC}
\textbf{(iii)} \textit{TPR at fixed FPR}: $\proba{Y_i \geq t_\alpha \mid A_i=1}$ with $t_\alpha$ the $(1-\alpha)$-quantile of $Y_i \mid A_i=0$.

\subsection{Causal Inference}
\label{sec:background_causal}

In causal inference \citep{hernan2010causal,rubin1974estimating,wager2024causal}, observations are tuples $(X_i,A_i,Y_i)$, where $X_i$ denotes covariates, $A_i\in\{0,1\}$ is a treatment indicator, and $Y_i$ is here a real-valued outcome.
Under the potential outcomes framework \citep{splawa1990application,ImbensRubin2015,wager2024causal}, each unit has two counterfactual outcomes $Y_i(0)$ and $Y_i(1)$, but only $Y_i=Y_i(A_i)$ is observed. Individual treatment effects $Y_i(1)-Y_i(0)$ are thus not identifiable, meaning they cannot be uniquely determined from the distribution of the observed data. Causal inference instead focuses on population-level quantities such as the average treatment effect (ATE) $\esp{Y_i(1)-Y_i(0)}$, which are identifiable under the following standard assumptions.\looseness=-1
%
%
\begin{assumption}[Standard causal assumptions]\label{hyp:standard_causal_assumptions}
For all $ i\in[n]$, \textit{(1) (SUTVA)} $Y_i = Y_i(A_i)$; the assignment mechanism satisfies either \textit{(2a) (Randomized)} $A_i \indep (Y_i(0),Y_i(1))$ or \textit{(2b) (Unconfoundedness)} $A_i \indep (Y_i(0),Y_i(1))|X_i$; and \textit{(3) (Overlap)} $0<\proba{A_i=1|X_i}<1$ almost surely.
\end{assumption}
SUTVA and overlap imply a unit's outcome does not depend on other units' assignments, and that each unit has a nonzero probability of receiving either treatment.
The assumption on the assignment mechanism depends on the setting.\looseness=-1

\textbf{RCT vs observational study.}
In randomized control trials (RCTs), treatment is assigned randomly to each unit, so the \textit{Randomized} assumption holds. In observational studies, treatment may correlate with covariates that also affect outcomes, violating this assumption. \textit{Unconfoundedness} addresses this by requiring that, conditional on observed covariates, treatment assignment is as good as random.\looseness=-1

\looseness=-1\textbf{Causal inference with interference.}
When the SUTVA assumption is violated, \textit{interference} occurs:
a unit's outcome may depend on other units' treatment assignments, for instance through 
shared resources, competition effects, or 
herd immunity in vaccine trials \citep{sobel2006randomized}.
Causal inference with interference thus requires additional formalism \citep{hudgens2008toward}.
Specifically, interference is modeled by generalized potential outcomes $Y_i(\bar a)$ indexed by the full assignment vector $\bar a\in\{0,1\}^n$, with observed outcome
$Y_i = Y_i(\bar A)$ for $ \bar A=(A_j)_{j\in[n]}$.
We define the marginal potential outcome as:
\begin{equation}\label{eq:PO}
Y_i(a) = Y_i(A_1,\dots,A_{i-1},a,A_{i+1},\dots,A_n)\,.
\end{equation}
In the absence of interference, $Y_i(\bar a)=Y_i(a_i)$, recovering the standard setting.
Standard causal assumptions under interference \citep{ogburn2024causal} that generalize \Cref{hyp:standard_causal_assumptions} are detailed in \Cref{app:causal_assumptions}.

\section{MIA Evaluation Under a Causal Lens}

Our first key contribution is to frame membership inference as a causal question: \textit{what is the effect of including a given data point in the training set on the learned model?} In this formulation, the \emph{treatment} $A_i$ is the inclusion of a data point $X_i$ in the training set, and the \emph{outcome} $Y_i$ is the membership confidence score (which depends on both the model and the data point $X_i$). Potential outcomes $Y_i(1)$ and $Y_i(0)$ correspond to the model's output on $X_i$ under two counterfactual worlds: one in which $X_i$ was used for training and one in which it was not. Since only one of these worlds is observed, individual effects are unidentifiable, but population-level causal effects can be estimated. 
Casting membership inference in this causal framework clarifies what is being measured and enables principled control of interference and confounding that would otherwise bias the evaluation.\looseness=-1

\subsection{Causal Interpretation of MIA Evaluation Regimes}
\label{sec:mia_causal_setting}

We now show that we can explicitly map the three scenarios corresponding to \Cref{alg:n,alg:one,alg:zero} to standard causal inference settings, with their corresponding causal graphs shown in \Cref{fig:causal_graph}.

\begin{wrapfigure}[9]{r}{0.5\columnwidth}
    \centering
    \vspace*{-.3cm}
    \includegraphics[width=\linewidth,trim=0 10pt 0 5pt, clip]{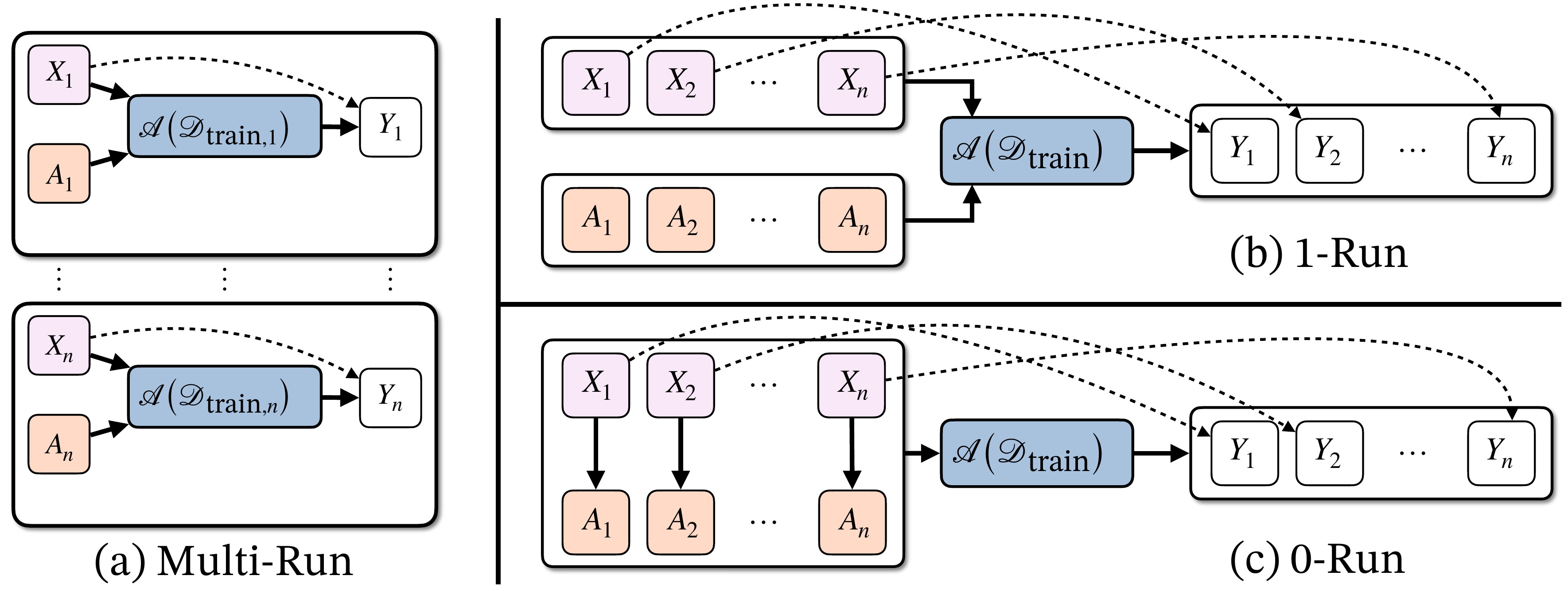}
    \caption{Causal graphs.}
    \label{fig:causal_graph}
\end{wrapfigure}

\textbf{Multi-run $\leftrightarrow$ RCT.} The treatment is randomized, $(X_i,A_i,Y_i)$ are i.i.d., corresponding to a RCT setting in causal inference where \Cref{hyp:standard_causal_assumptions} with randomized assignments holds.

\textbf{One-run $\leftrightarrow$ RCT with interference.}
While the randomized assignment property is preserved, the one-run setting trades computational efficiency (training a single model instead of $n$) for the presence of \emph{interference} between units. Indeed, the assignment $A_j$ of point $j\in[n]$ can affect $Y_i$ for $i\neq j$, since including or excluding $X_j$ changes the learned model used to compute all outcomes.
The one-run regime thus corresponds to an RCT with interference.
In the causal graph shown in \Cref{fig:causal_graph}b), interference arises through the shared mediator $\cA(\dtrain)$. Without this mediator, the graph would contain direct edges $A_i\rightarrow Y_j$ for all $i,j\in[n]$, corresponding to a complete interference graph.
While causal quantities are identifiable under standard assumptions (\Cref{hyp:RCT_interferences}) that here hold by design, additional assumptions will be required to ensure the consistency of our estimators, as we detail in Section~\ref{sec:interferences}.

\textbf{Zero-run $\leftrightarrow$ observational study with interference.}
This setting inherits interference from the one-run scenario and adds \emph{confounding}. As shown by the edge from $X_i$ to $A_i$ in \Cref{fig:causal_graph}c), membership is not assigned independently of the data points due to the distribution shift between members and non-members. This confounding factor, which is at the heart of misleading conclusions observed in memorization studies for LLMs \citep{meeus2024sok}, turns the problem into an \emph{observational study}. Framing the problem causally makes this shift explicit and will enable principled debiasing of evaluation metrics via confounder adjustment, as we describe in Section~\ref{sec:0run}.

\subsection{Causal MIA Evaluation Metrics}
\label{sec:mia_perf_causal}

\begin{table}
\centering
\caption{Classical MIA metrics and their causal counterparts.
\label{tab:eval_metrics}
}
\footnotesize
\label{tab:example}
\setlength{\tabcolsep}{0pt}
\begin{tabular}{cc}
\toprule
\textbf{Classical metrics} & \textbf{Causal metrics} \\
\midrule
\begin{tabular}[c]{@{}c@{}}
 Membership Advantage\\
$\esp{Y_i|A_i\!\!=\!\!1}\!-\!\esp{Y_i|A_i\!\!=\!\!0}$
\end{tabular}
   &
\begin{tabular}[c]{@{}c@{}}
Average Treatment Effect\\
$\tau_\mathrm{ATE}=\espT{Y_i(1)-Y_i(0)}$
\end{tabular}
\\
\midrule
\begin{tabular}[c]{@{}c@{}}
 $\mathrm{TPR}(t)\!\!=\!\!\proba{Y_i>t|A_i\!\!=\!\!1}$ \\
 $\mathrm{FPR}(t)\!\!=\!\!\proba{Y_i>t|A_i\!\!=\!\!0}$
\end{tabular}
   &
\begin{tabular}[c]{@{}c@{}}
$\tau_\mathrm{TPR}(t)\!\!=\!\!\probaT{Y_i(1)>t}$ \\
$\tau_\mathrm{FPR}(t)\!\!=\!\!\probaT{Y_i(0)>t}$\\
\end{tabular}
\\
\midrule
\begin{tabular}[c]{@{}c@{}}
AUC \\ 
Area under $\!\set{\mathrm{FPR}(t),\!\mathrm{TPR}(t)}$
\end{tabular}
   &
\begin{tabular}[c]{@{}c@{}}
Causal AUC\\ 
Area under $\!\set{\tau_\mathrm{FPR}(t),\!\tau_\mathrm{TPR}(t)}$
\end{tabular}
\\
\midrule
\begin{tabular}[c]{@{}c@{}}
TPR at fixed FPR $\alpha$ \\ 
$\proba{Y_i\geq t_\alpha|A_i\!\!=\!\!1}$\\
$t_\alpha$: $\!1\!\!-\!\!\alpha$ quantile of $Y_i|A_i\!\!=\!\!0$~~~
\end{tabular}
   &
\begin{tabular}[c]{@{}c@{}}
Causal TPR at fixed FPR $\alpha$ \\ 
$\tau_{\mathrm{TPR@FPR}}(\alpha)=\probaT{Y_i(1)\geq q_\alpha}$\\
$q_\alpha$: $\!1\!\!-\!\!\alpha$ quantile of $Y_i(0)|X_i\!\!\sim\!\!\cP_T$
\end{tabular}\\\bottomrule
\end{tabular}
\end{table}

When members ($A=1$) and non-members ($A=0$) follow different distributions, the evaluation metrics defined in \Cref{sec:different_settings} may conflate true memorization effects (i.e., differences in $Y_i$ versus $Y_j$ for $A_i \neq A_j$ even when $X_i \approx X_j$) with distribution shift effects, where differences in $Y$ arise simply because $X_i$ and $X_j$ differ.
To address this, we introduce causal metrics that enable meaningful MIA evaluation under both interference and distribution shift. 
\Cref{tab:eval_metrics} summarizes these metrics alongside their corresponding classical counterparts, where 
$\mathbb{E}_{\mathcal{P}_T}$ and $\mathbb{P}_{\mathcal{P}_T}$ denote expectations and probabilities over $X_i \sim \mathcal{P}_T$, where $\mathcal{P}_T$ is the target distribution. Under interference, all causal quantities implicitly depend on the dataset size $n$, as can be seen from \eqref{eq:PO}.\looseness=-1

In the next sections, for simplicity of presentation, we will focus on estimating the \textbf{Average Treatment Effect (ATE)}
   $\tau_\mathrm{ATE} = \espT{Y_i(1)-Y_i(0)}$,
which admits a direct interpretation as a \emph{causal membership advantage}. Extensions of our results to the other causal metrics are provided in \Cref{app:estimators_causal_metrics}.\looseness=-1

\begin{tcolorbox}[colback=gray!5!white,colframe=gray!85!white,top=2pt,bottom=2pt,title= \textbf{Insight \# 1}]
Our causal framework provides a principled way to characterize biases in MIA evaluation (interference and confounding) and to define performance metrics that disentangle distribution shift between member and non-member data from genuine memorization.
\end{tcolorbox}




\section{Consistency in the Multi-Run Regime}
\label{sec:n-run}

In the multi-run setting, identification follows from \Cref{hyp:standard_causal_assumptions}, which is satisfied by design.

\begin{proposition}\label{prop:n_run_identification}
    If $(X_i,A_i,Y_i)_{i\in[n]}$ are obtained using \Cref{alg:n}, causal evaluation metrics (\Cref{tab:eval_metrics}, right column) are identifiable and equal to classical evaluation metrics (\Cref{tab:eval_metrics}, left).
\end{proposition}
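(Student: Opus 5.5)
The plan is to make the potential outcomes explicit for \Cref{alg:n} and to check \Cref{hyp:standard_causal_assumptions} with randomized assignment. Since the $n$ iterations of \Cref{alg:n} are independent, the natural definition is, for each $i$ and $a\in\{0,1\}$,
\[
Y_i(a)=s_{\rm MIA}\big(X_i,\cA(\cD\cup\{X_i\}\cdot a)\big),
\]
where $\cD\cup\{X_i\}\cdot a$ is $\cD\cup\{X_i\}$ if $a=1$ and $\cD$ otherwise. The internal randomness of $\cA$ is drawn independently in each run, and is independent of $A_i$.

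\textbf{Step 1: SUTVA.} Model $\theta_i$ is trained only on $\cD$ together with possibly $X_i$. Hence $Y_i$ does not depend on $(A_j)_{j\neq i}$, and $Y_i=Y_i(A_i)$. There is no interference: the generalized potential outcome $Y_i(\bar a)$ reduces to $Y_i(a_i)$.

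\textbf{Step 2: Randomization and overlap.} $A_i\sim\mathrm{Bernoulli}(1/2)$ is drawn independently of $X_i$ and of the training randomness. So $A_i\indep (X_i,Y_i(0),Y_i(1))$, which in particular gives (2a). Overlap holds because $\proba{A_i=1\mid X_i}=1/2$. Moreover $X_i\sim\cP_T$ unconditionally, and by independence also conditionally on $A_i$.

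\textbf{Step 3: Identification of each metric.} For any measurable $f$,
\[
\esp{f(Y_i)\mid A_i=a}=\esp{f(Y_i(a))\mid A_i=a}=\espT{f(Y_i(a))}.
\]
The first equality uses SUTVA and the second uses independence, with $X_i\sim\cP_T$.
\begin{itemize}
\item Taking $f=\mathrm{id}$ shows that the membership advantage equals $\tau_{\rm ATE}$.
\item Taking $f=\one_{\{\cdot>t\}}$ gives $\mathrm{TPR}(t)=\tau_{\rm TPR}(t)$ and $\mathrm{FPR}(t)=\tau_{\rm FPR}(t)$ for every $t$.
\item The AUC is a functional of the curve $t\mapsto(\mathrm{FPR}(t),\mathrm{TPR}(t))$, so it equals the causal AUC.
\item The law of $Y_i\mid A_i=0$ equals the law of $Y_i(0)$ under $\cP_T$, so its quantiles coincide: $t_\alpha=q_\alpha$. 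It follows that TPR@FPR equals $\tau_{\rm TPR@FPR}(\alpha)$.
\end{itemize}
Since the left-hand quantities are functionals of the observed joint law of $(X_i,A_i,Y_i)$, the causal metrics are identifiable.

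\textbf{Main obstacle.} The only delicate point is the AUC. The classical definition $\proba{Y_i>Y_j\mid A_i=1,A_j=0}$ uses two units, so I must check that it matches the area under the causal ROC curve. Units $i\neq j$ are i.i.d., so $Y_i\mid A_i=1$ and $Y_j\mid A_j=0$ are independent, with laws $Y(1)$ and $Y(0)$ under $\cP_T$. The standard identity $\int \mathrm{TPR}\,d\mathrm{FPR}=\proba{Y(1)>Y'(0)}$ for independent copies then applies. This holds under continuity of the score distributions; with ties one needs the usual convention of counting ties with weight $1/2$. I would state this convention explicitly.
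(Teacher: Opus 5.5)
Your proof is correct, but it takes a different route from the paper's. The paper does not argue directly in the multi-run setting. Instead, it proves a single IPW-type identification formula for $\tau_\mathrm{ATE}$ (\Cref{prop:ATE_identification_app}) under its most general assumption set, \Cref{hyp:observational_interferences}, using the density ratio $\frac{\dd\cP_T}{\dd\cP_{X_i}}=\pi/\proba{A_i=1}$. It then notes that \Cref{hyp:standard_causal_assumptions} holds by design in \Cref{alg:n}, so the formula applies, and specializes to the constant propensity $\pi\equiv 1/2$. There the weight $\pi/(1-\pi)$ equals $1$ and the formula collapses to $\esp{Y_i\mid A_i=1}-\esp{Y_i\mid A_i=0}$. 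The other metrics are covered only by ``similar arguments.''

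You instead use the design-given independence $A_i\indep(X_i,Y_i(0),Y_i(1))$ to show that the whole conditional law of $Y_i$ given $A_i=a$ equals the law of $Y_i(a)$ under $\cP_T$. Every metric in \Cref{tab:eval_metrics} is a functional of these two laws, so this settles TPR, FPR, the ROC curve and its area, and the quantile-based TPR@FPR at once, which is more explicit and complete than the paper. Your reconciliation of the two-unit formula $\proba{Y_i>Y_j\mid A_i=1,A_j=0}$ with the area under the curve is something the paper never addresses. It is not strictly needed for this statement, since the table defines both AUCs as areas under curves you have already shown coincide pointwise.

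What the paper's route buys is economy across regimes. The same formula, with a non-constant $\pi$, is exactly the zero-run result \Cref{prop:0_run_identification} and motivates the IPW estimator. Your law-equality argument instead depends on $A_i$ being independent of $X_i$. It would therefore need reweighting to extend beyond the randomized (multi-run and one-run) settings.
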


These properties then enable estimation of the causal evaluation metrics. Focusing on 
the ATE (the causal membership advantage) $\tau_\mathrm{ATE}$, 
letting $n_a = \#\{i:A_i=a\}$, we define the empirical estimator:
\begin{equation}\label{eq:estimator_ATE}
\begin{aligned}
    &\textstyle\hat \tau_\mathrm{ATE} = \frac{1}{n_1}\sum_{i:A_i=1}Y_i-\frac{1}{n_0}\sum_{i:A_i=0}Y_i \,.
    \end{aligned}
\end{equation}

\begin{proposition}\label{prop:mia_evaluation_n_run_ATE}
    If $(X_i,A_i,Y_i)_{i\in[n]}$ are obtained using \Cref{alg:n}, $\hat\tau_\mathrm{ATE}$ is a consistent estimator of $\tau_\mathrm{ATE}$ and if outcomes $Y_i$ lie in $[0,1]$,\footnote{
    Our results readily extend to outcomes in a bounded space of diameter $B$, via a rescaling of the results by $B$.
    } with probability $1-4e^{-t}$: $    |\hat\tau_\mathrm{ATE}-\tau_\mathrm{ATE}| \leq \sqrt{2t/n_1} + \sqrt{2t/n_0}$.
\end{proposition}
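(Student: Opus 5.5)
The plan is to condition on the assignment vector $\bar A=(A_1,\dots,A_n)$ and reduce the claim to two independent Hoeffding bounds. In \Cref{alg:n}, each triple $(X_i,A_i,Y_i)$ is produced by a separate training run with fresh $X_i\sim\cP_T$, fresh $A_i$ and fresh training randomness. Hence the triples are i.i.d. Moreover, within a triple, $A_i$ is independent of $X_i$ and of the training randomness. Writing $Y_i(a)=s_{\rm MIA}(X_i,\cA(\cD\cup\{X_i\}^{a}))$, with the convention $\{X_i\}^0=\emptyset$, we get $Y_i=Y_i(A_i)$ and $A_i\indep (Y_i(0),Y_i(1))$. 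This is exactly \Cref{hyp:standard_causal_assumptions} with randomized assignment, and it is the content of \Cref{prop:n_run_identification}. Consequently $\esp{Y_i\mid A_i=a}=\esp{Y_i(a)}=\espT{Y_i(a)}$, so that $\tau_\mathrm{ATE}=\esp{Y_i\mid A_i=1}-\esp{Y_i\mid A_i=0}$.

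Next, fix a realization of $\bar A$ with $n_0,n_1\ge 1$. Conditionally on $\bar A$, the following hold by independence across runs and of $A_i$ from $(Y_i(0),Y_i(1))$:
\begin{itemize}
\item the variables $\{Y_i : A_i=1\}$ are $n_1$ i.i.d.\ copies of $Y(1)$, each with mean $\mu_1=\espT{Y_i(1)}$ and values in $[0,1]$;
\item the variables $\{Y_i : A_i=0\}$ are $n_0$ i.i.d.\ copies of $Y(0)$, each with mean $\mu_0=\espT{Y_i(0)}$;
\item the two groups are independent of each other.
\end{itemize}
The one point needing care is this conditioning step. We must make sure that conditioning on the whole vector $\bar A$, not just on $A_i$, does not change the law of $Y_i(a)$. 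This holds because the vector $(A_j)_{j\neq i}$ is independent of run $i$ altogether.

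Now apply Hoeffding's inequality to each group. For a group of size $n_a$ it gives $\proba{|\frac{1}{n_a}\sum_{i:A_i=a}Y_i-\mu_a|>\sqrt{t/(2n_a)}\mid \bar A}\le 2e^{-t}$. This radius is already smaller than the claimed $\sqrt{2t/n_a}$, so the looser constant in the statement is more than enough. A union bound over the two groups, combined with the triangle inequality applied to $\hat\tau_\mathrm{ATE}-\tau_\mathrm{ATE}=(\bar Y_1-\mu_1)-(\bar Y_0-\mu_0)$, gives the following. Conditionally on $\bar A$, with probability at least $1-4e^{-t}$, we have $|\hat\tau_\mathrm{ATE}-\tau_\mathrm{ATE}|\le\sqrt{2t/n_1}+\sqrt{2t/n_0}$. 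Since the bound holds for every $\bar A$ and the radius is expressed in terms of the random $n_a$, it also holds unconditionally.

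For consistency, take $t=2\log n$. The failure probabilities are then summable, since $4e^{-t}=4n^{-2}$. In addition, $n_a/n\to1/2$ almost surely by the strong law of large numbers applied to the Bernoulli$(1/2)$ assignments. The error radius is therefore $O(\sqrt{\log n/n})\to0$. By Borel--Cantelli, $\hat\tau_\mathrm{ATE}\to\tau_\mathrm{ATE}$ almost surely. The event $\min(n_0,n_1)=0$, on which the estimator is undefined, has probability $2^{1-n}$, so it is negligible.
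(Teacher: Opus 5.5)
Your argument is correct and follows the same route as the paper. The paper gets identification from the randomized design (\Cref{prop:n_run_identification}, via \Cref{prop:ATE_identification_app} with constant propensity $1/2$). For the rate it only cites standard difference-in-means properties, and your per-arm Hoeffding bound conditional on $\bar A$ plus a union bound makes that explicit; the paper uses the same conditioning device in its one-run proof. Your radius $\sqrt{t/(2n_a)}$ is sharper than the stated $\sqrt{2t/n_a}$.

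One small scope issue: the statement asserts consistency without the hypothesis $Y_i\in[0,1]$, but your Borel--Cantelli argument uses it. For merely integrable scores, apply the strong law to the i.i.d.\ triples: $\frac1n\sum_i A_iY_i\to\frac12\espT{Y_i(1)}$ and $\frac{n_1}{n}\to\frac12$ almost surely, and likewise for the control arm. This gives consistency in general.
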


\section{Handling Interference in One-Run}
\label{sec:interferences}

We now consider the one-run setting, in which SUTVA in \Cref{hyp:standard_causal_assumptions} is violated due to \emph{interference}, so identification is no longer straightforward.
We remind that we work with generalized counterfactuals: for $\bar a \in \{0,1\}^n$, $Y_i(\bar a)$ is the outcome of unit $i\in[n]$ we would have observed had treatment assignments been $A_j=a_j$ for all $j\in [n]$, thereby explicitly modeling interference between data points. A generalization of \Cref{hyp:standard_causal_assumptions} to settings with interference is provided in \Cref{app:causal_assumptions} and holds by design, allowing us to generalize \Cref{prop:n_run_identification} to the one-run setting.

\begin{proposition}\label{prop:1_run_identification}
    If $(X_i,A_i,Y_i)_{i\in[n]}$ are obtained using \Cref{alg:one}, causal evaluation metrics (\Cref{tab:eval_metrics}, right) are identifiable and equal to classic evaluation metrics (\Cref{tab:eval_metrics}, left).
\end{proposition}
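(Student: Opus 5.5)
The plan is to reduce everything to one distributional identity for each unit $i$. Under \Cref{alg:one}, the marginal potential outcome from \eqref{eq:PO} is $Y_i(a)=Y_i(A_1,\dots,A_{i-1},a,A_{i+1},\dots,A_n)$, and the observed outcome is $Y_i=Y_i(\bar A)=Y_i(A_i)$. We will show that the conditional law of $Y_i$ given $A_i=a$ coincides with the (unconditional) law of $Y_i(a)$ under $X_i\sim\cP_T$. Each causal metric in the right column of \Cref{tab:eval_metrics} is a functional of the marginal laws of $Y_i(1)$ and $Y_i(0)$. The corresponding classical metric is the same functional applied to the laws of $Y_i\mid A_i=1$ and $Y_i\mid A_i=0$. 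The identity therefore gives identification and equality simultaneously, exactly as in \Cref{prop:n_run_identification}.

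\textbf{Step 1: randomization of the marginal potential outcome.} In \Cref{alg:one}, $Y_i(\bar a)=s_{\rm MIA}(X_i,\cA(\cD\cup\{X_j:a_j=1\}))$. Hence $Y_i(a)$ is a measurable function of $\big(X_1,\dots,X_n,(A_j)_{j\neq i},\xi\big)$ and the fixed value $a$, where $\xi$ is the internal randomness of $\cA$. By construction, $A_i\sim\mathrm{Bernoulli}(1/2)$ is drawn independently of all $X_j$, of all $A_j$ with $j\neq i$, and of $\xi$. It follows that $A_i\indep (Y_i(0),Y_i(1))$, and also that $A_i \indep (Y_i(0),Y_i(1),X_i)$. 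This is the interference analogue of the \emph{Randomized} condition, i.e., the assumption in \Cref{app:causal_assumptions} that holds by design. Overlap holds trivially since $\proba{A_i=1\mid X_i}=1/2$.

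\textbf{Step 2: identification computation.} For any measurable $g$ and $a\in\{0,1\}$, consistency and Step 1 give
\[
\esp{g(Y_i)\mid A_i=a}=\esp{g(Y_i(a))\mid A_i=a}=\esp{g(Y_i(a))}.
\]
Since $X_i\sim\cP_T$ marginally, the right-hand side equals $\espT{g(Y_i(a))}$. Taking $g=\mathrm{id}$ yields $\tau_\mathrm{ATE}$ equal to the membership advantage. Taking $g=\one_{\{\cdot>t\}}$ yields $\tau_\mathrm{TPR}(t)=\mathrm{TPR}(t)$ and $\tau_\mathrm{FPR}(t)=\mathrm{FPR}(t)$ for every $t$. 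The causal AUC is the area under the same curve, so it equals the classical AUC. The quantile $q_\alpha$ of $Y_i(0)$ equals the quantile $t_\alpha$ of $Y_i\mid A_i=0$ because the two distribution functions coincide, and hence $\tau_{\mathrm{TPR@FPR}}(\alpha)$ equals the classical TPR at FPR $\alpha$. Each classical quantity is a functional of the observed-data law, so all causal metrics are identifiable.

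\textbf{Main obstacle.} The only delicate point is Step 1, because $Y_i(a)$ still depends on the other units' random assignments and covariates. The argument must therefore not assume SUTVA. Instead, it must show that $A_i$ is independent of the \emph{entire} set of inputs that determine $Y_i(a)$. To make this clean, I would write the randomness explicitly as independent blocks $(X_j)_j$, $(A_j)_j$, $\xi$ and apply the fact that a function of variables independent of $A_i$ is itself independent of $A_i$. A secondary subtlety is the AUC when ties occur: there I would define both AUCs through the same curve or the same pairwise formula, so that the equality of marginal laws transfers directly.
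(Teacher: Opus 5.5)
Your argument is correct, but it takes a more direct route than the paper. The paper does not argue from \Cref{alg:one} itself. It proves one identification formula under the observational-with-interference assumptions (\Cref{prop:ATE_identification_app}): it conditions on $(\bar X,\bar A_{-i})$, invokes conditional generalized unconfoundedness, and reweights by the density ratio $\pi(x)/\proba{A_i=1}$. It then asserts that \Cref{hyp:RCT_interferences} holds by design in the one-run setting and sets $\pi\equiv 1/2$ to recover the difference in means. It writes this out only for the ATE, deferring the other metrics to ``similar arguments.''

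You instead derive $A_i\indep(Y_i(0),Y_i(1),X_i)$ explicitly from the generative structure, using the independent blocks $\bar X$, $\bar A$, $\xi$. This is exactly the verification the paper leaves implicit. Because you work with an arbitrary test function $g$, one equality of laws gives TPR, FPR, the quantiles, TPR at fixed FPR and the area under the ROC curve simultaneously, with no propensity scores. The paper's route buys uniformity instead: the same identity, with non-constant $\pi$, yields the IPW formula of \Cref{prop:0_run_identification}. Your Step 1 has no analogue there, since $A_i$ depends on $X_i$.

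One correction to your closing remark. Equality of the marginal laws of $Y_i(1)$ and $Y_j(0)$ does not transfer to a pairwise formula $\proba{Y_i>Y_j\mid A_i=1,A_j=0}$. Under \Cref{alg:one}, $Y_i$ and $Y_j$ are computed from the same trained model, so that probability depends on their joint law. Handling it would need a pair-level version of your Step 1: the joint potential outcomes under $a_i=1,a_j=0$ must be independent of $(A_i,A_j)$. The result also generally differs from the area under the causal ROC curve (cf.\ \Cref{app:causal_AUC_discussion}). Since \Cref{tab:eval_metrics} defines both AUCs as areas under the respective curves, this does not affect your proof of the statement.
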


We consider the same estimators as in the multi-run setting, such as $\hat\tau_\mathrm{ATE}$ defined in \Cref{eq:estimator_ATE} for the ATE. 
Consistency, however, is no longer guaranteed, since interference induces dependence between samples.
As aptly noted by \citet[Section 3.2]{cinelli2025challenges}, “all no-interference applications are alike; each SUTVA violation violates SUTVA in its own way”, underscoring the absence of a one-size-fits-all solution for handling interference.
In our setting, as illustrated in \Cref{fig:causal_graph}b, interference arises because a single model is trained jointly on all members, inducing a complete interference graph over data points. We thus propose to impose assumptions directly on the training algorithm $\cA$, since the interference structure is intrinsic and unavoidable.
To that end, we draw on algorithmic stability notions from learning theory \cite{bousquet2002stability}, which formalize the limited sensitivity of a model's behavior to any single training point. In particular, error stability (Def.~\ref{def:error_stability}) bounds changes in expected loss, while uniform training stability (Def.~\ref{def:delta_interpolation}) controls the worst-case change in loss at any \emph{training} point.\looseness=-1

\begin{definition}[Error stability \cite{bousquet2002stability}]\label{def:error_stability}
    A learning algorithm $\cA$ is $\alpha$-error stable with respect to a loss $\cL:\Theta\times \cX\rightarrow\mathbb{R}_{\geq 0}$ if for all datasets $\cD=\set{x_1,\ldots,x_n}$, all $i\in[n]$ and $\cD'$ such that $\cD'=\set{x_1',\ldots,x_n'}$ with $x_j=x_j'$ if $j\ne i$, for $X \sim\cP_T$ independent of the rest, we have:
    \begin{equation*}
        |\E_\cA[\E_X[\cL(\cA(\cD),X)]]-\E_\cA[\E_X[\cL(\cA(\cD'),X)]]|\leq \alpha\,.
    \end{equation*}
\end{definition}


\begin{definition}[$\beta$-uniform training stability]\label{def:delta_interpolation}
        A learning algorithm $\cA$ is $\beta$-uniform training stable with respect to a loss $\cL$ if for all datasets $\cD=\set{x_1,\ldots,x_n}$, all $i\in[n]$ and $\cD'$ such that $\cD'=\set{x_1',\ldots,x_n'}$ with $x_j=x_j'$ if $j\ne i$, for any $k\in[n]\setminus\set{i}$, we have a.s.~over the randomness of $\cA$:
    \begin{equation*}
        |\cL(\cA(\cD),x_k)-\cL(\cA(\cD'),x_k)|\leq \beta\,.
    \end{equation*}
\end{definition}
\Cref{def:delta_interpolation} holds for uniformly stable algorithms with $\beta$ equal to the stability parameter. Interpolating models have $\beta=0$, and near-interpolating models (with small training loss) have small $\beta$.\looseness=-1

\begin{theorem}\label{thm:consistency_1_run}
    Assume that $(X_i,A_i,Y_i)_{i\in[n]}$ are obtained using \Cref{alg:one}, with $Y_i=-\cL(\theta,X_i)$ corresponding to the standard loss-based attack. Assume that $\cA$ is $\alpha$-error stable (\Cref{def:error_stability}), $\beta$-uniform training stable (\Cref{def:delta_interpolation}), and that outcomes $Y_i$ are in $[0,1]$ almost surely.
Then, with probability $1-Ce^{-t}$ for some numerical constant $C>0$:
    \begin{equation*}
        \textstyle|\hat \tau_\mathrm{ATE}-\tau_\mathrm{ATE}| =\cO\big( \sqrt{t/n} + \sqrt{n t\alpha^2} + \sqrt{n t\beta^2}\big)\,.
    \end{equation*} 
\end{theorem}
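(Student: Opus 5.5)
Write $\hat\tau_\mathrm{ATE}=\hat\mu_1-\hat\mu_0$ with $\hat\mu_a=\frac1{n_a}\sum_{i:A_i=a}Y_i$. By \Cref{prop:1_run_identification}, $\tau_\mathrm{ATE}=\mu_1-\mu_0$ with $\mu_a=\E[Y_i\mid A_i=a]$, which does not depend on $i$ by exchangeability. The plan is to control each $\hat\mu_a-\mu_a$ separately. We work conditionally on the assignment vector $\bar A$; then $n_1,n_0$ are fixed, and both concentrate around $n/2$ with probability $1-2e^{-t}$ by Hoeffding. It also helps to compare with the conditional means $\mu_a(\bar A)=\E[Y_i\mid \bar A, A_i=a]$. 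These are the same for all $i$ with $A_i=a$ by exchangeability of the $X_j$'s, and they differ from $\mu_a$ only through the dependence on $n_1$. Changing $n_1$ by one amounts to adding or removing one training point, so by error stability the expected loss changes by at most about $\alpha$. Since $n_1$ fluctuates by $\sqrt{nt}$ around $n/2$, this bias term contributes $\cO(\sqrt{nt}\,\alpha)$.

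\textbf{Concentration via McDiarmid.} Given $\bar A$, the quantity $\hat\mu_a$ is a function of the independent variables $X_1,\dots,X_n$ and of the randomness of $\cA$. I would apply McDiarmid's inequality, with bounded differences, in the $X$'s. If we replace a single $X_j$:
\begin{itemize}
\item For $a=1$ (members), the term $Y_j$ itself changes by at most $1$, which contributes $1/n_1$. Each other member $k\neq j$ is a training point, so by $\beta$-uniform training stability its loss changes by at most $\beta$. The total change is therefore $\le 1/n_1+\beta$.
\item For $a=0$ (non-members), when $A_j=0$ only the term $Y_j$ moves. When $A_j=1$, the model changes and the losses on the non-member test points $X_k$ move. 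Uniform training stability does not control these losses, since it only covers training points.
\end{itemize}

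\textbf{The main obstacle: non-members.} This non-member case is where error stability must take over, and it only controls expectations. My approach is a martingale (Doob) decomposition of $\hat\mu_0$ over the filtration revealing $X_1,\dots,X_n$. The increment when revealing a member $X_j$ is $\frac1{n_0}\sum_{k:A_k=0}\big(\E[\cL(\theta,X_k)\mid\cF_j]-\E[\cL(\theta,X_k)\mid\cF_{j-1}]\big)$. Because each non-member $X_k$ is independent of $\theta$, this increment equals the change in the expected population loss of $\theta$ caused by one training point, which error stability bounds by $\alpha$. The increments for revealing a non-member $X_j$ are $\le 1/n_0$. Azuma--Hoeffding then gives $|\hat\mu_0-\E[\hat\mu_0\mid\bar A]|\lesssim\sqrt{t/n}+\sqrt{nt}\,\alpha$ with probability $1-2e^{-t}$. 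The member case follows the same route, giving $\sqrt{t/n}+\sqrt{nt}\,\beta$. The randomness of $\cA$ can be handled by fixing its internal seed, since uniform training stability holds almost surely; if needed, error stability is applied in expectation over $\cA$ within the martingale.

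\textbf{Assembling the bound.} A union bound combines the concentration of $n_1,n_0$, the two martingale bounds, and the bias term $\mu_a(\bar A)-\mu_a$. This yields $\cO(\sqrt{t/n}+\sqrt{nt\alpha^2}+\sqrt{nt\beta^2})$ with probability $1-Ce^{-t}$. I expect the martingale step for the non-member average to be the main technical difficulty. It requires the conditional expectations to be controlled by the averaged error-stability notion rather than by an almost-sure bound, and it requires careful treatment of the algorithm's internal randomness.
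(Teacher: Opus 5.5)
Your architecture is the same as the paper's. You condition on $\bar A$. You control the member average by McDiarmid with bounded differences $1/n_1+\beta$, which is identical to the paper's step. You control the non-member average through error stability, with sensitivity $\alpha$ per member. You absorb the dependence of the conditional means on $\bar A$ by a Lipschitz argument; the paper runs McDiarmid over $\bar A_{-i}$, and your reduction to $n_1$ via exchangeability plus Hoeffding is equivalent.

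However, the step you single out as the crux fails as you set it up. Suppose the filtration reveals $X_1,\dots,X_n$ in index order. When a member $X_j$ is revealed, every non-member $X_k$ with $k<j$ is already in $\mathcal{F}_{j-1}$. For those terms, $\E[\cL(\theta,X_k)\mid\mathcal{F}_j]-\E[\cL(\theta,X_k)\mid\mathcal{F}_{j-1}]$ is the change of the expected loss at a \emph{fixed} test point when one training point is swapped. Error stability only controls the loss averaged over a fresh $X\sim\cP_T$, and uniform training stability only controls training points. So neither hypothesis bounds this increment by $\alpha$: the independence between $X_k$ and $\theta$ that you invoke is lost once $X_k$ has been conditioned on.

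The repair is to reveal all members before any non-member. Member increments are then genuine differences of expected population losses, hence at most $\alpha$. Non-member increments are at most $1/n_0$, and Azuma yields $\sqrt{n_1t}\,\alpha+\sqrt{t/n_0}$. This two-stage structure is exactly the paper's argument: Hoeffding over the non-members conditionally on $(\bar A,\overline X_{\bar a})$, then McDiarmid over the member data with bounded differences $\alpha$.

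Two further corrections. First, in your bias step, the member mean $\mu_1(\bar A)$ is an expected \emph{training} loss, about which error stability says nothing. Its change when one training point is added or removed is bounded by $\beta$ via uniform training stability, exactly as in the last step of the paper's $\beta$-proof.

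Second, fixing the seed of $\cA$ does not handle the internal randomness. Error stability holds only on average over $\cA$, so with the seed fixed your non-member increments are no longer bounded by $\alpha$. With $\cA$ integrated out, the martingale terminates at $\E[\hat\mu_0\mid X_1,\dots,X_n,\bar A]$ rather than at $\hat\mu_0$, and no hypothesis controls the gap between them. For instance, take an interpolating algorithm that chooses its off-training-set predictor by a fair coin between two predictors with different population losses. For atomless $\cP_T$ it has $\alpha=\beta=0$, yet $\hat\tau_\mathrm{ATE}$ does not concentrate around $\tau_\mathrm{ATE}$.

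The paper's proof has the same blind spot: its Hoeffding step conditions on $\cA$ but centers at the $\cA$-averaged mean. So both arguments, yours once repaired and the paper's, really prove the statement only for deterministic $\cA$.
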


In the perfect interpolation regime ($\beta=0$), the first term $\sqrt{t/n}$ is expected and tends to 0.
For consistency, we thus need the second term $\sqrt{nt\alpha^2}$ to also be negligible, which is the case as long as the stability parameter $\alpha=o(1/\sqrt{n})$.
For regularized convex or strongly convex problems, this will always hold since then $\alpha$ is of order $1/N$ where $N$ is the total number of training points in $\dtrain$, leading to $\sqrt{nt\alpha^2} =\cO( 1/\sqrt{N})$ where $N\geq n$. In most applications, $n$ will be negligible in front of $N$, so the
second term becomes negligible in front of the first.
For nonconvex problems, error stability no longer holds in general, but for SGD training with smooth, Lipschitz losses and early-stopping, error stability holds with $\alpha$ of order $N^{-1+o(1)}$ \citep{hardt2016train}.
The term $\alpha\sqrt{n}$ will thus be of order $1/\sqrt{N}$ if $n=\cO(N^{1-o(1)})$.
In the general case, $\beta$ is also required to be small enough. This is the case for large overparametrized and near-interpolating networks, or for uniformly stable algorithms. 
We also provide a negative result for the underparametrized regime (i.e., $\frac{D}{n}\to0$) in \Cref{app:underparam}.

\begin{remark}[Extension to other MIAs]
Consistency results in this section are specific to loss-based MIAs, as they rely on standard learning-theoretic assumptions (\Cref{def:error_stability,def:delta_interpolation}). \Cref{thm:consistency_1_run} extends to other MIAs by assuming stability of the confidence score function $s_{\rm MIA}$.
\end{remark}

\begin{tcolorbox}[colback=gray!5!white,colframe=gray!85!white,top=2pt,bottom=2pt,title= \textbf{Insight \# 2}]
In the one-run regime, consistency of MIA metrics can be retained under stability assumptions on the training algorithm, which serves as the mediator of interference between data points.
\end{tcolorbox}

\section{Adjusting for Confounders in Zero-Run}
\label{sec:0run}

In the zero-run regime, the presence of confounders introduces an additional challenge for identifying and estimating causal evaluation metrics, due to the distributional shift between member data ($\cD_T$) and non-member data ($\cD_0$).
Indeed, the problem departs from a RCT and becomes an observational study, since
$\cP_{X_i|A_i=1}\ne\cP_{X_i|A_i=0}$.
As is standard in causal inference, observational studies cannot be conducted without assumptions: the identifiability of causal estimands (such as $\tau_\mathrm{ATE}$) depends on them.
We thus assume that any member would have had a non-zero probability of being non-member.

\begin{assumption}[Domain overlap]\label{hyp:distribution_shift}
$\cD_T$ and $\cD_0$ consist of independent samples from distributions $\cP_T$ and $\cP_0$, respectively, with $\frac{\dd \cP_T}{\dd \cP_0}(x) < \infty$ for all $x \in \mathrm{Supp}(\cP_T)$.
\end{assumption}

Under this assumption, \textit{overlap} holds (\Cref{hyp:standard_causal_assumptions}) together with identifiability.
Note that under \Cref{hyp:distribution_shift} we have $\cP_T=\cP_{X_i|A_i=1}$ and $\cP_0=\cP_{X_i|A_i=0}$.


\begin{proposition}\label{prop:0_run_identification}
    Under \Cref{hyp:distribution_shift}, if $(X_i,A_i,Y_i)_{i\in[n]}$ are obtained according to \Cref{alg:zero}, the causal evaluation metrics are identifiable.
    Furthermore,
$        \tau_\mathrm{ATE} = \frac{\esp{A_iY_i}}{\proba{A_i=1}} - \E\big[ \frac{\pi(X_i)(1-A_i)}{1-\pi(X_i)} Y_i\big]\proba{A_i=1}^{-1}$,
where $        \pi(x) = \proba{A_i=1|X_i=x}\,,\, \forall x\in\cX
$.
\end{proposition}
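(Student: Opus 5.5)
We view the zero-run evidence as an observational study. Pool $\cD_T\cup\cD_0$ into one sample $(X_i,A_i)$, with $A_i=1$ for the $\cD_T$ units. Under \Cref{hyp:distribution_shift}, $\cP_{X|A=1}=\cP_T$ and $\cP_{X|A=0}=\cP_0$, with $p:=\proba{A_i=1}$. The potential outcomes are defined through the generalized counterfactuals $Y_i(\bar a)$ and \eqref{eq:PO}. The identification argument has three ingredients: consistency, $Y_i=Y_i(A_i)$, which holds by the definition of the marginal potential outcome; unconfoundedness, $A_i\indep (Y_i(0),Y_i(1))\mid X_i$; and overlap, which follows from \Cref{hyp:distribution_shift}.

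\textbf{Step 1: unconfoundedness.} Membership is determined entirely by which dataset $X_i$ came from. Given $X_i$, the remaining randomness in $Y_i(a)$ comes only from the training algorithm and from the other units' assignments. We take these as independent of $A_i$ given $X_i$; this is the interference-generalized assumption of \Cref{app:causal_assumptions}. It is the main modelling step, and it is where we expect the most care to be needed. The fixed training set and the fact that the other units' treatments are not randomized make it a genuine assumption rather than a design property, so it must be read off from \Cref{app:causal_assumptions}, as was done for the one-run case in \Cref{prop:1_run_identification}.

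\textbf{Step 2: overlap via Bayes.} By Bayes' rule, $\pi(x)=\frac{p\,\dd\cP_T(x)}{p\,\dd\cP_T(x)+(1-p)\,\dd\cP_0(x)}$, so
\[
\frac{\pi(x)}{1-\pi(x)}=\frac{p}{1-p}\frac{\dd\cP_T}{\dd\cP_0}(x).
\]
This is finite on $\mathrm{Supp}(\cP_T)$ by \Cref{hyp:distribution_shift}. Hence $\pi<1$ on the target support, which is exactly the overlap needed for $Y_i(0)$. We do not need overlap for $Y_i(1)$, because the target population is the treated one.

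\textbf{Step 3: identify the treated term.} Since $\cP_T=\cP_{X|A=1}$,
\[
\espT{Y_i(1)}=\E[Y_i(1)\mid A_i=1]=\frac{\esp{A_iY_i}}{p}.
\]

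\textbf{Step 4: identify the control term by reweighting.} We write
\[
\espT{Y_i(0)}=\E_{X\sim\cP_T}\big[\E[Y_i(0)\mid X]\big].
\]
By unconfoundedness, $\E[Y_i(0)\mid X]=\E[Y_i\mid X,A_i=0]$. We then change measure from $\cP_T$ to $\cP_0$ using the density ratio $\frac{\dd\cP_T}{\dd\cP_0}=\frac{1-p}{p}\frac{\pi}{1-\pi}$ from Step 2. This gives
\[
\espT{Y_i(0)}=\frac{1-p}{p}\,\E\Big[\frac{\pi(X_i)}{1-\pi(X_i)}Y_i\,\Big|\,A_i=0\Big].
\]
Finally, we use $\E[\,\cdot\mid A_i=0](1-p)=\E[(1-A_i)\,\cdot\,]$ to rewrite this as
\[
\espT{Y_i(0)}=\E\Big[\frac{\pi(X_i)(1-A_i)}{1-\pi(X_i)}Y_i\Big]\,p^{-1}.
\]
Subtracting from Step 3 gives the stated formula for $\tau_\mathrm{ATE}$.

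\textbf{Other metrics.} For $\tau_\mathrm{TPR}$, $\tau_\mathrm{FPR}$, the causal AUC and $\tau_{\mathrm{TPR@FPR}}$, we replace $Y_i$ by $\one_{Y_i>t}$ in Steps 3 and 4. This identifies every point of the causal ROC curve, and hence its quantiles and area.
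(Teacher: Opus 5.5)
Your proof is correct and follows essentially the paper's argument. The treated term comes from $\cP_T=\cP_{X_i|A_i=1}$, and the control term from unconfoundedness plus a change of measure. You route that change of measure through $\cP_0$ via the odds identity $\frac{\pi}{1-\pi}=\frac{p}{1-p}\frac{\dd\cP_T}{\dd\cP_0}$, while the paper routes it through the pooled marginal via $\frac{\dd\cP_T}{\dd\cP_{X_i}}=\pi/p$; the two are equivalent, and yours has the small merit of showing explicitly how \Cref{hyp:distribution_shift} yields overlap. Like the paper, you take (generalized) unconfoundedness as holding in the zero-run setting rather than deriving it from the design, which is the same level of rigor.
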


\Cref{prop:0_run_identification} shows that under distribution shift, the left and right columns of \Cref{tab:eval_metrics} no longer coincide. Consequently, estimators of standard evaluation metrics (left column) yield biased assessments of MIA performance. In particular, $\hat \tau_\mathrm{ATE}$ in \eqref{eq:estimator_ATE} estimates $\esp{Y_i\mid A_i=1}-\esp{Y_i\mid A_i=0}$, which differs from the \emph{causal} membership advantage $\tau_\mathrm{ATE}$ because $\cP_{Y_i\mid A_i=0}\neq \cP_{Y_i(0)\mid A_i=1}$, due to the distribution shift $\cP_0=\cP_{X_i|A_i=0}\ne\cP_{X_i|A_i=1}=\cP_T$.
To correct this confounding bias and recover $\tau_\mathrm{ATE}$, \Cref{prop:n_run_identification} naturally suggests an \emph{inverse probability weighting} (IPW) estimator:
\begin{equation}\label{eq:IPW_estimators_ATE}
\begin{aligned}
\textstyle
\hat \tau_\mathrm{ATE}^\mathrm{(IPW)} = \frac{1}{n_1}\sum_{A_i=1} Y_i- \frac1{n_1} \sum_{A_i=0}\frac{\hat\pi(X_i)}{1-\hat \pi(X_i)} Y_i \,,
\end{aligned}
\end{equation}
where $\hat\pi$ is an estimate of the \emph{propensity score} $\pi$ defined in \Cref{prop:0_run_identification}.
The estimator \eqref{eq:IPW_estimators_ATE} belongs to the family of IPW debiasing methods widely used in causal inference with observational data \citep{rosenbaum1983central}.\footnote{Note that $\cP_T=\cP_{X_i|A_i=1}$, so that the target ATE on $\cP_T$ is in fact the ATT (ATE on the treated). This explains why our IPW estimator in \Cref{eq:IPW_estimators_ATE} differs from the classical IPW estimator.}
The propensity score estimator $\hat\pi$ can be obtained by training any probabilistic binary classifier $\cX \to [0,1]$ to distinguish members from non-members, yielding an estimate of the probability that any $x \in \cX$ is a member.
Member and non-member data can be split into two independent parts, one used in \Cref{alg:zero} and one used to learn the classifier $\hat\pi$. Cross-fitting, which alternates the roles of these splits and averages the results, can be used to leverage all the data efficiently.
\Cref{sec:experiments} shows how $\hat \pi$ is estimated in practice: the overlap assumption is made on the underlying data representations (e.g.,  text or image embeddings), and propensity scores are functions of these representations.
 
A well-trained propensity score model captures the distributional differences between the conditional feature distributions $X_i \mid A_i = 1$ and $X_i \mid A_i = 0$, assigning higher scores to regions of the feature space that are more characteristic of members than non-members.
Consistency of $\hat \pi$ is measured as $\Delta_{\hat \pi} = \E_{\cP_0}\left[\left|\frac{\pi(X_i)}{1-\pi(X_i)}-\frac{\hat\pi(X_i)}{1-\hat\pi(X_i)}\right|\right]$:
$\hat \pi$ is $\ell^1$-consistent if $\Delta_{\hat \pi}\to0$.
Next assumption is necessary to ensure that each data point $x\in\cD_T$ has a positive (and lower-bounded, as opposed to \Cref{hyp:distribution_shift}) probability of being observed in the non-member set.
\Cref{thm:consistency_0_run} shows that if propensity scores are learned well-enough, the ATE for the negative loss-based attack is well estimated by $\hat \tau_\mathrm{ATE}^\mathrm{(IPW)}$.

\begin{assumption}\label{hyp:overlap}
    There exists $\eta>0$ such that $\pi(x) \leq 1-\eta$ and $\hat\pi(x) \leq 1-\eta$ for all $x\in \supp(\cP_T)$.
\end{assumption}

\begin{theorem}\label{thm:consistency_0_run}
    Assume that $(X_i,A_i,Y_i)_{i\in[n]}$ are obtained using $\Cref{alg:zero}$ with $Y_i=-\cL(\theta,X_i)$, that \Cref{hyp:overlap,hyp:distribution_shift} hold, and that $\cA$ is $\alpha$-error stable and $\beta$-uniform training stable. Assume outcomes $Y_i$ are in $[0,1]$ almost surely.
    Furthermore, assume that $\hat\pi$ is independent from $(X_i,A_i,Y_i)_{i\in[n]}$.
Then, with probability $1-Ce^{-t}$:
    \begin{equation*}
        |\hat \tau_\mathrm{ATE}^\mathrm{(IPW)}-\tau_\mathrm{ATE}| = \textstyle\cO\big( \Delta_{\hat\pi} + \frac{1}{\eta}\sqrt{t/n} + \sqrt{n t\alpha^2}+ \sqrt{n t\beta^2}\big)\,.
    \end{equation*} 
    \end{theorem}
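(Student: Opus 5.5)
The plan is to decompose the error of $\hat \tau_\mathrm{ATE}^\mathrm{(IPW)}$ into three pieces, mirroring the proof of \Cref{thm:consistency_1_run}: a propensity-estimation error, a sampling (concentration) error, and an interference error. Write $w(x)=\pi(x)/(1-\pi(x))$ and $\hat w(x)=\hat\pi(x)/(1-\hat\pi(x))$. By \Cref{hyp:overlap}, $0\le w,\hat w\le 1/\eta$ on $\supp(\cP_T)$. We then introduce the oracle estimator $\tilde\tau=\frac1{n_1}\sum_{A_i=1}Y_i-\frac1{n_1}\sum_{A_i=0}w(X_i)Y_i$ and bound $|\hat\tau^\mathrm{(IPW)}_\mathrm{ATE}-\tilde\tau|$ and $|\tilde\tau-\tau_\mathrm{ATE}|$ separately.

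\textbf{Step 1 (propensity error).} Since $Y_i\in[0,1]$,
\[
|\hat\tau^\mathrm{(IPW)}_\mathrm{ATE}-\tilde\tau|\le \frac{n_0}{n_1}\cdot\frac1{n_0}\sum_{A_i=0}|w(X_i)-\hat w(X_i)|.
\]
Because $\hat\pi$ is independent of the data, conditionally on $\hat\pi$ the summands are i.i.d.\ under $\cP_0$ with mean $\Delta_{\hat\pi}$. Note that $w$ need not be bounded off $\supp(\cP_T)$, so boundedness has to come from the support condition: the density ratio $\dd\cP_T/\dd\cP_0$ vanishes outside $\supp(\cP_T)$, where $w=0$, and I would assume or argue that $\hat w$ is bounded there as well. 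Hoeffding then gives $\Delta_{\hat\pi}+\cO(\eta^{-1}\sqrt{t/n_0})$. We treat $n_1/n$ and $n_0/n$ as constants, since the sizes of $\cD_T$ and $\cD_0$ are fixed proportions of $n$.

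\textbf{Step 2 (oracle estimator, no interference).} Condition on the evaluation points $X_i$ and replace $Y_i$ by the marginal quantity $m_a(x)=\E[Y(a)\mid X=x]$. The key identity is $\pi(x)/(1-\pi(x))=\frac{n_1}{n_0}\frac{\dd\cP_T}{\dd\cP_0}(x)$, which is the reweighting behind \Cref{prop:0_run_identification}. With it, $\frac1{n_1}\sum_{A_i=0}w(X_i)m_0(X_i)$ concentrates around $\E_{\cP_T}[m_0(X)]$ at rate $\eta^{-1}\sqrt{t/n}$, by Hoeffding with summands bounded by $1/\eta$. The treated term concentrates around $\E_{\cP_T}[m_1]$ at rate $\sqrt{t/n}$.

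\textbf{Step 3 (interference, the main obstacle).} The difficulty is that all $Y_i$ are computed from a single $\theta=\cA(\dtrain)$, so they are dependent, and the potential outcomes $Y_i(0)$ for members are counterfactual. I would reuse the argument behind \Cref{thm:consistency_1_run} essentially verbatim, with weights attached. Consider $f(X_1,\dots,X_n)=\sum_i c_i Y_i$ with $|c_i|\le 1/(\eta n_1)$. Replacing one point changes the model, and the effect on the other terms is controlled as follows:
\begin{itemize}
\item for evaluation points in $\dtrain$ (members), by $\beta$-uniform training stability, giving per-term changes of order $\beta$;
\item for averaged loss over non-members, by $\alpha$-error stability.
\end{itemize}
A McDiarmid or Efron--Stein type bound on the $\beta$-part gives a deviation of order $\sqrt{n t\beta^2}$. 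The $\alpha$-part is handled by comparing, for each $i$, the model with and without $X_i$ through a leave-one-out chain, giving $\sqrt{nt\alpha^2}$.

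The same decomposition links $Y_i=Y_i(\bar A)$ to the marginal $Y_i(a)$ defining $\tau_\mathrm{ATE}$. The only change relative to the one-run case is the reweighting of the control units. I expect this step to need care, since non-member losses are not covered by training stability and must be handled in expectation via error stability. Collecting Steps 1–3 yields the stated bound with probability $1-Ce^{-t}$.
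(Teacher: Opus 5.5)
Your Step 1 is the paper's Lemma~\ref{lem:err_pi_tau}. Your caveat that \Cref{hyp:overlap} only bounds $\hat w$ on $\supp(\cP_T)$ is fair; the paper simply takes the summands to lie in $[0,C/\eta]$. Your density-ratio identity is the one the paper uses. Your $\beta$-part, with bounded differences of order $\frac1{n_1}+\beta$ for the member average when one member is replaced, is exactly the paper's treatment of the treated term.

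The gap is in the non-member half of Step 3, which is the crux. As written, you apply a bounded-differences inequality to $f=\sum_i c_iY_i$. But replacing one member changes $\theta$, and therefore changes every non-member term $c_iY_i=-c_i\cL(\theta,X_i)$ with $A_i=0$. Neither assumption controls these pointwise changes. $\beta$-uniform training stability only concerns training points. $\alpha$-error stability only bounds $|\E_\cA\E_{X\sim\cP_T}[\cL(\cA(\cD),X)]-\E_\cA\E_{X\sim\cP_T}[\cL(\cA(\cD'),X)]|$, with both expectations inside the absolute value. It therefore says nothing about losses at specific points, nor about second moments, so Efron--Stein does not rescue it either. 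Saying these losses ``must be handled in expectation'' is right. However, the proposal never supplies the step that replaces the empirical non-member sum by an expectation, and a leave-one-out chain over models does not provide it.

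The missing idea is a two-stage conditioning that uses the independence of $\cD_0$ from $\dtrain$ (\Cref{hyp:distribution_shift}).

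\emph{First stage.} Condition on the trained model $\theta$, i.e., on the members and the internal randomness of $\cA$. The summands $w(X_i)Y_i$ with $A_i=0$ are then i.i.d.\ with values in $[0,1/\eta]$; note $w=0$ off $\supp(\cP_T)$. Hoeffding then shows that $\frac1{n_1}\sum_{A_i=0}w(X_i)Y_i$ is within $\cO(\eta^{-1}\sqrt{t/n})$ of $\frac{n_0}{n_1}\E_{X\sim\cP_0}[-w(X)\cL(\theta,X)]$. By your identity, this equals $\E_{X\sim\cP_T}[-\cL(\theta,X)]$.

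\emph{Second stage.} This conditional mean, once averaged over the randomness of $\cA$ as in the paper, is a function of the $n_1$ members, and it is the object error stability applies to. Replacing one member moves it by at most $\alpha$, and McDiarmid over the members gives the $\sqrt{nt\alpha^2}$ term. This is what the paper does via $\mu^{(0)}_{\bar a}$.

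The paper then adds a McDiarmid over $\bar A_{-i}$ with differences $\alpha$, inherited from the one-run proof. That step is the nearest analogue of your leave-one-out chain, but it is not what controls the non-member fluctuations. With the conditional Hoeffding step in place, your separate Step 2 for the control arm is subsumed.
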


\Cref{thm:consistency_0_run} has the same overall structure as the one-run result (\Cref{thm:consistency_1_run}), up to two additional terms: the additive bias $\Delta_{\hat\pi}$ of the propensity score estimator, and the factor $1/\eta$. The presence of $1/\eta$ is expected: when $\eta$ is close to zero, inverse-propensity weighting (IPW) estimators are known to become unstable.
Alternative causal inference estimators can be used to mitigate this instability. 
Letting $\hat\mu_0$ be an outcome regression model that estimates conditional MIA response $\mu_0:x\in\cX\mapsto \esp{Y_i(0)|X_i=x}$, the \emph{G-Formula} \citep{wang2017g} and \emph{Augmented Inverse Probability Weighting} (AIPW) estimators write as \citep{shu2018improved}:
\begin{equation}\label{eq:G_formula_AIPW_ATE} 
\textstyle
    \hat\tau_\mathrm{ATE}^\mathrm{(G)} = \frac{1}{n_1}\sum_{A_i=1}Y_i-\hat\mu_0(X_i)\,,\qquad     \hat\tau_\mathrm{ATE}^\mathrm{(AIPW)} \!\!=\!\hat\tau_\mathrm{ATE}^\mathrm{(G)}\!-\!\frac{1}{n_1}\sum_{i=1}^n\!\frac{(1-A_i)\hat\pi(X_i)}{1-\hat\pi(X_i)}\!\left( Y_i\!-\!\hat\mu_0(X_i) \right).
\end{equation}
The outcome regression model $\hat\mu_0$ is typically learned by fitting a regression of $Y_i$ on $X_i$ using non-member data points (i.e., those with $A_i = 0$). 
The AIPW estimator is \emph{doubly robust}, meaning it is consistent if either the propensity score model or the outcome regression is correctly specified.
Although they are more robust to extreme propensity scores, G-Formula and AIPW still require \Cref{hyp:distribution_shift,hyp:overlap} to hold.
\Cref{app:g-form_AIPW} generalizes these two estimators to other causal metrics. 
\begin{tcolorbox}[colback=gray!5!white,colframe=gray!85!white,top=2pt,bottom=2pt,title= \textbf{Insight \# 3}]
Propensity-score based correction enables unbiased evaluation of MIAs in the zero-run regime via simple binary classifiers. This yields a practical, scalable way to correct distribution-shift bias without requiring model retraining or randomized data inclusion.
\end{tcolorbox}
\looseness=-1

\section{Experiments}
\label{sec:experiments}

We illustrate how our causal framework debiases MIA evaluation across four settings. In \Cref{fig:ROC_curves_merged,fig:biden_trump}, Multi-Run, One-Run, and Zero-Run correspond to \Cref{alg:n,alg:one,alg:zero}, respectively. \textit{Raw} denotes uncorrected estimates, while \textit{Corrected} applies propensity-score adjustment in the Zero-Run regime.
Finally, \textit{Zero-Run (IID)} serves as an idealized baseline used for comparison and sanity checking. In this setting, non-members are obtained as a randomized test split from the same dataset as the members, so no correction is needed.
Additional experimental details, results and visualizations are provided in \Cref{app:exp_analysis}.

\begin{figure*}
    \includegraphics[width=\linewidth]{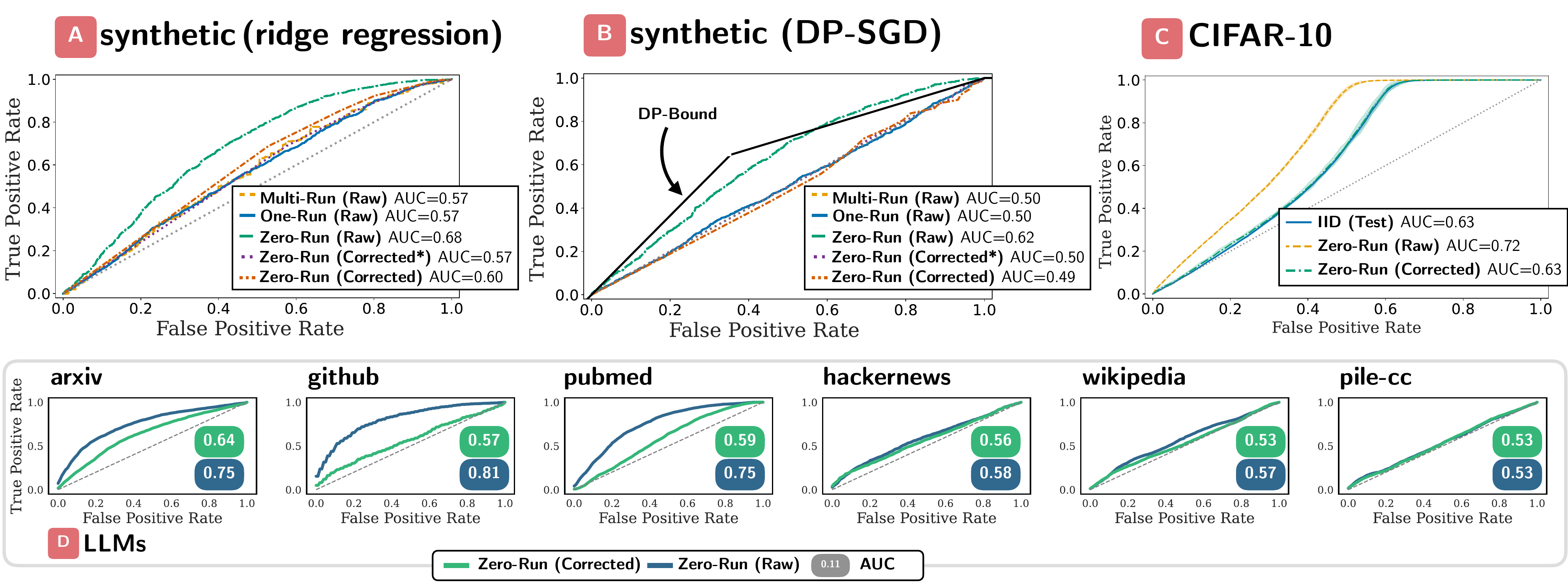}
    \caption{ROC curves and AUCs for linear regression on synthetic data (\textbf{A} and \textbf{B}), Resnet-16 on CIFAR-10 data (\textbf{C}), and \texttt{Pythia-12b} on six different text corpora (\textbf{D}).}
    \label{fig:ROC_curves_merged}
    \vspace{-15pt}
\end{figure*}

%

\textbf{Synthetic data.}
We consider linear regression on Gaussian samples, using overparameterized ridge regression and underparameterized DP-SGD, both satisfying error stability and uniform training stability.
We evaluate a loss-based MIA under three regimes: \emph{multi-run}, \emph{one-run}, and \emph{zero-run}, with non-members drawn from a shifted Gaussian. 
We consider both oracle propensity scores (denoted as \textit{Corrected$^*$}), and estimated ones with logistic regression.
ROC curves and AUCs are reported in \Cref{fig:ROC_curves_merged} (A-B). Multi-run and one-run evaluations nearly coincide as interference is negligible, whereas naive zero-run overestimates MIA performance under distribution shift.
Propensity-score correction restores agreement with non-shifted baselines.
For DP-SGD, the naive zero-run evaluation violates the ROC upper bound implied by $(\eps,\delta)$-DP \citep{Dong22GDP}, while the corrected versions respects it, highlighting the importance of debiasing.\looseness=-1

\textbf{ResNet on CIFAR-10.}
We perform zero-run evaluation on CIFAR-10 \cite{krizhevsky2009learning} using a ResNet-16. Members are sampled from a mixture of clean and noisy images (90\%/10\%) and non-members from the reverse (10\%/90\%), inducing strong distribution shift while satisfying overlap. Propensity scores are estimated via a fine-tuned ImageNet-pretrained ResNet classifier. \Cref{fig:ROC_curves_merged} (C) reports ROC curves and AUCs for a loss-based MIA.  
Naive zero-run evaluation inflates attack performance (AUC $\approx 0.72$) due to distribution shift, misleadingly suggesting high memorization. Propensity-score correction recovers the true causal signal (AUC $\approx 0.63$), closely matching an IID baseline where non-members are drawn from the same mixture as members (AUC $\approx 0.65$), confirming effective debiasing.\looseness=-1

\textbf{Pretrained LLM.}
We perform a zero-run evaluation on \texttt{Pythia-12b} \citep{biderman2023pythia}, an LLM for which membership information is available for a subset of data points. We consider $6$ datasets where MIA evaluation biases has been recently highlighted \citep{meeus2024sok} due to distribution shift induced by deduplication \citep{duan2024membership}. For each corpus, we split data into train/test sets stratified by membership, vectorize training data using a \texttt{BagOfWords} encoder \citep{meeus2024sok}, and fit a \texttt{RandomForest} classifier to estimate propensity scores. We consider the standard MIA based on perplexity.
Results are shown in \Cref{fig:ROC_curves_merged} (D).
\begin{wrapfigure}[15]{r}{0.4\columnwidth}
\vspace{-5pt}
    \centering
\includegraphics[width=.9\linewidth]{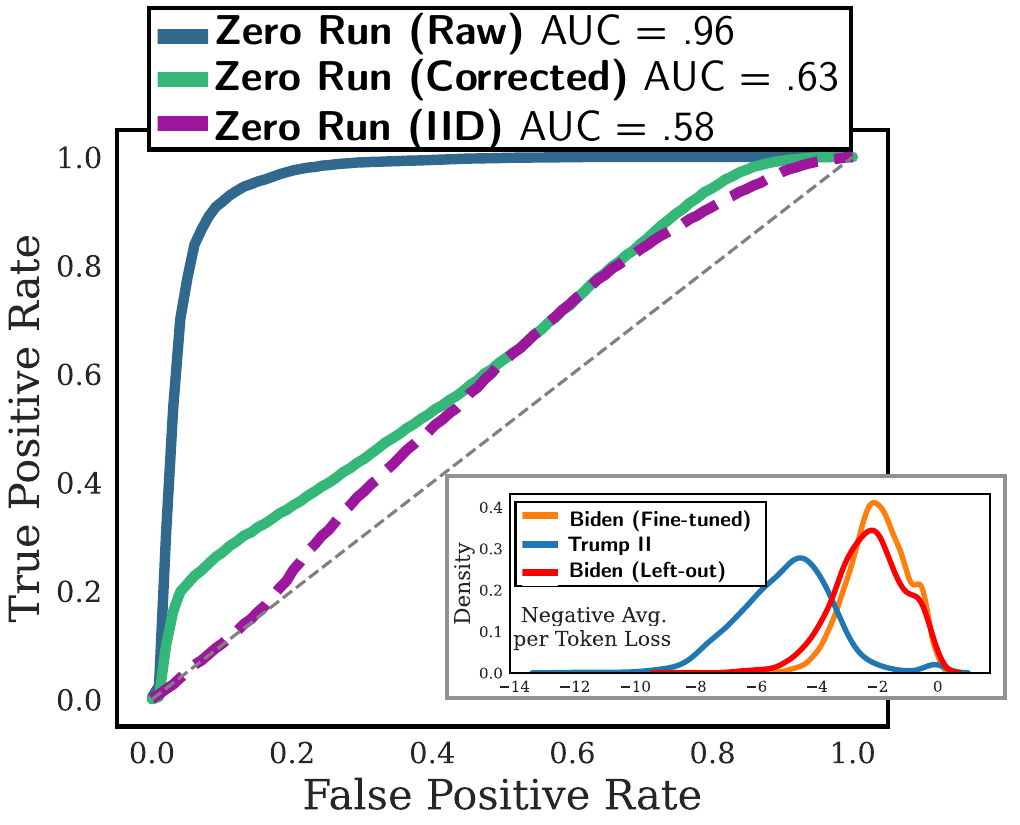}    
   \caption{Fine-tuned \texttt{Pythia-70m} model: raw, corrected and pseudo-ground truth (IID) ROC curves. 
} 
\label{fig:biden_trump}
\end{wrapfigure}
When distribution shift is large (e.g., \texttt{arxiv}, \texttt{github}, \texttt{pubmed\_central}), members become easily distinguishable from non-members and naive evaluation substantially inflates ROC curves. 
In contrast, causal correction largely removes the signal, indicating little true memorization, consistent with prior work on MIAs for pretrained LLMs \cite{duan2024membership}.
For \texttt{hackernews}, \texttt{wikipedia}, and \texttt{pile\_cc}, shifts are weaker and the correction is correspondingly modest.\looseness=-1

\textbf{Fine-tuned LLM.} To construct an idealized IID reference setting for MIA evaluation on an LLM, we fine-tune \texttt{Pythia-70m} on a scraped dataset of White House press statement titles under the Biden (2021-2025) and Trump II (from 2025) administrations, denoted by $B$ and $T$ respectively, between which strong linguistic distribution shifts have been documented \citep{korner2022linguistic}).\footnote{We manually verified that none of these texts appear in the pretraining data of \texttt{Pythia-70m}.} We further split $B$ into two sets $(B_1,B_2)$ and fine-tune the model on $B_1$ using \texttt{LoRA} \citep{hu2022lora}. This controlled construction enables an IID evaluation baseline: since $B_2$ is drawn from the same distribution as $B_1$, MIA performance on $(B_1, B_2)$ serves as a pseudo-ground truth reference, in contrast to $(B_1, T)$ which exhibits distribution shift. We evaluate a loss-based MIA using the negative average per-token loss on both $(B_1, B_2)$ and $(B_1, T)$, with and without propensity-score correction (zero-run). Results are shown in \Cref{fig:biden_trump}.
We find that naive MIA on $(B_1, T)$ is severely inflated by distribution shift (AUC $> 0.95$), while propensity-score correction closely matches the IID reference $(B_1, B_2)$ (AUC $\approx 0.6$), supporting the effectiveness of our debiased estimator.

\begin{tcolorbox}[colback=gray!5!white,colframe=gray!85!white,top=2pt,bottom=2pt,title= \textbf{Insight \# 4}]
   Our approach offers a practical solution to a key problem in MIA for LLMs: correcting evaluation biases induced by distribution shift, which systematically inflate perceived memorization \citep{meeus2024sok,duan2024membership}. Our framework yields faithful estimates of true memorization in language models.\looseness=-1
\end{tcolorbox}

\section{Related Work}

An extended discussion of related work is provided in Appendix~\ref{app:related_works}.

\textbf{Limits of the one-run regime.} Recent work has studied the fundamental limits of the one-run regime for auditing differential privacy (DP) guarantees \citep{Keinan2025,Xiang2025}. Their objectives and conceptual frameworks differ from ours: they derive DP lower bounds, whereas we aim to obtain unbiased MIA evaluation.\looseness=-1

\textbf{Formalizing and addressing MIA bias.}
MIAs are traditionally framed as randomized privacy games \citep{shokri_enhanced,privacy_games}. However, unlike our causal framework, these games fail to capture the interference and distribution shifts characteristic of the one-run and zero-run regimes \citep{meeus2024sok}. Focusing on distribution shift in MIA evaluation for LLMs, prior work \citep{meeus2024sok,eichler_nob-mias_2025} suggest potential solutions, but these require intervention prior or during training (e.g., randomized train/test splits or data injection) or rely on heuristics. \citet{kazmi2024panoramia} propose using synthetic data points, but this approach is costly, data-hungry, and difficult to ground theoretically. In constract, we provide principled, causally valid estimators that rely only on simple binary classifiers or regressors.

\textbf{Causality and MIAs.} We mention two prior works at the intersection of MIAs and causality, though orthogonal to our focus. \citet{tople2020alleviating}
show that causal learning can effectively reduce MIA success---a topic orthogonal to our work.
\citet{10.1145/3548606.3560694} use causal reasoning to investigate whether MIA performance arises solely from imperfect generalization, as suggested by prior work \citep{yeom2018privacy,sablayrolles2019white}.

\textbf{Causal inference under interference.} Contrarily to previous work which has focused on known and sparse interference patterns \citep{sobel2006randomized,hudgens2008toward,manski2013identification,papadogeorgou2019causal,bhattacharya2020causal,clark2021approach,li2022random}, interference in our setting is dense, unobserved, and arises as a random function of the covariates $X_i$ through the training algorithm $\mathcal{A}$, which acts as a mediator (Figure~\ref{fig:causal_graph}). Consequently, we
impose assumptions on the mediator itself, leveraging learning-theoretic bounds on the influence of individual training points on the model's behavior. Our estimands are thus not conditioned on a interference graph.
Together, our mediator-induced interference model, the unconditional estimands, and the mild assumptions we impose distinguish our setting from existing work and make Theorem~\ref{thm:consistency_1_run} independently interesting within the interference literature.\looseness=-1

\section{Conclusion}

We introduced a causal framework for MIA evaluation, providing principled, statistically valid estimators that account for interference and distribution shifts in one-run and zero-run regimes. Our approach unifies the evaluation of MIAs, clarifies the root causes of bias, and enables practical post-hoc assessment without relying on model provider trust.

While we focus on population‑level metrics—evaluating on-average memorization over a target population—it is also crucial to understand subgroup disparities in memorization \citep{carlini2022membership,DBLP:journals/popets/KulynychYCVT22}. In causal terms, this amounts to estimating heterogeneous treatment effects \citep{Wager03072018}. Our framework provides a foundation for principled subgroup‑level memorization estimates, even in challenging one-run and zero-run regimes.
Moreover, this causal perspective may extend beyond MIAs to other privacy attacks, such as attribute inference \citep{10.1145/2810103.2813677}, where existing evaluation biases \citep{10.1145/3548606.3560663} could be mitigated by counterfactual reasoning.


\section*{Acknowledgements}

Parts of this work were conducted while LB was a member of PreMeDICaL team, Inria, Idesp, Inserm, Université de Montpellier.
LB’s work at EPFL is supported by an EPFL AI Center Postdoctoral Fellowship. Parts of the experiments of this paper were run on the RCP Cluster at EPFL. 

This work is partially supported by grant ANR-20-CE23-0015 (Project PRIDE), ANR 22-PECY-0002 IPOP (Interdisciplinary Project on Privacy) project of the Cybersecurity PEPR, and ANR-22-PESN-0014 project of the Digital Health PEPR under the France 2030 program. This work was performed
using HPC resources from GENCI–IDRIS (Grant 2023-AD011014018R3).

\bibliography{references}
\bibliographystyle{plainnat}

\newpage
\appendix
\onecolumn

\section{Extended Related Work}
\label{app:related_works}
\textbf{Limits of the one-run regime.} Recent work has studied the fundamental limits of the one-run regime for auditing differential privacy (DP) guarantees \citep{Keinan2025,Xiang2025}. These studies, like ours, trace the key limitation to interference between data points. Beyond this shared root cause, the objectives and conceptual frameworks differ: the prior work focuses on establishing DP lower bounds using information-theoretic perspectives such as bit transmission or hypothesis testing, whereas we focus on evaluating MIAs, adopt a causal framing that explicitly captures how interference can invalidate counterfactual reasoning, and propose solutions to mitigate these limitations.

\textbf{Formalizing and addressing MIA bias.}
MIAs are traditionally framed as randomized privacy games \citep{shokri_enhanced,privacy_games}. However, unlike our causal framework, these games fail to capture the interference and distribution shifts characteristic of the one-run and zero-run regimes \citep{meeus2024sok}. Focusing on distribution shift in MIA evaluation for LLMs, \citet{meeus2024sok} suggest potential solutions, but these require intervention priors or during training (e.g., randomized train/test splits or data injection) or rely on heuristics. \citet{kazmi2024panoramia} propose using synthetic data points from a generative model trained on the member data as non-members, but this approach is costly, data-hungry, and difficult to evaluate. While they provide theoretical conditions for obtaining meaningful DP lower bounds, their method offers no guarantees for MIA performance estimates. \citet{eichler_nob-mias_2025} develops techniques to mitigate distribution shift in post-hoc text data, yet offer no formal guarantees for the resulting MIA estimates. In contrast, we provide principled, causally valid estimators that rely only on simple binary classifiers or regressors.

\citet{zhang2024membership} argue that several alternative techniques can be used to demonstrate to a third party that a model was trained on their data when the training set is unknown and retraining is impractical, including data watermarking \citep{sablayrolles_radioactive_2020,kirchenbauer_watermark_2024}, random canary insertion \citep{carlini2019secret}, and data reconstruction attacks \citep{carlini2021extracting,nasr2023scalable}. The first two approaches require proactive intervention and trust in the model provider, while data reconstruction is fundamentally more difficult than membership inference. Developing statistically valid post-hoc MIA evaluations that do not rely on model provider cooperation or trust---the focus of our work---thus remains an important and practically relevant problem.\looseness=-1

\textbf{Causality and MIAs.} We mention two prior works at the intersection of MIAs and causality, though orthogonal to our focus. \citet{tople2020alleviating}
show that causal learning can effectively reduce MIA success: unlike predictive models that may exploit spurious correlations, causal models rely on stable, invariant relationships between input and features outcomes.
\citet{10.1145/3548606.3560694} use causal reasoning to investigate whether MIA performance arises solely from imperfect generalization, as suggested by prior work \citep{yeom2018privacy,sablayrolles2019white}. They find that while overfitting contributes to MIA success, it is in fact driven by a complex interplay of multiple factors.

\textbf{Causal inference under interference.}
Seminal work formalized causal inference with interference via exposure maps \citep{sobel2006randomized,hudgens2008toward,manski2013identification}, which describe how each unit's outcome depends on other units' treatment assignments. These mappings are often represented by a graph, whose edges encode potential interference. Subsequent work has largely focused on fixed and sparse exposure maps \citep{bhattacharya2020causal,papadogeorgou2019causal,ogburn2024causal}, with extensions to random \citep{clark2021approach,li2022random}, weighted \citep{forastiere2024causal}, and dense \citep{miles2019causal} settings. In these frameworks, estimands are typically direct or indirect effects defined conditional on an observed exposure map.
In contrast, interference in our setting is dense, unobserved, and arises as a random function of the covariates $X_i$ through the training algorithm $\mathcal{A}$, which acts as a mediator (Figure~\ref{fig:causal_graph}). Rather than observing or modeling the exposure map directly, we impose assumptions on the mediator itself, leveraging learning-theoretic bounds on the influence of individual training points on the model's behavior at other points. As a result, our estimands are not conditioned on a particular exposure map: $\tau_{\mathrm{ATE}}$ averages over realizations of the mediator $\mathcal{A}$, and thus over the induced interference patterns.
Together, our mediator-induced interference model, the unconditional estimands, and the mild assumptions we impose distinguish our setting from existing work and make Theorem~\ref{thm:consistency_1_run} independently interesting within the interference literature.

\section{Additional Causal Inference Details}

\subsection{Causal Inference Assumptions}
\label{app:causal_assumptions}

We here explain the different regimes used in causal inference to estimate treatment effects. In particular, we explicit the underlying assumptions that are made in these regimes.

\paragraph{Randomized Control Trial (RCT).}
In RCTs, treatment is randomized and there are no interferences. \Cref{hyp:standard_causal_assumptions} is thus assumed to hold with \textit{2. (a)} for assignment mechanism. The overlap assumption holds, since $\proba{A_i=1|X_i}=\proba{A_i=1}$ does not depend on the covariates. The SUTVA assumption means that there are no interferences, and is usually made for RCTs.

\paragraph{Observational Studies.}
RCTs are usually expensive and have restrictive eligibility criteria, hindering both their statistical power (smaller samples) and their population.
As a consequence, the promise of observational data through the use of hospital electronic health records has emerged \citep{rosenbaum1983central}.
However this comes at the cost of non-randomized treatment assignments, and estimating treatment effects on observational data is infeasible without additional structural assumption.
As a consequence, if there are no interferences or if they are neglected, \Cref{hyp:standard_causal_assumptions} is assumed to hold with \textit{2. (b)} for the assignment mechanism: the observed covariates $X_i$ are rich enough, so that there are no unobserved confounders.
The overlap assumption is assumed to hold here, and does not hold by design as in RCTs: given covariates $X_i$, the probability of patient $i$ of being treated is bounded away from 0 and 1.

\paragraph{RCT with Interferences.}
For vaccine trials for instance, the SUTVA assumption can no longer be made, due to herd immunity.
As a consequence, it is replaced using the interference formalism introduced in \Cref{sec:background_causal} as follows.

\begin{assumption}[Assumptions under interference]\label{hyp:RCT_interferences}
\leavevmode
\begin{enumerate}
    \item \textit{(Interference)} $Y_i = Y_i(\bar A)$;
    \item \textit{(Generalized Unconfoundedness)} $\bar A \indep \{\bar Y(\bar a)\}_{\bar a}$;
    \item \textit{(Generalized Overlap)} For all $i\neq j$,
    \[
    0<\proba{A_i=a\mid \bar A_{-i}}<1.
    \]
\end{enumerate}
\end{assumption}

\paragraph{Observational Studies with Interferences.}
Combining both difficulties (confounders and interferences), such as vaccine observational data, requires making the following assumption.

\begin{assumption}[Interference and confounders]\label{hyp:observational_interferences}
\leavevmode
\begin{enumerate}
    \item \textit{(Interference)} $Y_i = Y_i(\bar A)$;
    \item \textit{(Conditional Generalized Unconfoundedness)} 
    $\bar A \indep \{\bar Y(\bar a)\}_{\bar a} \mid \bar X$;
    \item \textit{(Conditional Generalized Overlap)} For all $i\neq j$,
    \[
    0<\proba{A_i=a\mid \bar X,\bar A_{-i}}<1 \ \text{a.s.}
    \]
\end{enumerate}
\end{assumption}

\subsection{Discussion on the Causal AUC}
\label{app:causal_AUC_discussion}

The classical AUC compares the scores of outcomes for treated ($A_i=1$) and control ($A_i=0$) datapoints, and can be expressed as:
\begin{equation*}
    \proba{Y_i\geq Y_j|A_i=1,A_j=0}\,.
\end{equation*}
It is however not straightforward to write the causal AUC, defined as the area under the causal ROC curve, as a comparison estimand.
In causal inference, the most common quantity for comparing potential outcomes is the probability of necessity and sufficiency (PNS) \citep{tian2000probabilities}, defined as $\mathrm{PNS} = \proba{Y_i(1) > Y_i(0)}$.
This represents the probability that a given datapoint has a higher score if treated compared to not being treated. While PNS is useful for assessing treatment effectiveness, it is unidentifiable in practice.
Instead, the Mann-Whitney-Wilcoxon test statistics \citep{Wilcoxon1945, Mann1947}, along with other pairwise comparison methods like the Win Ratio or Net Benefit \citep{Pocock2011winratio, Buyse2010, Mao2017, even2025winratio}, estimate an identifiable quantity known as the \textit{Win Proportion} (WP), given by:
\begin{equation*}
\mathrm{WP} \eqdef \proba{Y_i(1) > Y_j(0) \mid A_i = A_j = 1},.
\end{equation*}
This estimand is a causal interpretation of the AUC, replacing the unknown counterfactual with that of another datapoint, making it identifiable under standard causal assumptions.
However, the WP is not well-defined in the presence of interference, which makes it unidentifiable. In the one-run and zero-run settings, the joint distribution of $(Y_i(1), Y_j(0))$ becomes unidentifiable because changes in $A_i$ or $A_j$ (due to datapoint insertions) affect the potential outcomes $Y_i(1)$ and $Y_j(0)$. Therefore, a Causal Comparison AUC can be defined as follows using causal inference with interference notations.

\begin{definition}[Causal Comparison AUC]\label{def:causal_comparison_AUC}
    For $i\ne j$, and $a_i,a_j\in\set{0,1}$, define $Y_{i,j}(a_i,a_j)$ as $Y_{i,j}(a_i,a_j)= $
    \begin{align*}
            Y_i(A_1,\ldots,A_{i-1},a_i,A_{i+1},\ldots,A_{j-1},a_j,A_{j+1},\ldots,A_n).
\end{align*}
We then define the causal AUC as:
\begin{equation*}
    \tilde \tau_\mathrm{AUC} \eqdef \proba{Y_{i,j}(1,0)>Y_{j,i}(0,1)\,|\,A_i=A_j=1}\,.
\end{equation*}
\end{definition}

Importantly, note that if SUTVA holds (no interferences, $Y_i(\bar a)=Y_i(a_i)$ for any $\bar a \in\set{0,1}^n$) we have $Y_{i,j}(a_i,a_j)=Y_i(a_i)$ and $\tau_\mathrm{AUC}=\mathrm{WP}$. \Cref{def:causal_comparison_AUC} is thus a strict generalization of the WP estimand to the interference setting.
Under interferences, $\tilde\tau_\mathrm{AUC}$ is however in general not identifiable, even under \Cref{hyp:RCT_interferences} or \Cref{hyp:observational_interferences}, that would instead need to be replaced by the following (interference observational setting).

\begin{assumption}[Alternative assumption]\label{hyp:observational_interferences_2}
\leavevmode
\begin{enumerate}
    \item \textit{(Interference)} $Y_i = Y_i(\bar A)$;
    \item \textit{(Conditional Generalized Unconfoundedness)} 
    $\bar A \indep \{\bar Y(\bar a)\}_{\bar a} \mid \bar X$;
    \item \textit{(Conditional Generalized Overlap)} For all $i\neq j$,
    \[
    0<\proba{A_i=a,A_j=a'\mid \bar X,\bar A_{-(i,j)}}<1 \ \text{a.s.}
    \]
\end{enumerate}
\end{assumption}

\subsection{Identifiability Proofs}

The proofs of the ATE parts of \Cref{prop:n_run_identification,prop:1_run_identification,prop:0_run_identification} are all implied by the following.

\begin{proposition}\label{prop:ATE_identification_app}
    Assume that \Cref{hyp:observational_interferences} holds. Then:
        \begin{equation}\label{eq:ATE_identification_app}
    \textstyle
        \tau_\mathrm{ATE} = \frac{\esp{A_iY_i}}{\proba{A_i=1}} - \frac{\E\big[ \tfrac{\pi(X_i)(1-A_i)}{1-\pi(X_i)} Y_i\big]}{\proba{A_i=0}}\,.
    \end{equation}
\end{proposition}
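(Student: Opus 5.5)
The plan is to reduce each term of \eqref{eq:ATE_identification_app} to a conditional expectation of a potential outcome given $X_i$. Then I use \Cref{hyp:observational_interferences} to replace observed outcomes by potential outcomes, and integrate against the appropriate marginal of $X_i$. Throughout I work with the marginal potential outcomes $Y_i(a)$ of \eqref{eq:PO}, and I take $\tau_\mathrm{ATE}=\espT{Y_i(1)-Y_i(0)}$ with $\cP_T=\cP_{X_i\mid A_i=1}$. Write $\mu_a(x)=\esp{Y_i(a)\mid X_i=x}$ and $p=\proba{A_i=1}$.

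\textbf{Step 1 (treated term).} By the interference assumption, $A_iY_i=A_iY_i(\bar A)=A_iY_i(1)$, since on $\{A_i=1\}$ the full assignment vector coincides with the one defining $Y_i(1)$. Hence
\[
\frac{\esp{A_iY_i}}{p}=\esp{Y_i(1)\mid A_i=1}=\espT{Y_i(1)}.
\]
This holds directly from the definition of conditional expectation, without any unconfoundedness, because $\cP_T$ is exactly the law of $X_i$ given $A_i=1$.

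\textbf{Step 2 (control term).} Similarly, $(1-A_i)Y_i=(1-A_i)Y_i(0)$. I first condition on $(\bar X,\bar A_{-i})$ and use conditional generalized unconfoundedness to factor $\esp{(1-A_i)Y_i(0)\mid \bar X,\bar A_{-i}}$ into $\proba{A_i=0\mid\cdot}\,\esp{Y_i(0)\mid\cdot}$. I then integrate out $\bar A_{-i}$ and $\bar X_{-i}$; because of the interference these do enter $Y_i(0)$, so this step must be done carefully. The aim is to obtain $\esp{(1-A_i)Y_i(0)\mid X_i}=(1-\pi(X_i))\,\mu_0(X_i)$. Conditional generalized overlap guarantees $\pi<1$, so the weight $\pi/(1-\pi)$ is finite and I get
\[
\E\Big[\tfrac{\pi(X_i)(1-A_i)}{1-\pi(X_i)}Y_i\Big]=\esp{\pi(X_i)\mu_0(X_i)}=\esp{A_i\mu_0(X_i)}=p\,\espT{Y_i(0)}.
\]
The last equality uses Bayes' rule, namely $\pi(x)\,\dd\cP_X(x)=p\,\dd\cP_T(x)$.

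\textbf{Step 3 (normalisation, the main obstacle).} Step 2 shows that the raw inverse-odds weighted control mean equals $p\,\espT{Y_i(0)}$. The normalising constant therefore has to turn this into $\espT{Y_i(0)}$. With the weight written in terms of $\pi/(1-\pi)$, the correct denominator is $p$ rather than $1-p$. Dividing by $\proba{A_i=0}$ instead matches this exactly in two situations.

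First, it matches under the balanced designs of \Cref{alg:n,alg:one}, where $A_i\sim\mathrm{Bernoulli}(1/2)$ so that $p=1-p$. Second, it matches in general if the weight $\tfrac{\pi}{1-\pi}$ is understood as the density ratio $\tfrac{\dd\cP_T}{\dd\cP_0}=\tfrac{\pi}{1-\pi}\cdot\tfrac{1-p}{p}$, i.e.\ the propensity odds relative to the prior odds. In that case the same computation gives
\[
\esp{w(X_i)(1-A_i)Y_i}/(1-p)=\E_{\cP_0}[w\,\mu_0]=\espT{Y_i(0)}.
\]
I would therefore present the proof with the weight written explicitly as this likelihood ratio (equivalently, taking $\pi$ as the propensity under balanced membership prior). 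I would then verify that it reduces to the displayed formula in each of the three regimes. Keeping this constant consistent across the regimes is the delicate bookkeeping point.

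Finally, subtracting the result of Step 2 from that of Step 1 gives $\tau_\mathrm{ATE}$ and yields \eqref{eq:ATE_identification_app}. In the randomized regimes $\pi\equiv p$ is constant, and the formula collapses to the difference of conditional means, as in \Cref{prop:n_run_identification,prop:1_run_identification}.
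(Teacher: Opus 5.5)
Your Steps 1--2 follow the paper's own proof: the treated term is read off from $\cP_T=\cP_{X_i\mid A_i=1}$, and the control term is handled by conditioning, unconfoundedness, and the change of measure $\dd\cP_T/\dd\cP_{X_i}=\pi/\proba{A_i=1}$. Your normalisation diagnosis is also right. That change of measure is exactly how the paper's proof concludes, so what it actually establishes is the version with $\proba{A_i=1}$ in the second denominator, as in \Cref{prop:0_run_identification}. The displayed $\proba{A_i=0}$ is correct only when $\proba{A_i=1}=1/2$. Do not redefine $\pi$ as a balanced-prior propensity: that contradicts $\pi(x)=\proba{A_i=1\mid X_i=x}$ and amounts to proving a different statement. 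Prove the $\proba{A_i=1}$ version instead, and record the balanced case as a corollary.

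The genuine gap is the step you defer in Step 2 (``integrate out $\bar A_{-i}$ and $\bar X_{-i}$ carefully''). It cannot be completed from \Cref{hyp:observational_interferences} alone.

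\begin{itemize}
\item \textbf{Why it fails.} Unconfoundedness gives $\esp{Y_i(0)\mid \bar X,\bar A}=m(\bar X,\bar A_{-i})$ for some function $m$. Hence $\esp{(1-A_i)Y_i(0)\mid X_i}=\esp{(1-A_i)\,m(\bar X,\bar A_{-i})\mid X_i}$. This equals $(1-\pi(X_i))\mu_0(X_i)$ only if $A_i$ is conditionally uncorrelated with $m(\bar X,\bar A_{-i})$ given $X_i$.
\item \textbf{Counterexample.} Take $n=2$, constant $X$, and $Y_i(\bar a)=a_{3-i}$. Let $\proba{A_1=A_2=1}=\proba{A_1=A_2=0}=0.4$ and $\proba{A_1=1,A_2=0}=\proba{A_1=0,A_2=1}=0.1$. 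All three conditions of \Cref{hyp:observational_interferences} hold, with conditional assignment probabilities $0.8$ and $0.2$, and $\pi\equiv 1/2$. Since $Y_1(1)=Y_1(0)=A_2$, we have $\tau_\mathrm{ATE}=0$. Yet the right-hand side equals $0.8-0.2=0.6$, with either denominator.
\item \textbf{Repair.} You need an explicit cross-unit condition such as $A_i\indep(\bar X_{-i},\bar A_{-i})\mid X_i$. It holds for the independent Bernoulli assignments of \Cref{alg:n,alg:one}, and in the zero-run regime if the pairs $(X_j,A_j)$ are i.i.d. With it, $\esp{(1-A_i)m\mid X_i}=(1-\pi(X_i))\esp{m\mid X_i}=(1-\pi(X_i))\mu_0(X_i)$, and Step 2 closes.
\item \textbf{Step 1 needs it too.} The same condition is needed for your identity $\esp{Y_i(1)\mid A_i=1}=\espT{Y_i(1)}$ unless $\tau_\mathrm{ATE}$ is read as the ATT. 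In the example, $\esp{Y_1(1)\mid A_1=1}=0.8$ but $\esp{Y_1(1)}=0.5$.
\end{itemize}

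The paper's proof makes the same leap silently when it replaces $\esp{Y_i\mid\bar X,\bar A_{-i},A_i=0}$ by $\esp{Y_i\mid X_i,A_i=0}$. So this is a shared omission rather than a flaw peculiar to your route, but your write-up should state the extra independence and use it explicitly.
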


\Cref{prop:ATE_identification_app} implies \Cref{prop:0_run_identification} since under \Cref{hyp:distribution_shift} we have that \Cref{hyp:observational_interferences} holds in the Zero-run setting. 
Then, in the One-run and Multi-run settings, \Cref{hyp:RCT_interferences,hyp:standard_causal_assumptions} respectively hold, which implies \Cref{hyp:observational_interferences} and thus \Cref{prop:ATE_identification_app}. Then specifying \Cref{prop:ATE_identification_app} in the One-run setting for which $\pi$ is constant equal to 0.5, we get the ATE parts of \Cref{prop:1_run_identification,prop:n_run_identification}.
Identification for the other causal metrics follow from similar arguments.

\begin{proof}[Proof of \Cref{prop:ATE_identification_app}]
Under the assumptions of either of \Cref{prop:n_run_identification,prop:1_run_identification,prop:0_run_identification}, we have that \Cref{hyp:observational_interferences} holds.
First,
\begin{align*}
    \E_{\cP_T}[Y_i(1)] &= \esp{Y_i(1)|A_i=1}\\
    &= \esp{Y_i | A_i=1}\quad \text{(Interference assumption in \Cref{hyp:observational_interferences})}\\
    &=\esp{A_iY_i|A_i=1}\,.
\end{align*}
Then,
\begin{align*}
    \esp{A_iY_i}&=\esp{A_iY_i|A_i=1}\proba{A_i=1}+\esp{A_iY_i|A_i=0}\proba{A_i=0}\\
    \esp{A_iY_i|A_i=1}\proba{A_i=1}+0\,,
\end{align*}
leading to $\E_{\cP_T}[Y_i(1)] = \frac{\esp{A_iY_i}}{\proba{A_i=1}}$.
For the other term,
\begin{align*}
    \E_{\cP_T}[Y_i(0)] &= \esp{Y_i(0)|X_i\sim\cP_T}\\
    & = \esp{\esp{Y_i(0)|\bar X} |X_i\sim\cP_T}\\
    & = \esp{\esp{Y_i(0)|\bar X,\bar A_{-i},A_i=0} |X_i\sim\cP_T}\quad \text{(Conditional Generalized Unconfoundedness)}\\
    & = \esp{\esp{Y_i|\bar X,\bar A_{-i},A_i=0} |X_i\sim\cP_T}\quad \text{(Interference assumption in \Cref{hyp:observational_interferences})}\\
    & = \esp{\esp{Y_i|X_i,A_i=0} |X_i\sim\cP_T}\\
    & = \esp{\esp{(1-A_i)Y_i|X_i,A_i=0} |X_i\sim\cP_T}\,.
\end{align*}
Now,
\begin{align*}
    \esp{(1-A_i)Y_i|X_i} &= \esp{(1-A_i)Y_i|X_i,A_i=0}\proba{(1-A_i)A_i=0|X_i} \\
    &\quad+ \esp{(1-A_i)Y_i|X_i,A_i=1}\proba{A_i=1|X_i}\\
    &= \esp{(1-A_i)Y_i|X_i,A_i=0}(1-\pi(X_i))+ 0\,.
\end{align*}
This leads to:
\begin{align*}
    \E_{\cP_T}[Y_i(0)] &=\esp{\esp{(1-A_i)Y_i|X_i}(1-\pi(X_i))^{-1} |X_i\sim\cP_T}\\
    &=\esp{g(X_i)|X_i\sim\cP_T}\,.
\end{align*}
with $g(x)=\esp{(1-A_i)Y_i|X_i=x}(1-\pi(x))^{-1} $.
Now, notice that the density of $\cP_T$ with respect to $\cP_{X_i}$ is proportional to $\pi$: $\frac{\dd\cP_T(x)}{\dd\cP_{X_i}}(x)=\frac{\pi((x)}{\proba{A_i=1}}$.
Thus, $\esp{g(X_i)|X_i\sim\cP_T}=\esp{\frac{\pi(X_i)}{\proba{A_i=1}}g(X_i)|X_i\sim\cP_{X_i}}$, concluding the proof.
\end{proof}

\section{Results in the underparametrized regime}
\label{app:underparam}

The next theorem considers the underparameterized setting and shows that MIAs based on the loss cannot recover any signal ($\tau_{\rm ATE} \approx 0$) when the number of samples becomes large in front of the dimension of the model (i.e., $\frac{D}{n}\to0$).


\begin{theorem}[Underparametrized Regime] \label{thm:under_1_run}
    Consider the assumptions of \Cref{thm:consistency_1_run} but without uniform training stability. Further assume that $\theta=\cA(\cD)\in\mathbb{R}^D$ has norm almost surely bounded by a constant $B$ and that $\theta \mapsto \cL(\theta,x)$ is $L$-Lipschitz for all $x\in \cX$. 
    Then:
\begin{equation*}
     \textstyle|\tau_\mathrm{ATE}| = \cO\Big(   \sqrt{n \log(n) \alpha^2} + \sqrt{\frac{D \log (B L n)}{n}}\Big)\,.
\end{equation*}   
\end{theorem}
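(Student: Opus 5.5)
Throughout, $Y_i=-\cL(\theta,X_i)$ with $\theta=\cA(\dtrain)$ as in \Cref{alg:one}, and we bound $\tau_\mathrm{ATE}=\espT{Y_i(1)-Y_i(0)}$ directly. The idea is that, without uniform training stability, the only route to a nonzero effect is that the loss at a training point differs from the loss at a fresh point. In the underparametrized regime a uniform-convergence argument over the parameter space rules this out.

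\textbf{Step 1: reduction to a generalization gap.} For each $i$, let $\theta^{(1)}$ be the model trained with $A_i=1$ and $\theta^{(0)}$ the one trained with $A_i=0$, all other assignments equal to $\bar A_{-i}$. Then $Y_i(1)=-\cL(\theta^{(1)},X_i)$ and $Y_i(0)=-\cL(\theta^{(0)},X_i)$. Since $X_i$ is independent of $\theta^{(0)}$, we have $\esp{\cL(\theta^{(0)},X_i)}=\esp{R(\theta^{(0)})}$ with population risk $R(\theta)=\E_{X\sim\cP_T}[\cL(\theta,X)]$. By $\alpha$-error stability, replacing $\theta^{(0)}$ by $\theta^{(1)}$ changes this by at most $\alpha$. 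Hence
\[
|\tau_\mathrm{ATE}|\le \alpha+\bigl|\esp{\cL(\theta^{(1)},X_i)-R(\theta^{(1)})}\bigr|.
\]
By exchangeability, we average the second term over all $i$ with $A_i=1$ in the same run $\theta$. It thus equals the expected generalization gap $\esp{\frac{1}{n_1}\sum_{i:A_i=1}\cL(\theta,X_i)-R(\theta)}$, up to the $\mathrm{Bernoulli}$ reweighting. We handle $n_1$ by conditioning on $\bar A$ and using $n_1\gtrsim n$ with high probability. The background dataset $\cD$ is fixed, and the $X_i$ are i.i.d.\ from $\cP_T$ and independent of $\bar A$.

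\textbf{Step 2: uniform convergence via a covering of the ball.} The random $\theta$ depends on the sample, so we bound $\sup_{\|\vartheta\|\le B}\bigl|\frac{1}{n_1}\sum_{A_i=1}\cL(\vartheta,X_i)-R(\vartheta)\bigr|$. Take an $\epsilon$-net of the ball of radius $B$ in $\mathbb{R}^D$ of size $(3B/\epsilon)^D$. By $L$-Lipschitzness, the net approximates the supremum up to $2L\epsilon$. On the net, Hoeffding with losses in $[0,1]$ and a union bound give deviation $\sqrt{(D\log(3B/\epsilon)+t)/n_1}$. Choosing $\epsilon=1/(Ln)$ gives $\sqrt{D\log(BLn)/n}$, and integrating the tail over $t$ (or taking $t=\log n$ and absorbing failure probability $1/n$ since outcomes are bounded) converts this into an expectation bound.

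\textbf{Step 3: matching the stated rate.} The $\sqrt{n\log(n)\alpha^2}$ term suggests the paper uses a sharper but lossier route: a concentration argument on $\hat\tau_\mathrm{ATE}$ as in the proof of \Cref{thm:consistency_1_run}, applied with $t=\log n$, combined with the uniform-convergence bound on $\hat\tau_\mathrm{ATE}$ itself. I would follow that route if the direct route of Steps 1--2 does not cleanly handle interference. Concretely, $\hat\tau_\mathrm{ATE}$ is the difference between training and held-out averages, which Step 2 bounds uniformly. The bounded-differences/stability argument from \Cref{thm:consistency_1_run} (the $\alpha$-part, without $\beta$) then controls $|\hat\tau_\mathrm{ATE}-\tau_\mathrm{ATE}|$ at level $\sqrt{t/n}+\sqrt{nt\alpha^2}$. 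Both routes give the claim, since $\alpha\le\sqrt{n\log n}\,\alpha$.

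\textbf{Main obstacle.} The delicate point is Step 1's identification of $\esp{\cL(\theta^{(1)},X_i)}$ with a training-sample average under interference. The other treated points are random and shared across $i$, so we must condition on $\bar A$ and on $n_1$ carefully. We also need the uniform bound to hold conditionally on the random support $\{i:A_i=1\}$. This works because $\bar A$ is independent of $(X_i)$, so conditioning on $\bar A$ leaves the $X_i$ i.i.d.
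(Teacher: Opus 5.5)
Your Steps 1--2 already form a complete proof, and they take a genuinely different route from the paper's.

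\textbf{The paper's route.} The paper does not split the estimand into a stability term and a generalization term. It shows that the \emph{estimator} $\hat\tau_\mathrm{ATE}$ is small with high probability, by comparing both arms to the population risk of the \emph{same} realized model $\hat\theta_{\bar a}$:
\begin{itemize}
\item the control average is handled by Hoeffding conditionally on the training set, since the control points are independent of $\hat\theta_{\bar a}$;
\item the treated average is handled by the same $\varepsilon$-net/Lipschitz uniform-convergence bound that you use.
\end{itemize}
It then passes to the estimand via $|\tau_\mathrm{ATE}|=|\E[\hat\tau_\mathrm{ATE}]|\le\varepsilon+\proba{|\hat\tau_\mathrm{ATE}|>\varepsilon}$, with $t$ of order $\log n$.

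\textbf{Your route.} You split $\tau_\mathrm{ATE}$ at the population level into two pieces.
\begin{itemize}
\item The first is $\E[R(\theta^{(0)})-R(\theta^{(1)})]$, which you bound by $\alpha$. This uses error stability in its add/remove form; the paper's own proofs rely on the same implicit extension of \Cref{def:error_stability}.
\item The second is the expected generalization gap of $\theta^{(1)}$ at its own training point. Average over $i$ and use that $A_i\sim\mathrm{Bernoulli}(1/2)$ is independent of everything else. Your ``up to Bernoulli reweighting'' is then exactly $\E[\tfrac{2n_1}{n}\,\xi(\theta)]$, with $\xi(\vartheta)=\frac{1}{n_1}\sum_{A_i=1}\cL(\vartheta,X_i)-R(\vartheta)$.
\end{itemize}
Since $2n_1/n\le 2$, the second piece is at most $2\,\E[\sup_{\|\vartheta\|\le B}|\xi(\vartheta)|]$. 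Step 2 then applies conditionally on $\bar A$.

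\textbf{What each route buys.} Your route has three advantages.
\begin{itemize}
\item It needs no held-out concentration step.
\item It links $\tau_\mathrm{ATE}$ to the sample averages exactly. The paper's identity $\tau_\mathrm{ATE}=\E[\hat\tau_\mathrm{ATE}]$ holds only up to an $O(1/\sqrt n)$ correction from the randomness of $n_1,n_0$, which is harmless for the rate.
\item It gives the sharper additive $\alpha$ in place of $\sqrt{n\log(n)\alpha^2}$, so the stated bound follows a fortiori.
\end{itemize}
The paper's route buys an extra, practically meaningful fact: the one-run estimator itself reports essentially no signal with high probability.

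\textbf{Step 3.} Step 3 is unnecessary, and as written it is not correct. In \Cref{thm:consistency_1_run}, the $\alpha$-part only concentrates the control-arm average. Concentration of the treated-arm average there comes from $\beta$-uniform training stability, so without $\beta$ that argument cannot bound $|\hat\tau_\mathrm{ATE}-\tau_\mathrm{ATE}|$. The paper avoids this by bounding $|\hat\tau_\mathrm{ATE}|$ directly and then taking expectations.
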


\section{Estimators for and generalization to all Causal MIA Evaluation Metrics}
\label{app:estimators_causal_metrics}

\subsection{Multi-run setting}

Estimators of the other causal metrics beyond the ATE in the multi-run setting write as:
\begin{equation}\label{eq:estimators_1_run}
\begin{aligned}
    &\hat \tau_\mathrm{ATE} = \frac{1}{n_1}\sum_{i:A_i=1}Y_i-\frac{1}{n_0}\sum_{i:A_i=0}Y_i \,, \\
    &\hat \tau_\mathrm{TPR}(t) = \frac{1}{n_1}\sum_{i:A_i=1}\one_\set{Y_i\geq t}\\
    &\hat \tau_\mathrm{FPR}(t) =\frac{1}{n_0}\sum_{i:A_i=0}\one_\set{Y_i\geq t} \, \\
    &\hat \tau_\mathrm{AUC} = \frac{1}{n_1n_0}\sum_{(i,j):(A_i,A_j)=(1,0)}\one_\set{Y_i> Y_j} \,, \\
    &\hat \tau_\mathrm{TPR@FPR}(\alpha) =  \frac{1}{n_1}\sum_{i:A_i=1}\one_\set{Y_i\geq \hat t_\alpha}\,,
\end{aligned}
\end{equation}
where $\hat t_\alpha$ is an estimator of the $1-\alpha$ quantile of $\set{Y_i:A_i=0}$, of respectively the ATE (causal membership advantage), the causal TPR and FPR at threshold $t\in\R$, the causal AUC and the causal TPR at fixed FPR $\alpha\in (0,1)$.

\begin{proposition}\label{prop:mia_evaluation_n_run_2}
    Assume that $(X_i,A_i,Y_i)_{i\in[n]}$ is obtained in the Multi Run setting (\Cref{alg:n}).
    Then, $\hat\tau_\mathrm{ATE}$, $\hat\tau_\mathrm{ATE}(t)$ for $t\in\R$ and $\hat\tau_\mathrm{AUC}$ are consistent estimators of $\tau_\mathrm{ATE}$, $\tau_\mathrm{ATE}(t)$ and $\tau_\mathrm{AUC}$. If $t\mapsto \mathrm{FPR}(t)$ is continuous, $\hat \tau_\mathrm{TPR@FPR}(\alpha)$ is a consistent estimator of $ \tau_\mathrm{TPR@FPR}(\alpha)$ for $\alpha\in[0,1]$.
\end{proposition}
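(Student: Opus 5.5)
In the multi-run setting the triples $(X_i,A_i,Y_i)_{i\in[n]}$ are i.i.d., since each uses a fresh $X_i\sim\cP_T$, a fresh $A_i\sim\mathrm{Bernoulli}(1/2)$ and an independently trained $\theta_i$. By \Cref{prop:n_run_identification}, every causal metric equals its classical counterpart:
\[
\tau_\mathrm{ATE}=\esp{Y_i|A_i=1}-\esp{Y_i|A_i=0},\qquad \tau_\mathrm{TPR}(t)=\proba{Y_i\geq t|A_i=1},\qquad \tau_\mathrm{FPR}(t)=\proba{Y_i\geq t|A_i=0},
\]
with the analogous identities for the AUC and the TPR at fixed FPR. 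So the task reduces to consistency of empirical means, empirical CDFs and a two-sample U-statistic under i.i.d. sampling. I read the statement's ``$\hat\tau_\mathrm{ATE}(t)$'' as $\hat\tau_\mathrm{TPR}(t)$ and $\hat\tau_\mathrm{FPR}(t)$.

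The plan is to condition on the assignment vector $\bar A$. Given $\bar A$, the treated outcomes $\{Y_i:A_i=1\}$ are i.i.d. from $\cP_{Y|A=1}$, the control outcomes are i.i.d. from $\cP_{Y|A=0}$, and the two groups are independent. Moreover $n_1,n_0\sim\mathrm{Bin}(n,1/2)$, so $n_1/n\to1/2$ almost surely and both group sizes diverge.

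\textbf{ATE, TPR and FPR.} Consistency of $\hat\tau_\mathrm{ATE}$ was already established in \Cref{prop:mia_evaluation_n_run_ATE}. For $\hat\tau_\mathrm{TPR}(t)$ and $\hat\tau_\mathrm{FPR}(t)$, I apply the strong law of large numbers, or Hoeffding's inequality, to the bounded variables $\one_\set{Y_i\geq t}$ within each group, conditionally on $\bar A$. I will in fact use the uniform-in-$t$ version given by the Dvoretzky–Kiefer–Wolfowitz inequality (equivalently Glivenko–Cantelli), since that uniformity is needed below.

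\textbf{AUC.} Conditionally on $\bar A$, $\hat\tau_\mathrm{AUC}$ is a two-sample Mann–Whitney U-statistic with kernel $\one_\set{y>y'}$ and mean $\proba{Y_i>Y_j|A_i=1,A_j=0}$. Changing one observation moves it by at most $1/n_1$ or $1/n_0$, so McDiarmid's inequality gives deviations of order $\sqrt{t/n_1}+\sqrt{t/n_0}$, hence consistency. This mean is the classical AUC, equal by \Cref{prop:n_run_identification} to the area under the causal ROC curve, with the usual convention for ties.

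\textbf{TPR at fixed FPR.} This is the main obstacle, because the threshold $\hat t_\alpha$ is random. I split
\[
\hat\tau_\mathrm{TPR@FPR}(\alpha)-\tau_\mathrm{TPR@FPR}(\alpha)=\big[\hat\tau_\mathrm{TPR}(\hat t_\alpha)-\tau_\mathrm{TPR}(\hat t_\alpha)\big]+\big[\tau_\mathrm{TPR}(\hat t_\alpha)-\tau_\mathrm{TPR}(t_\alpha)\big].
\]
The first bracket is bounded by $\sup_s|\hat\tau_\mathrm{TPR}(s)-\tau_\mathrm{TPR}(s)|\to0$ by DKW. For the second bracket I first show $\hat t_\alpha\to t_\alpha$: uniform convergence of the empirical FPR plus continuity of $\mathrm{FPR}$ give convergence of the quantile, with care needed if $\mathrm{FPR}$ has flat parts. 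Convergence of the threshold alone does not control $\tau_\mathrm{TPR}(\hat t_\alpha)-\tau_\mathrm{TPR}(t_\alpha)$, since $\tau_\mathrm{TPR}$ may jump. My first attempt to repair this is to compare the FPR levels instead of the thresholds: $\mathrm{FPR}(\hat t_\alpha)\to\alpha$, and this controls the TPR difference provided $\proba{Y\in[\hat t_\alpha,t_\alpha]\mid A=1}$ is small. That holds whenever the member law puts no mass on intervals where the non-member law puts none, for instance by absolute continuity on the relevant range. If this fails, I would state consistency at continuity points of $\mathrm{TPR}$, or impose continuity of $\mathrm{TPR}$ at $t_\alpha$, as the minimal additional condition.
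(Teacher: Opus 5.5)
For the ATE, $\tau_{\mathrm{TPR}}(t)$, $\tau_{\mathrm{FPR}}(t)$ and AUC parts, your argument follows the paper's route. \Cref{prop:n_run_identification} reduces everything to i.i.d.\ two-sample estimation. The paper then only cites difference-in-means (Hoeffding) bounds and U-statistic concentration, and your McDiarmid argument with bounded differences $1/n_1$, $1/n_0$ is a fine substitute for the latter.

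For the TPR at fixed FPR, the paper gives no argument beyond the continuity hypothesis. You are right that this hypothesis is insufficient, and in fact that part of the statement is false as written. Any pair of score laws can be realized in \Cref{alg:n}, using a randomized $\cA$ whose output law depends on whether the extra point is present. Take non-member scores $\mathrm{Unif}[0,1]$, member scores identically $1/2$, and $\alpha=1/2$. Then $t_\alpha=1/2$ and $\tau_{\mathrm{TPR@FPR}}(\alpha)=1$. But $\hat\tau_{\mathrm{TPR@FPR}}(\alpha)=\one_{\{\hat t_\alpha\le 1/2\}}$, which equals $0$ with probability tending to $1/2$.

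Your repair, however, contains a step that fails and a fallback that is wrong.

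\emph{The failing step.} The claim $\hat t_\alpha\to t_\alpha$ is not a matter of ``care'' when $\mathrm{FPR}$ has flat parts; it is false there. Suppose $Q_\alpha=\{t:\mathrm{FPR}(t)=\alpha\}=[a,b]$ with $a<b$. Then $\hat t_\alpha\le a$ exactly when at least $(1-\alpha)n_0$ control scores are $\le a$. That is a binomial event whose probability tends to $1/2$. Otherwise $\hat t_\alpha$ is a control score above $b$, so the empirical quantile keeps jumping between the two ends of the flat stretch.

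\emph{The wrong fallback.} Continuity of the TPR at $t_\alpha$ is therefore neither minimal nor sufficient. Take non-member scores uniform on $[0,1]\cup[2,3]$, member scores uniform on $[1,2]$, and $\alpha=1/2$. The TPR is continuous everywhere, yet the estimator oscillates between about $1$ and about $0$.

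\emph{The right condition.} The correct extra hypothesis is $\proba{Y_i\in Q_\alpha\mid A_i=1}=0$. By the same binomial argument it is necessary and sufficient for $\alpha\in(0,1)$. Your absolute-continuity branch implies it, so that branch is sound. It reduces to ``no member atom at $t_\alpha$'' only when the quantile is unique.

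\emph{The clean argument.} Under this condition you do not need $\hat t_\alpha$ to converge at all. Let $G(s)=\proba{Y_i\geq s\mid A_i=1}$. Since $\mathrm{FPR}(a-\varepsilon)>\alpha>\mathrm{FPR}(b+\varepsilon)$, DKW on the controls gives $\hat t_\alpha\in(a-\varepsilon,b+\varepsilon]$ with high probability. By monotonicity, $G(b+\varepsilon)\le G(\hat t_\alpha)\le G(a-\varepsilon)$. As $\varepsilon\to0$, both bounds tend to $\proba{Y_i>b\mid A_i=1}=\proba{Y_i\geq a\mid A_i=1}=\tau_{\mathrm{TPR@FPR}}(\alpha)$. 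Your first (DKW) bracket then completes the proof.
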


Moving beyond consistency, non-asymptotic rates can be obtained directly from the difference in means estimator properties \citep[e.g.,]{wager2024causal}. For $\hat\tau_\mathrm{TPR}(t)$, with probability $1-4e^{-\lambda}$,
\begin{equation*}
    |\hat\tau_\mathrm{TPR}(t)-\tau_\mathrm{TPR}(t)| \leq \sqrt{\frac{\lambda}{2n_1}} + \sqrt{\frac{\lambda}{2n_0}}\,,
\end{equation*}
and similarly for $\tau_\mathrm{TPR}$,
while for the causal membership advantage (ATE) if outcomes $Y_i$ are bounded by a constant $B$:
\begin{equation*}
    |\hat\tau_\mathrm{ATE}-\tau_\mathrm{ATE}| \leq B\sqrt{\frac{2\lambda}{n_1}} + B\sqrt{\frac{2\lambda}{n_0}}\,.
\end{equation*}
Non-asymptotic results for the causal AUC can be obtained via $U$-statistics \citep{pitcan2017note}.

\subsection{One-run setting}

In the one-run setting, the estimators of the different causal evaluation metrics are the same as in the multi-run (see \Cref{eq:estimators_1_run}), at the exception of the causal AUC estimator, which is now the area under the estimated causal ROC curve $\set{(\hat \tau_\mathrm{FPR}(t),\hat \tau_\mathrm{TPR}(t))}$.

\subsection{Zero-run setting}

In the zero-run setting, the estimators of the different causal evaluation metrics are now weighted with propensity scores as follows:
\begin{equation}\label{eq:IPW_estimators}
\begin{aligned}
&\hat \tau_\mathrm{ATE}^\mathrm{(IPW)} = \frac{1}{n_1}\sum_{A_i=1} Y_i- \frac1{n_0} \sum_{A_i=0}\frac{\hat\pi(X_i)}{1-\hat \pi(X_i)} Y_i \,, \\
    &\hat \tau_\mathrm{TPR}^\mathrm{(IPW)}(t) = \frac{1}{n_1}\sum_{A_i=1} \one_{\{Y_i \geq t\}}\\
   &\hat \tau_\mathrm{FPR}^\mathrm{(IPW)}(t) =  \frac1{n_1} \sum_{A_i=0}\frac{\hat\pi(X_i)}{1-\hat \pi(X_i)} \one_{\{Y_i\geq t\}} \, \\
   & \hat \tau_\mathrm{TPR@FPR}^\mathrm{(IPW)}(\alpha) =  \frac{1}{n_1}\sum_{A_i=1} \one_\set{Y_i\geq \hat t_\alpha} \,
\end{aligned}
\end{equation}
with 
\begin{equation*}
    \hat t_\alpha \in \argmax\left\{t~:~\frac1{n_0}\sum_{A_i=1} \frac{\hat\pi(X_i)}{1-\hat \pi(X_i)} \one_{\{Y_i\geq t\}} \geq 1-\alpha \right\}\,,
\end{equation*}
where $\hat\pi:\cX\to[0,1]$ is an estimator of the propensity score function $\pi$, learned on independent data.
The estimator of the causal AUC is still the area under the estimated ROC curve.
Next proposition then generalizes the identification result established in \Cref{prop:0_run_identification} to the other causal evaluation metrics.

\begin{proposition}\label{prop:0_run_identification_formulas}
    Under \Cref{hyp:distribution_shift}, if $(X_i,A_i,Y_i)_{i\in[n]}$ are obtained according to \Cref{alg:zero}, the causal evaluation metrics are identifiable.
    Let the \textit{propensity score} $\pi$ write as:
    \begin{equation*}
        \forall x\in\cX\,,\quad \pi(x)\eqdef \proba{A_i=1|X_i=x}\,.
    \end{equation*}
    We have:
    \begin{equation*}
        \begin{aligned}
            \tau_\mathrm{ATE} &= \esp{A_iY_i}\proba{A_i=1}^{-1}\\
            &\quad- \esp{ \frac{\pi(X_i)(1-A_i)}{1-\pi(X_i)} Y_i}\proba{A_i=1}^{-1}\,,\\
            \tau_\mathrm{TPR}(t) &= \esp{A_i\one_\set{Y_i\geq t}}\proba{A_i=1}^{-1}\\
            \tau_\mathrm{FPR}(t) &= \esp{ \frac{\pi(X_i)(1-A_i)}{1-\pi(X_i)} \one_\set{Y_i\geq t}}\proba{A_i=1}^{-1}\,.
        \end{aligned}
    \end{equation*}
\end{proposition}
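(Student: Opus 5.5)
The plan is to reduce every formula in \Cref{prop:0_run_identification_formulas} to a single identity of the form
\[
\E_{\cP_T}[f(Y_i(a))]
\]
for a bounded measurable $f$. The metrics follow by plugging in specific $f$:
\begin{itemize}
\item $f(y)=y$ gives the ATE;
\item $f(y)=\one_\set{y\geq t}$ gives $\tau_\mathrm{TPR}(t)$ when $a=1$ and $\tau_\mathrm{FPR}(t)$ when $a=0$.
\end{itemize}
First I will check that \Cref{hyp:observational_interferences} holds in \Cref{alg:zero} under \Cref{hyp:distribution_shift}.
\begin{itemize}
\item \emph{Interference} holds by construction, since all outcomes are computed from the single model $\theta$.
\item \emph{Conditional generalized unconfoundedness} holds because, once $\bar X$ is fixed, $\bar A$ is determined by the membership labels of $\cD_T,\cD_0$ and carries no extra information about the potential outcomes.
\item \emph{Overlap} follows from the finiteness of $\dd\cP_T/\dd\cP_0$ on $\supp(\cP_T)$, which gives $\pi(x)<1$ there.
\end{itemize}

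The treated term is immediate. Under \Cref{hyp:distribution_shift} we have $\cP_{X_i|A_i=1}=\cP_T$, so
\[
\E_{\cP_T}[f(Y_i(1))]=\esp{f(Y_i)\mid A_i=1}=\esp{A_if(Y_i)}/\proba{A_i=1}.
\]
Taking $f=\mathrm{id}$ and $f=\one_\set{\cdot\geq t}$ yields the first part of the ATE formula and the TPR formula.

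For the control term I will repeat the chain in the proof of \Cref{prop:ATE_identification_app} with $Y_i$ replaced by $f(Y_i)$. The steps are:
\begin{enumerate}
\item Condition on $\bar X$.
\item Use unconfoundedness to insert $A_i=0$.
\item Use the interference assumption to replace $f(Y_i(0))$ by $f(Y_i)$.
\end{enumerate}
This gives $\E_{\cP_T}[f(Y_i(0))]=\E_{\cP_T}[g_f(X_i)]$ with
\[
g_f(x)=\esp{(1-A_i)f(Y_i)\mid X_i=x}/(1-\pi(x)),
\]
which is well defined thanks to overlap. Finally I change measure from $\cP_T$ to the pooled marginal $\cP_{X_i}$ using $\dd\cP_T/\dd\cP_{X_i}=\pi/\proba{A_i=1}$, a Bayes-rule identity that holds because $\cP_T=\cP_{X_i|A_i=1}$. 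This yields
\[
\E_{\cP_T}[f(Y_i(0))]=\esp{\tfrac{\pi(X_i)(1-A_i)}{1-\pi(X_i)}f(Y_i)}\proba{A_i=1}^{-1}.
\]
Taking $f=\mathrm{id}$ gives the second part of the ATE formula (note the normalization is $\proba{A_i=1}^{-1}$, matching the statement), and taking the indicator gives the FPR formula.

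\textbf{Identifiability of the remaining metrics.} The causal AUC and the causal TPR at fixed FPR are functionals of the two identified curves $t\mapsto\tau_\mathrm{TPR}(t)$ and $t\mapsto\tau_\mathrm{FPR}(t)$: the area under the curve, and the quantile $q_\alpha$ of $Y_i(0)$ under $\cP_T$, obtained by generalized inversion of $\tau_\mathrm{FPR}$. Both are therefore identified.

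The main obstacle is the unconfoundedness step. In the zero-run regime $\bar A$ is a deterministic function of which proxy set a point was drawn from, so I must argue carefully that, conditional on $\bar X$, the source labels are independent of the potential outcomes, which involve $\theta$ trained on the fixed $\dtrain$. I would handle this by treating the pooled sample as a mixture, $X_i\mid A_i\sim\cP_{A_i}$, with $\theta$ independent of the proxy draws given $\dtrain$.
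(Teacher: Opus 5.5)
Your proposal is correct and follows essentially the paper's route. The paper asserts that \Cref{hyp:observational_interferences} holds in the zero-run design and proves the ATE formula (\Cref{prop:ATE_identification_app}) using the same steps you use: the treated-term computation, the unconfoundedness/interference chain for the control term, and the change of measure $\dd\cP_T/\dd\cP_{X_i}=\pi/\proba{A_i=1}$; it then leaves the other metrics to ``similar arguments,'' which your generic-$f$ formulation and functional argument for the causal AUC and TPR at fixed FPR make explicit. Your normalization remark is also right: the $\proba{A_i=0}$ displayed in \Cref{prop:ATE_identification_app} is a typo, since its own proof and the statement you are proving both give $\proba{A_i=1}^{-1}$.
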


\section{G-Formula and AIPW beyond the ATE}
\label{app:g-form_AIPW}

Letting $\hat\mu_0$ be a regression model that estimates conditional MIA response $\mu_0:x\in\cX\mapsto \esp{Y_i(0)|X_i=x}$, the G-Formula estimator of the ATE (which is in fact an ATT) is defined in \Cref{eq:G_formula_AIPW_ATE}.
Outcome regression model $\hat\mu_0$ is typically learned on an independent dataset or using crossfitting, by fitting a model $Y_i\sim X_i$ on non-member datapoints (satisfying $A_i=0$).
We now generalize this approach to the causal TPR and FPR, in order to obtain a causal ROC curve and a causal AUC, without having to resort to propensity scores.

\paragraph{TPR.}
For the causal TPR, the G-Formula estimator $\hat\tau^{(\rm G)}_\mathrm{TPR}(t)$ is unchanged (same as $\hat\tau_\mathrm{TPR}^{(\rm IPW)}(t)$ in \Cref{eq:IPW_estimators}). The same for the AIPW estimator.

\paragraph{FPR.}
For the causal TPR, the G-Formula estimator $\hat\tau^{(\rm G)}_\mathrm{FPR}(t)$ requires to learn a family of regressors $\set{\hat\mu_{0,t},t\in\R}$ where $\hat\mu_{0,t}$ is an estimator of:
\begin{equation*}
    \mu_{0,t}:x\in\cX\mapsto \proba{Y_i(0)\geq t|X_i=x}\,.
\end{equation*}
Since $\mu_{0,t}(x)=\proba{Y_i\geq t|X_i=x,A_i=0}$, $\hat\mu_{0,t}$ can be learned by using a classifier and outputting conditional probabilities, that we fit on the non-members $\set{(X_i,\one_\set{Y_i\geq t})|A_i=0}$. This needs to be done for all $t\in\R$ or for a discretized set.
The causal FPR at threshold $t$ can the be estimated using:
\begin{equation*}
    \hat\tau_\mathrm{FPR}^{(\rm G)}(t)=\frac{1}{n_1}\sum_{A_i=1}\hat\mu_{0,t}(X_i)\,.
\end{equation*}
Another approach would be to use \textit{distributional regression}, which is typical of comparison-based estimands in causal inference \citep{even2025winratio}. An estimator $\hat\P_0(x,\cdot)$ of the probability distribution $\P_{Y|X_i=x,A_i=0}$ could be learn, where $\hat \P(x,t)$ is an estimator of $\proba{Y_i\geq t|X_i=x}$.

Finally, the AIPW estimator of the causal FPR writes as:
\begin{equation*}
    \hat\tau_\mathrm{FPR}^\mathrm{(AIPW)} (t) = \frac{1}{n_1}\sum_{i=1}^n\Big[A_i\hat\mu_{0,t}(X_i) +\frac{(1-A_i)\hat\pi(X_i)}{1-\hat\pi(X_i)}\left( \one_\set{Y_i\geq t}-\hat\mu_{0,t}(X_i) \right)\Big]
\end{equation*}

\section{Proofs in the 1-Run Randomized setting}

In all the proofs, everything is done conditionally on the initial data $\cD$, and we leave the conditioning with respect to $\cD$ implicit. For a vector $\bar a \in \{0,1\}^n$, we let $n_1(\bar a) = \#\{i~:~a_i = 1\}$ and similarly for $n_0(\bar a)$. In particular $n_1(\bar A) = n_1$. We'll use the fact that, on an event of probability at least $1-4e^{-t}$, it holds 
    $$
    \left|n_a-\frac{n}{2}\right| \leq \sqrt{\frac{tn}{2}} \quad \text{for} \quad a\in\{0,1\}. 
    $$
This is a straightforward application of Hoeffding's inequality. In particular, for $t \leq n/8$, it holds on the same event $
\frac{n}{4} \leq n_a \leq \frac{3n}{4}$ for both $a\in \{0,1\}$.

\subsection{Proof of \Cref{thm:consistency_1_run}}

\subsubsection{Proof with perfect interpolation}

\begin{proof}
    First, note that thanks to the interpolation assumption, 
    \begin{equation*}
        \frac{1}{n_1}\sum_{i:A_i=1 }Y_i=0
        =\esp{\esp{Y_i|A_i=1,X_i}}\,.
    \end{equation*}
    We thus need to work only with $\frac{1}{n_0}\sum_{i:A_i=0 }Y_i$.
    We let $a \in \{0,1\}^n$ with $n_0(\bar a) \neq 0$, and define, letting $\overline X_{\bar a} := \{X_j\}_{\{j:~a_j=1\}}$,
    $$
    \mu_{\bar a}^{(0)}(\overline X_{\bar a}) := \esp{Y_i|  \bar A= \bar a, \overline X_{\bar a}},
    $$
    for any $i$ such that $a_i=0$ (we also integrate wrt the randomness of $\cA$ in the expectation, after conditioning on $\cA$ to apply Hoeffding). 
    Thanks to Hoeffding's inequality, it holds
    \begin{align*}
        \proba{\left|\frac{1}{n_0(\bar a)}\sum_{i:a_i=0 }Y_i - \mu_{\bar a}^{(0)}(\overline X_{\bar a})\right| > \sqrt{\frac{t}{2n_0(\bar a)}} ~\middle|~ \overline A = \overline a, \overline X_{\bar a}}\leq 2e^{-t}\,.
    \end{align*}
    Now for two adjacent datasets $\overline X_{\bar a}$ and $\overline X_{\bar a}'$, it holds:  
    \begin{align*}
        &|\mu_{\bar a}^{(0)}(\overline X_{\bar a})-\mu_{\bar a}^{(0)}(\overline X'_{\bar a})|=|\E_\cA\E_{\cP_T} [\cL(\cA(\cD \cup \overline X_{\bar a}), X)]-  \E_\cA\E_{\cP_T} [\cL(\cA(\cD \cup \overline X'_{\bar a}), X)]| \leq \alpha,
    \end{align*}
   thanks to the average stability assumption (which implicitly encompasses adjacency for \textit{Replace} and \textit{Add/Remove}).
    Thus, using McDiarmid's inequality, it holds 
    \begin{equation*}
        \proba{|\mu_{\bar a}^{(0)}(\overline X_{\bar a})-\E[Y_i|\overline A=\bar a]| > \sqrt{\frac{n_1(\bar a) t \alpha^2}{2}} ~\middle|~ \overline A = \bar a}\leq 2e^{-t}.
    \end{equation*}
    Thus
    \begin{equation*}
        \proba{|\hat \tau_{\rm ATE}-\E[Y_i|\overline A = \bar a]| \leq \sqrt{\frac{2t}{n_0(\bar a)}} + \sqrt{\frac{n_1(\bar a)t\alpha^2}{2}} ~\middle | ~ \overline A = \bar a} \geq 1 - 4e^{-t}.
    \end{equation*}
    We conclude the proof by introducing 
    $$
    g(\bar a_{-i}) := \E[Y_i|A_i=0, \overline A_{-i} = a_{-i}].
    $$
    The previous bounds allow for a control of $|\hat \tau_{\rm ATE}- g(\bar A_{-i})|$. Using again the $\alpha$-stability, one find that for two adjacent vectors $\bar A_{-i}, \bar A_{-i}'$, it holds $|g(\bar A_{-i})-g(\bar A_{-i}')| \leq \alpha$. Using McDiarmid's inequality again, we conclude
    $$
    \proba{|g(\overline A_{-i})-\E[Y_i|A_i=0]| > \sqrt{\frac{(n-1) t \alpha^2}{2}}}\leq 2e^{-t}.
    $$
    Piecing all the inequalities together yields the result.

\end{proof}

\subsubsection{Proof with $\beta$-uniform training stability}

\begin{proof}
    The only difference lies in the way we handle the term 
    $$\frac{1}{n_1}\sum_{i:A_i=1 }Y_i=\frac{1}{n_1}\sum_{i:A_i=1 }\cL(\cA(\dtrain),X_i).
    $$
    Like before, we let $\bar a \in \{0,1\}^n$ with $n_1(\bar a),n_0(\bar a) \neq 0$. We define
    $$
    \mu_{\bar a}^{(1)}(\overline X_{\bar a}) := \frac{1}{n_1(\bar a)}\sum_{i: a_i=1 }\cL(\cA(\cD \cup \overline X_{\bar a}),X_i)
    $$
    Thanks to $\beta$-uniform training stability, it holds that for any adjacent $\overline X_{\bar a}'$: 
    \begin{align*} 
    |\mu_{\bar a}^{(1)}(\overline X_{\bar a})-\mu_{\bar a}^{(1)}(\overline X_{\bar a}')| &\leq \frac{1}{n_1(\bar a)}\sum_{i: a_i=1 } |\cL(\cA_\theta(\cD \cup \overline X_{\bar a}),X_i)-\cL(\cA_\theta(\cD \cup \overline X_{\bar a}'),X'_i)| \\
    &\leq \frac{1}{n_1(\bar a)} + \beta.
    \end{align*}
    Using again McDiarmid's inequality, we find that
    $$
    \proba{\left|\mu_{\bar a}^{(1)}(\overline X_{\bar a})-\E[Y_i|\overline A=\bar a] \right| > \left(\frac{1}{n_1(\bar a)} + \beta\right)\sqrt{\frac{n_1(\bar a) t}{2}} ~\middle| ~\overline A = \bar a} \leq 2e^{-t},
    $$
    where $i$ is such that $a_i=1$. 
    Like in the previous case, we show that, using this time $\beta$-uniform training stability
    $$
    \proba{|\E[Y_i|A_i=1,\overline A_{-i}]-\E[Y_i|A_i=1]| > \sqrt{\frac{(n-1) t \beta^2}{2}}}\leq 2e^{-t}.
    $$
    which allows to conclude.
\end{proof}

\subsection{Proof of \Cref{thm:under_1_run}}

\begin{proof}
    
    Using the same reasoning as in the proof with $0$-uniform training stability, we can handle the term with $A_i=0$. 
    For $\bar a\in \{0,1\}^n$, we let $\theta \in \Theta$ be a generic model parameter, and $\hat \theta_{\bar a}$ be the one obtained after training on $\cD \cup \overline X_{\bar a}$. We define
    $$
    f_\theta(X) := \cL(\theta, X),
    $$
    so that in particular
    $Y_i = f_{\hat \theta_{\bar a}}(X_i)$ on the event $\overline A = \bar a$. Note that $\hat \theta_{\bar a}$ is $\overline X_{\bar a}$-measurable. Because $\theta \mapsto f_{\theta}(x)$ is $L$-Lipschitz for all $x \in \cX$, the empirical process 
    $$
    \theta \mapsto \xi_{\bar a}(\theta) := \frac{1}{n_1(\bar a)} \sum_{a_i=1} f_\theta(X_i)- \E_{\cP_T}[f_\theta(X)]
    $$
    is $2L$-Lipschitz and centered.
    Letting $\cN_\ve$ be a minimal $\ve$-net over ${\rm B}_{\R^D}(0,B)$, if holds that $|\cN_\ve| \leq (2B/\ve)^D$, hence
    $$
    \proba{\sup_{\theta\in\Theta}|\xi_{\bar a}(\theta)| > s ~\middle |~ \overline A = \bar a} \leq 2 \left(\frac{2B}{\ve}\right)^D \exp\left(-\frac{n_1(\bar a)(s-2L\ve)^2}{2}\right).
    $$
    Letting 
    $$
    s = \sqrt{\frac{2\lambda_t}{n_1(\bar a)}}+2L\ve \quad \text{and} \quad \ve = \frac{1}{2L}\sqrt{\frac{2\lambda_t}{n_1(\bar a)}} \quad \text{with} \quad \lambda_t := t + \frac12D \log n_1(\bar a) + D \log(4BL),
    $$
    one finds
    $$
\proba{\left|\frac{1}{n_1(\bar a)} \sum_{a_i=1} Y_i - \E_{\cP_T}[f_{\hat \theta_{\bar a}}(X)|\overline X_{\bar a}, \overline A= \bar a]\right| > 2 \sqrt{\frac{2\lambda_t}{n_1(\bar a)}} ~\middle |~ \overline X_{\bar a}, \overline A = \bar a} \leq 2e^{-t}.
$$
   Now notice that $\E_{\cP_T}[f_{\hat \theta_{\bar a}}(X)|\overline X_{\bar a}, \overline A= \bar a] = \mu_{\bar a}^{(0)}(\overline X_{\bar a})$. Using a previous bound, we show that 
   $$
   \proba{\left|\frac{1}{n_1(\bar a)} \sum_{a_i=1} Y_i - \frac{1}{n_0(\bar a)} \sum_{a_i=0} Y_i \right| > 2 \sqrt{\frac{2\lambda_t}{n_1(\bar a)}} + \sqrt{\frac{t}{2n_{0}(\bar a)}}~\middle |~ \overline X_{\bar a}, \overline A = \bar a} \leq 4e^{-t}.
   $$
   For this we conclude easily, since $|\hat \tau_{\rm ATE}|$ is upper-bounded by $1$:
   $$
   |\tau_{\rm ATE}| = |\E[\hat \tau_{\rm ATE}]| \leq \E[|\hat \tau_{\rm ATE}|] \leq \ve + \P(|\hat \tau_{\rm ATE}|> \ve), \quad \text{for all}~\ve >0.
   $$
\end{proof}

\section{Proofs in the observational setting}

\begin{lemma}
\label{lem:err_pi_tau}
    Let 
    \begin{equation*}
        \bar\tau_{\rm ATE} := \frac{1}{n_1}\sum_{A_i=1} Y_i-\frac{1}{n_0}\sum_{A_i=0}\frac{\pi(X_i)}{1-\pi(X_i)} Y_i\,.
    \end{equation*}
    It holds, with probability $1-2e^{-t}$:
    \begin{equation*}
    |\hat\tau_{\rm ATE}-\bar\tau_{\rm ATE}|\preceq \Delta_{\hat \pi} + \frac{1}{\eta}\sqrt{\frac{t}{n}}\,.
    \end{equation*}

\end{lemma}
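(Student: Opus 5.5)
The difference $\hat\tau_{\rm ATE}-\bar\tau_{\rm ATE}$ involves only the non-member terms, since the member averages coincide. Writing $w(x)=\frac{\pi(x)}{1-\pi(x)}$ and $\hat w(x)=\frac{\hat\pi(x)}{1-\hat\pi(x)}$, we have
\begin{equation*}
\hat\tau_{\rm ATE}-\bar\tau_{\rm ATE} = \frac{1}{n_0}\sum_{A_i=0}\big(w(X_i)-\hat w(X_i)\big)Y_i .
\end{equation*}
(The normalization is $1/n_0$ in both places, following the lemma's statement; the same argument works with $1/n_1$ up to the ratio $n_0/n_1$, which is bounded on the Hoeffding event used in the one-run proofs.) Since $Y_i\in[0,1]$, the absolute value is at most $\frac{1}{n_0}\sum_{A_i=0}|w(X_i)-\hat w(X_i)|$. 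The plan is to compare this empirical average with its mean $\Delta_{\hat\pi}$ and control the fluctuation by a concentration inequality.

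First, I would condition on $\hat\pi$, which is allowed because $\hat\pi$ is assumed independent of the evaluation sample. I would also condition on the sizes $(n_0,n_1)$: in the zero-run regime these are fixed by $\cD_T,\cD_0$. By \Cref{hyp:distribution_shift}, the $X_i$ with $A_i=0$ are i.i.d.\ from $\cP_0$. Hence the summands $Z_i=|w(X_i)-\hat w(X_i)|$ are i.i.d.\ with mean exactly $\Delta_{\hat\pi}$.

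The key point is boundedness, which comes from \Cref{hyp:overlap}. The bound $\pi,\hat\pi\le 1-\eta$ gives $w,\hat w\le \frac{1-\eta}{\eta}\le 1/\eta$, so $0\le Z_i\le 1/\eta$. I expect this to be the main obstacle. \Cref{hyp:overlap} only controls $\pi,\hat\pi$ on $\supp(\cP_T)$, while the $X_i$ here come from $\cP_0$. The gap can be closed in two ways. Outside $\supp(\cP_T)$ we have $\pi=0$ up to null sets, so $w=0$; and $\hat\pi$ can be clipped to $[0,1-\eta]$ without loss. Alternatively, one can read the assumption as holding $\cP_0$-a.s.

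Hoeffding's inequality then gives, with probability $1-2e^{-t}$,
\begin{equation*}
\Big|\frac{1}{n_0}\sum_{A_i=0}Z_i-\Delta_{\hat\pi}\Big|\le \frac{1}{\eta}\sqrt{\frac{t}{2n_0}} .
\end{equation*}
Combined with the triangle inequality, this yields $|\hat\tau_{\rm ATE}-\bar\tau_{\rm ATE}|\le \Delta_{\hat\pi}+\frac1\eta\sqrt{t/(2n_0)}$. If $n_0$ is a constant fraction of $n$, this is $\preceq \Delta_{\hat\pi}+\frac1\eta\sqrt{t/n}$, as claimed. Otherwise, one intersects with the Hoeffding event on $n_0$ recalled at the start of the one-run proofs, at the cost of a constant in the probability.
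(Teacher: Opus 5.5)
Your proposal is correct and follows essentially the same route as the paper's proof. Both reduce the difference to the empirical mean of $\bigl|\frac{\pi}{1-\pi}-\frac{\hat\pi}{1-\hat\pi}\bigr|$ over non-members, split off its expectation $\Delta_{\hat\pi}$, and control the fluctuation by Hoeffding for variables in $[0,1/\eta]$, using that $n_0$ is a constant fraction of $n$. You also make explicit two points the paper leaves implicit: the conditioning on $\hat\pi$, and the fact that \Cref{hyp:overlap} only bounds $\hat\pi$ on $\supp(\cP_T)$ while non-members are drawn from $\cP_0$.
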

\begin{proof}
    We have:
    \begin{align*}
        |\hat\tau_{\rm ATE}-\bar\tau_{\rm ATE}| &= \left|\frac{1}{n_0}\sum_{A_i=0} \left\{
        \frac{\hat\pi(X_i)}{1-\hat\pi(X_i)}-\frac{\pi(X_i)}{1-\pi(X_i)}
        \right\}Y_i\right|\\
        &\leq
        \Delta_{\hat\pi}+ \underbrace{\frac{1}{n_0}\sum_{A_i=0}\left|
        \frac{\hat\pi(X_i)}{1-\hat\pi(X_i)}-\frac{\pi(X_i)}{1-\pi(X_i)}
        \right|-\Delta_{\hat \pi}}_{:= \xi}\,.
    \end{align*}
    Now like in the previous section, we control $\xi$ by conditioning on $\bar A= \bar a$, applying Höeffding's inequality for rvs in $[0,C/\eta]$, and finally use the fact that $n_0 \geq n/4$ w.h.p.
\end{proof}

All the proofs in the observational setting mimic the structures of those of the previous section. Thanks to \Cref{lem:err_pi_tau}, one only needs to bound $|\bar\tau_{\rm ATE}-\tau_{\rm ATE}|$. To ease the notation, we introduce $\zeta(X_i) := \frac{\pi(X_i)}{1-\pi(X_i)}$. As the term involving the $Y_i$'s for $A_i=1$ is the same as in the previous section, we only focus on bounding 
$$
\frac1{n_0} \sum_{A_i=0} \zeta(X_i) Y_i - \E[\zeta(X_i)Y_i|A_i=0].
$$
The bound is the same for each settings and the proof goes as follows. We let again $a \in \{0,1\}^n$ with $n_0(\bar a) \neq 0$, and define, letting $\overline X_{\bar a} := \{X_j\}_{\{j:~a_j=1\}}$,
    $$
    \mu_{\bar a}^{(0)}(\overline X_{\bar a}) := \esp{\zeta(X_i)Y_i|  \bar A= \bar a, \overline X_{\bar a}},
    $$
    for any $i$ such that $a_i=0$. 
    Thanks to Hoeffding's inequality, it holds
    \begin{align*}
        \proba{\left|\frac{1}{n_0(\bar a)}\sum_{i:a_i=0 }\zeta(X_i)Y_i - \mu_{\bar a}^{(0)}(\overline X_{\bar a})\right| > \sqrt{\frac{t}{2n_0(\bar a)\eta^2}} ~\middle|~ \overline A = \overline a, \overline X_{\bar a}}\leq 2e^{-t}\,.
    \end{align*}
    Now for two adjacent datasets $\overline X_{\bar a}$ and $\overline X_{\bar a}'$, it holds:  
    \begin{align*}
        |\mu_{\bar a}^{(0)}(\overline X_{\bar a})-\mu_{\bar a}^{(0)}(\overline X'_{\bar a})|&=|\E_\cA\E_{\cP_0} [\zeta(X)\cL(\cA(\cD \cup \overline X_{\bar a}), X)]-  \E_\cA\E_{\cP_0} [\zeta(X)\cL(\cA(\cD \cup \overline X'_{\bar a}), X)]| \\
        &= |\E_\cA\E_{\cP_T} [\cL(\cA(\cD \cup \overline X_{\bar a}), X)]-  \E_\cA\E_{\cP_T} [\cL(\cA(\cD \cup \overline X'_{\bar a}), X)]|\\
        &\leq \alpha,
    \end{align*}
   thanks to the average stability assumption.
    Thus, using McDiarmid's inequality, it holds 
    \begin{equation*}
        \proba{|\mu_{\bar a}^{(0)}(\overline X_{\bar a})-\E[\zeta(X_i)Y_i|\overline A=\bar a]| > \sqrt{\frac{n_1(\bar a) t \alpha^2}{2}} ~\middle|~ \overline A = \bar a}\leq 2e^{-t}.
    \end{equation*}
    Thus
    \begin{equation*}
        \proba{|\bar \tau_{\rm ATE}-\E[\zeta(X_i)Y_i|\overline A = \bar a]| \leq \sqrt{\frac{2t}{n_0(\bar a)\eta^2}} + \sqrt{\frac{n_1(\bar a)t\alpha^2}{2}} ~\middle | ~ \overline A = \bar a} \geq 1 - 4e^{-t}.
    \end{equation*}
    We conclude the proof by introducing 
    $$
    g(\bar a_{-i}) := \E[\zeta(X_i)Y_i|A_i=0, \overline A_{-i} = a_{-i}].
    $$
    The previous bounds allow for a control of $|\bar \tau_{\rm ATE}- g(\bar A_{-i})|$. Using again the $\alpha$-stability, one find that for two adjacent vectors $\bar A_{-i}, \bar A_{-i}'$, it holds $|g(\bar A_{-i})-g(\bar A_{-i}')| \leq \alpha$. Using McDiarmid's inequality again, we conclude
    $$
    \proba{|g(\overline A_{-i})-\E[\zeta(X_i)Y_i|A_i=0]| > \sqrt{\frac{(n-1) t \alpha^2}{2}}}\leq 2e^{-t}.
    $$
    Piecing all the inequalities together yields the result.

\section{Additional Experimental Analysis}
\label{app:exp_analysis}

\subsection{Additional details on synthetic data experiment}
\label{app:synthetic_exps}

\paragraph{Data generation process.}
Training, member and non-member data is generated as follows.
Training datapoints $X_i=(a_i,b_i)$ are generated with $a_i\sim\cN(0,I_d)$ and $b_i\sim\cN(a_i^\top w_\star,1)|a_i$, for some random fixed unitary vector $w_\star\in\R^d$. This distribution corresponds to $\cP_T$.
Non-member datapoints taken from the same distribution (no distribution shift), or taken with $a_i\sim\cN(\mu,I_d)$ for some $\mu\in\R^d$, depending on the setting (inducing distribution shift between members and non-members). This shifted distribution corresponds to $\cP_0$.
We consider linear regression with the square loss on such data, and consider the loss-based MIA (that thresholds loss values).

\paragraph{Simulation parameters.}
Data simulation parameters are as follows:
\begin{enumerate}
    \item Dimension $d$, which is not the same in the two training algorithms considered next;
    \item Training dataset of size 2000, made of samples from $\cP_T$;
    \item $\mu$ is a random unitary vector;
    \item $w_\star$ is a random standard Gaussian Vector conditioned on having $w_\star^\top \mu/\NRM{\mu} =0.90$ (correlation between $w_\star$ and $\mu$).
\end{enumerate}

\paragraph{Training algorithms $\cA$.}
We consider the two different algorithms to attack for the MIA:
\begin{enumerate}
    \item 
    We first consider a setting where our assumptions hold and for which MIAs should have good-to-moderate performances: overparametrized ridge regression, with $\lambda=10^3$ for regularization parameter, with an ambiant dimension $d=2500$. \Cref{def:error_stability,def:delta_interpolation} will hold in this setting due to uniform stability. 
    \item We then consider a setting for which MIAs should not be able to detect any signal and for which our assumptions hold.
    We thus consider a gradient-based optimizer that adds Gaussian noise at each step and clips gradient of norm larger than 1, therefore limiting sensitivity of each stochastic gradient batch. We add $\ell^2$ regularization with $\lambda=10$, and $d=400$ in this setting.
    The optimizer is DP-SGD, with Gaussian noise $\cN(0,3I_d)$, learning rate of 0.01, 75 epochs and batchsize of 128.
    This corresponds to DP parameters $(\eps,\delta)=(0.602,0.01)$. Stability will hold in this setting, while interpolation will not (due to DP-SGD noise). $\beta$-uniform training stability will hold, as a consequence of uniform stability.
\end{enumerate}

\paragraph{MIA evaluation settings.}
We consider the three different MIA evaluation settings, corresponding to \Cref{alg:n,alg:one,alg:zero}.
\begin{enumerate}
    \item \textit{Multi Run setting}: 400 training of each algorithm to attack. For each run, the training dataset is made of 2000 samples from $\cP_T$ (that stay fixed accross all runs), together with another independent sample from $\cP_T$ for the first 200 runs (members) and none for the last 200 runs. 200 non-members datapoints are then sampled from $\cP_T$.
    \item \textit{$1-$Run setting}, with only one training run. 2000 samples are drawn from $\cP_T$, forming the training set on which the training algorithm is runned. These 2000 training points are the member dataset. 2000 non-members are then independently sampled from the same distribution $\cP_T$;
    \item \textit{$0-$Run setting}: training set and members are the same as in the $1-$Run setting. The 2000 non-members are here sampled from thethe shifted distribution $\cP_0$.
\end{enumerate}
In the multi-run setting there are 200 members and 200 non-members, while in the one-run and zero-run settings there are 2000 of each.

\paragraph{Results.}
Results for all metrics are reported in \Cref{tab:results_synthetic}, ROC curves in \Cref{fig:ROC_curves_merged} (A-B).
\textit{Raw} metrics mean that they were not adjusted by propensity scores while \textit{Corrected} means that they were (if not specified, correction with true propensity scores, \textit{Learned}: correction with propensity scores estimated via logistic regression).
Raw metrics in the 1-Run and 0-Run settings are thus the causal metrics, as well as the corrected ones in the zero-run setting. The raw metrics in the zero-run setting are \textit{not} causal metrics, and our experiments highlight the bias induced by not taking into account distribution shifts.
Confidence intervals are made over 100 runs of independent samples from the same data-generating process.

\paragraph{Interpretation.}
In both settings (DP-SGD, Ridge), \Cref{fig:ROC_curves_merged} (A-B) shows the ROC curves of the Multi-Run and One-Run settings, that don't have any distribution shift (resp. blue and orange).
Then, the ROC curves of the Zero-run setting are shown without correction (green) and with propensity score correction (red).
In all these plots, the non-drifted multi-run and one-run ROC curves are essentially the same: interferences are negligible in this case. Due to distribution shift, the zero-run curve shows easier attack surfaces. This is actually a false interpretation since after correction, the corrected zero-run curve matches that of the non-drifted curves.
As expected, MIAs are more performant in the Ridge setting than the DP-SGD setting.

In \Cref{fig:ROC_curves_merged} (B), we plotted the $y=\min(1,e^\eps x +\delta,1-e^{-\eps}(1-\delta-x))$ curve. Under $(\eps,\delta)-$DP, a MIA cannot have a ROC curve that crosses that black line in the multi-run setting \citep{Dong22GDP}. Here, we observe that not adjusting for distribution shift breaks this and yields a ROC curve above this line.

The metrics in \Cref{tab:results_synthetic} show the same results through four different causal metrics introduced in \Cref{sec:mia_perf_causal}.
Correction to target the right causal metrics make the zero-run performances match the non-drifted settings.
For all metrics and in all settings, the IPW correction makes the causal metrics all align together.

\begin{table*}[h!]
\scriptsize
\centering
\begin{tabular}{l l r r r r r r r r r r r}
\toprule
Model & Evaluation & AUC & $\sup_t \tau_\mathrm{ATE}(t)$ & ATE & TPR0.2 \\
\midrule
Ridge & Multi-run (Raw) & $0.561 \pm 0.016$ & $0.102 \pm 0.023$ & $-287.694 \pm 248.990$ & $0.241 \pm 0.027$ \\
Ridge & One-run (Raw) & $0.561 \pm 0.019$ & $0.103 \pm 0.027$ & $-293.668 \pm 189.226$ & $0.243 \pm 0.031$ \\
Ridge & Zero-run (Raw) & $0.669 \pm 0.020$ & $0.269 \pm 0.031$ & $-1049.710 \pm 138.808$ & $0.350 \pm 0.041$ \\
Ridge & Zero-run (Corrected, Oracle) & $0.562 \pm 0.026$ & $0.108 \pm 0.036$ & $-297.327 \pm 151.472$ & $0.201 \pm 0.068$ \\
Ridge & Zero-run (Corrected, Learned) & $0.595 \pm 0.025$ & $0.159 \pm 0.034$ & $-496.127 \pm 106.100$ & $0.228 \pm 0.083$ \\
DP-SGD & Multi-run (Raw) & $0.504 \pm 0.018$ & $0.024 \pm 0.024$ & $0.633 \pm 50.951$ & $0.207 \pm 0.024$ \\
DP-SGD & One-run (Raw) & $0.503 \pm 0.017$ & $0.023 \pm 0.020$ & $-0.949 \pm 46.320$ & $0.205 \pm 0.022$ \\
DP-SGD & Zero-run (Raw) & $0.616 \pm 0.016$ & $0.183 \pm 0.026$ & $-322.243 \pm 41.150$ & $0.299 \pm 0.032$ \\
DP-SGD & Zero-run (Corrected, Oracle) & $0.503 \pm 0.024$ & $0.029 \pm 0.026$ & $-1.985 \pm 68.971$ & $0.165 \pm 0.083$ \\
DP-SGD & Zero-run (Corrected, Learned) & $0.512 \pm 0.059$ & $0.063 \pm 0.084$ & $-23.022 \pm 158.351$ & $0.174 \pm 0.111$ \\
\bottomrule
\end{tabular}
\caption{Causal metrics in the three settings (with oracle or learned propensity scores in the zero-run setting), and non-causal metrics in the zero-run setting to highlight the distribution-shift induced bias.
$\tau_\mathrm{ATE}(t)$ is defined as $\tau_\mathrm{TPR}(t)-\tau_\mathrm{FPR}(t)$ (Youden's J metric at a given threshold), and $\sup_t\tau_\mathrm{ATE}(t)$ is thus the largest gap between (causal) TPR and FPR.
TPR0.2 is the TPR at FPR $\alpha=0.2$.
}
\label{tab:results_synthetic}
\end{table*}

\subsection{Additional details on CIFAR10}

We now present in more details our CIFAR10 experimental setup.

\textbf{Dataset Construction.} We define two base domains: the original clean CIFAR-10 data ($\cD_{\text{clean}}$) and a corrupted domain ($\cD_{\text{noise}}$) generated by adding Gaussian noise with variance $\sigma=0.2$ to the images. To ensure valid overlap, we construct the datasets as follows:
 i) The Members ($\cD_T$) training set is constructed such that $90\%$ of samples are drawn from $\cD_{\text{clean}}$ and $10\%$ from $\cD_{\text{noise}}$ and ii) The non-members ($\cD_0$) are drawn from a distribution of $10\%$ samples from $\cD_{\text{clean}}$ and $90\%$ from $\cD_{\text{noise}}$.
This design creates a covariate shift: members are predominantly clean, while non-members are predominantly noisy, yet it guarantees that the density ratio remains bounded, since every sample has a non-zero probability of appearing in both sets.

\textbf{Target Model.} On this dataset, we train a ResNet-16 \cite{he2016deep} architecture trained on. The model is optimized using Stochastic Gradient Descent (SGD) with a batch size of 128 and a fixed learning rate of 0.01. Training is performed for 10000 optimization steps. To isolate the effects of distribution shift and memorization without additional confounders, we train the model without any data augmentation.

\textbf{Propensity Score Estimation.}
To correct for this distribution shift, we estimate the propensity score $\pi(x) = \P(A=1|X=x)$. We treat this as a binary classification problem, distinguishing between the member distribution $\cD_T$ and the non-member distribution $\cD_0$. We fine-tune a ResNet model \cite{he2016deep} pre-trained on ImageNet \cite{5206848} to minimize the binary cross-entropy loss. We perform this estimation using cross-validation: the propensity model is fitted on a held-out portion of the data mixtures, ensuring that the estimated weights $\hat{\pi}(X_i)$ are independent of the MIA scores $Y_i$ used for the final evaluation.

\textbf{Results.} We compare the performance of Membership Inference Attacks across the three evaluation regimes. \Cref{fig:ROC_curves_merged} (C) displays the ROC curves for the loss-based attack.
The \textit{IID (Test)} curve (blue) represents the idealized setting where $\cD_0$ follows the same distribution as the training set ($\cD_T$). The \textit{Drifted} curve (orange) shows the naive zero-run evaluation where non-members are drawn from the shifted (noisy) distribution. This induces an inflation in attack performance ($\text{AUC} \approx 0.75$), misleadingly suggesting high memorization.
However, this advantage is spurious: it stems from the model's high loss on noisy non-members rather than true privacy leakage. By applying our propensity score correction (green curve), we recover the true causal memorization signal ($\text{AUC} \approx 0.66$), which aligns closely with the valid IID baseline ($\text{AUC} \approx 0.65$). This confirms that our method successfully removes the bias.

\begin{figure}[h]
    \centering
    \begin{subfigure}{0.48\textwidth}
        \centering
        \includegraphics[width=\linewidth]{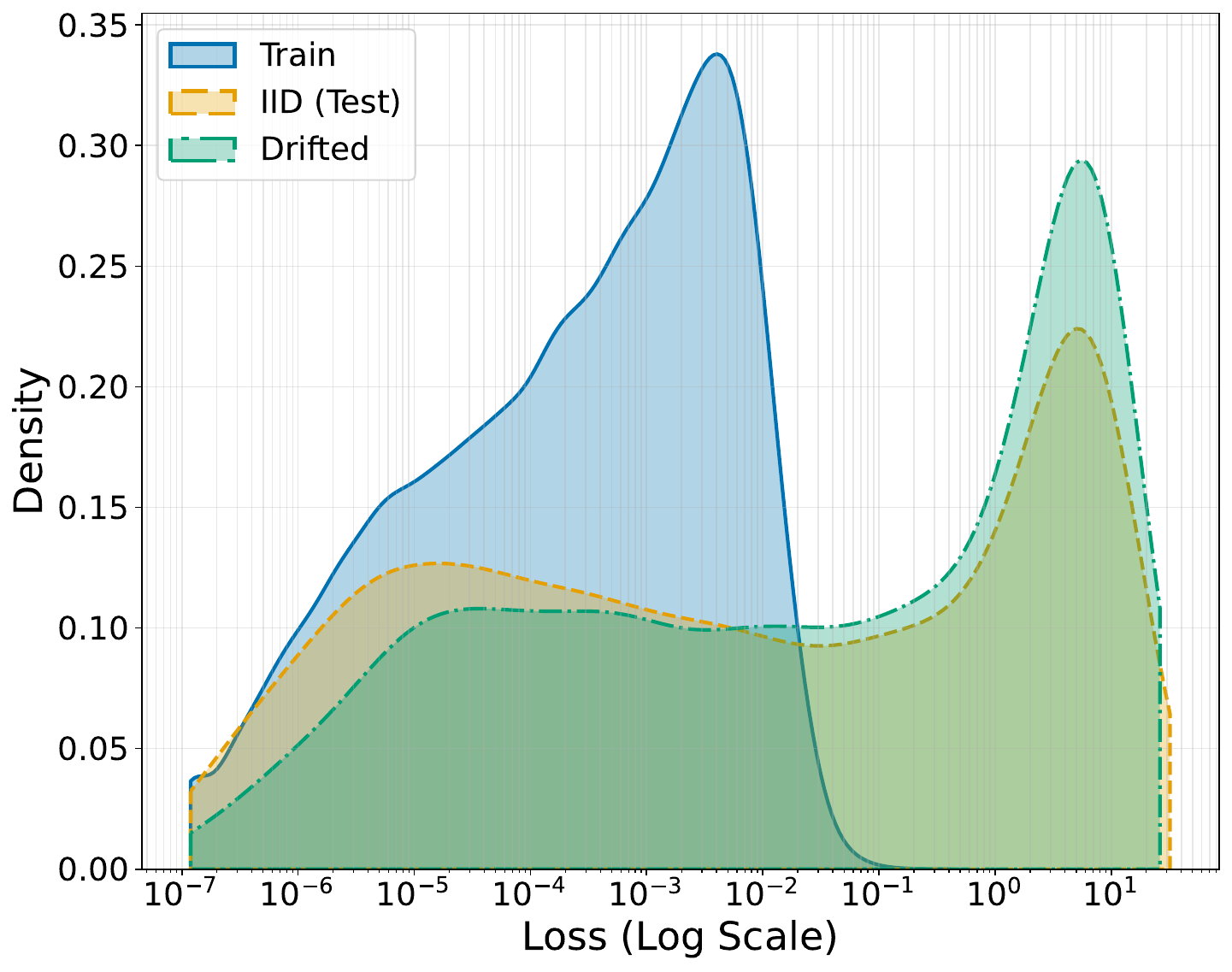}
        \caption{Loss Distributions}
        \label{fig:loss_dist}
    \end{subfigure}
    \hfill
    \begin{subfigure}{0.48\textwidth}
        \centering
        \includegraphics[width=\linewidth]{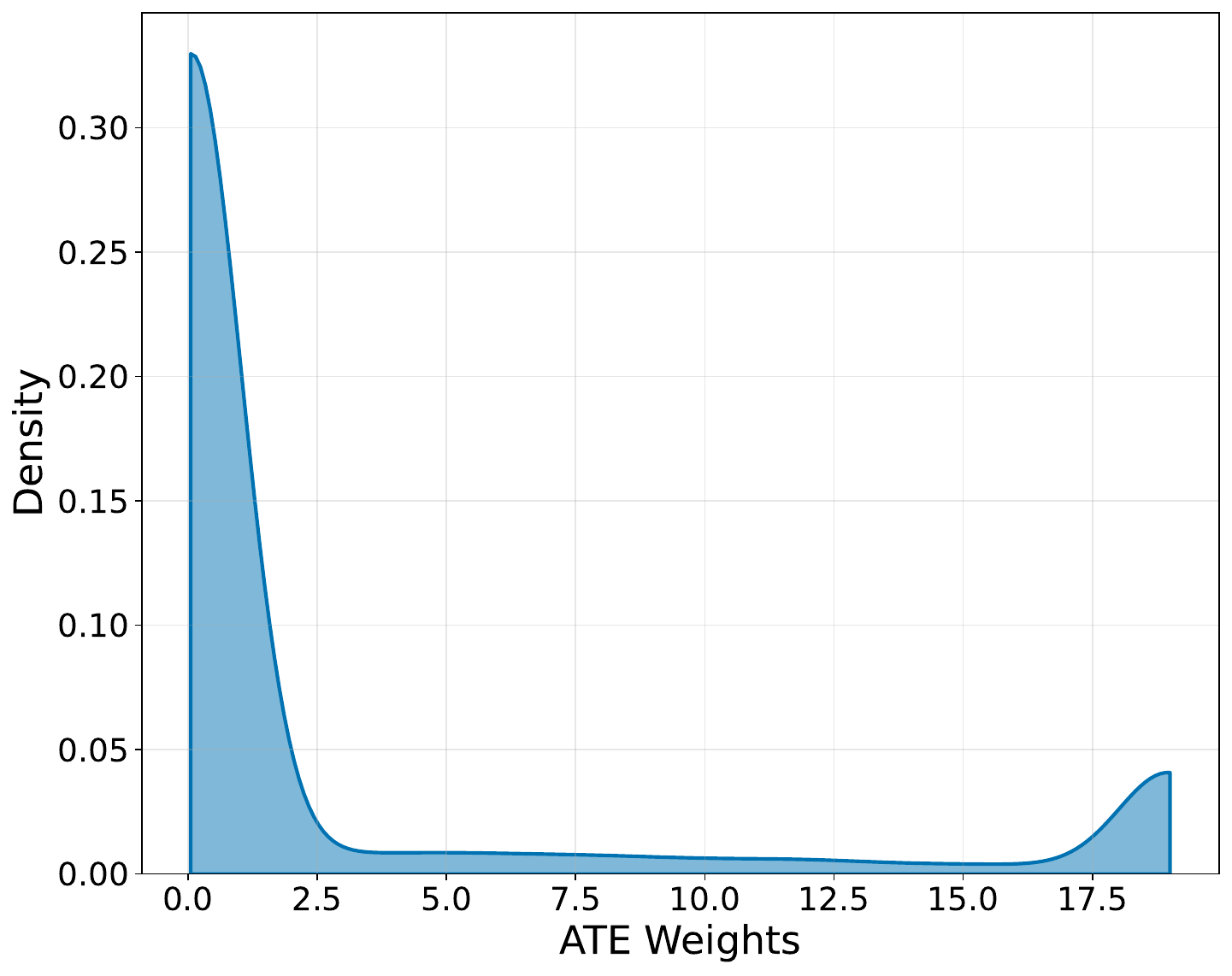}
        \caption{Propensity Weights}
        \label{fig:weights_dist}
    \end{subfigure}
    \caption{\textbf{Additional metrics.} \textbf{(a)} The loss distribution for drifted non-members is shifted, making them more  distinguishable from members without true memorization. \textbf{(b)} The distribution of estimated ATE weights shows how our method corrects this by upweighting the minority of non-members that are distributionally similar to the training set.}
    \label{fig:bias_analysis}
\end{figure}

To better understand the source of bias in the zero-run setting, we analyze the distributions of losses and the estimated propensity weights for the CIFAR10 case. Figure~\ref{fig:loss_dist} visualizes the density of loss values for training members (Train), IID test non-members, and Drifted non-members. The \textit{Drifted} distribution (orange) is shifted significantly to the right (higher loss) compared to the \textit{IID} distribution (blue).
Because standard MIAs predict membership based on low loss, the separation between the low-loss members and high-loss drifted non-members is artificially large, leading to the inflated AUC observed in the main text. The overlap between the Train and IID Test distributions is bigger, reflecting the attack's true performance.

\textbf{Propensity Weights.}
Figure~\ref{fig:weights_dist} shows the distribution of the inverse probability weights (IPW) applied to the non-member samples during our correction. The distribution is heavy-tailed, assigning high weights to the rare non-members that resemble the training distribution (low noise) and low weights to the abundant noisy samples.

\subsection{Additional Details on the Pythia LLM Experiment}
\label{app:pythia}

\begin{figure}
    \centering
    \includegraphics[width=\linewidth]{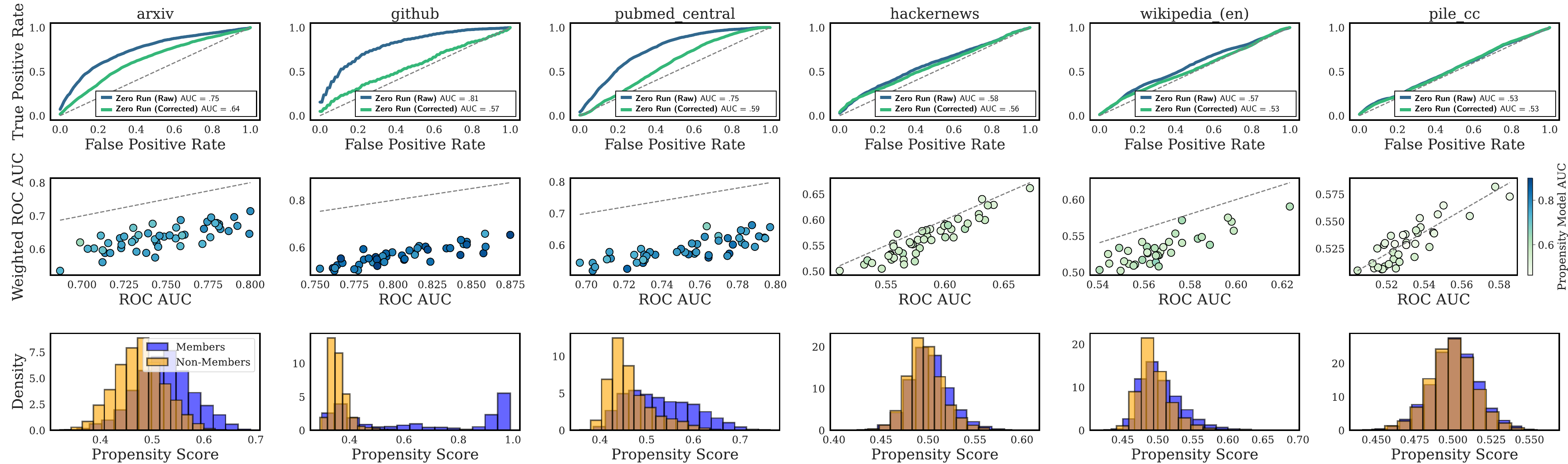}
    \caption{Results of the MIA experiment on the \texttt{Pythia-12b} model. The \textbf{first row} displays the ROC curve for 6 different datasets corresponding to the \textbf{columns}. The \textbf{second row} compares the unweighted (x-axis) to the weighted ROC AUC (y-axis). Every dot represents a different test-train split of the MIA data, the color indicating the AUC of the propensity score model. Lastly, the \textbf{third row} shows the distribution of the propensity score, which is a proxy for distribution shift between members and non-members.}
    \label{fig:enter-label}
\end{figure}

\paragraph{Datasets and Model.} Building on \citet{meeus2024sok}, we consider the Pythia suite of models, specifically the \texttt{12b} parameters model. This model been trained on subsets of the MIMIR dataset with information about members and non-members being openly available. We consider specifically $6$ subsets for which membership information is available (see Table \ref{table:details_mimir} for details on these datasets).

\begin{table}[]
    \centering
\begin{tabular}{ccc}
\toprule
Dataset & Members & Non-members \\
\midrule
arxiv & 1000 & 500 \\
github & 1000 & 268 \\
hackernews & 1000 & 646 \\
pubmed\_central & 1000 & 491 \\
pile\_cc & 1000 & 1000 \\
wikipedia\_(en) & 1000 & 1000 \\
\bottomrule
\end{tabular}
\vspace{5pt}
\caption{Number of members and non-members for every corpus of MIMIR. }
\label{table:details_mimir}
\end{table}

\begin{figure}
    \centering
    \includegraphics[width=\linewidth]{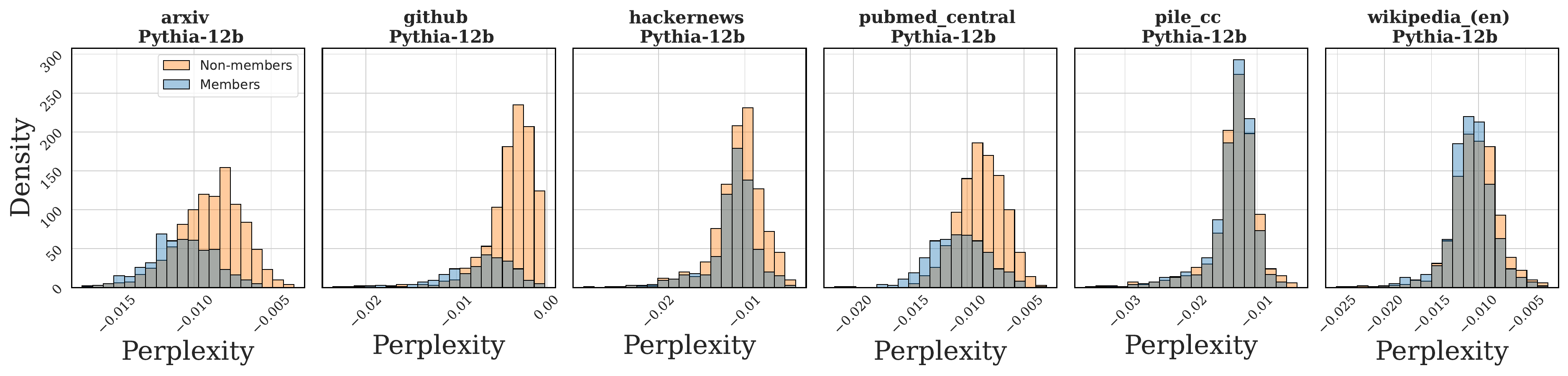}
    \caption{Perplexity for members and non-members for \texttt{Pythia-12b}. }
    \label{fig:enter-label2}
\end{figure}

\begin{figure}
  \begin{center}
    \includegraphics[width=0.5\textwidth]{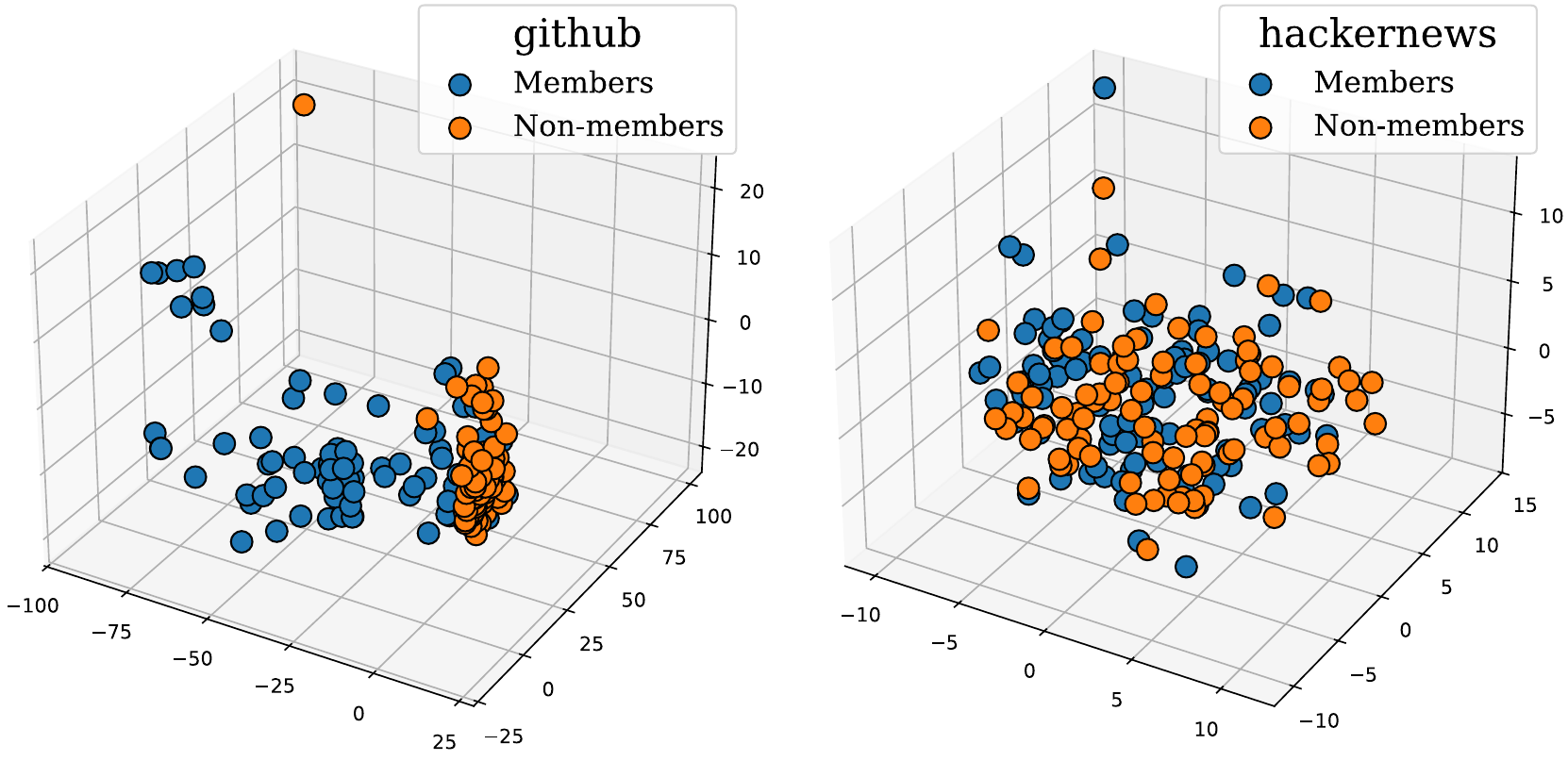}
  \end{center}
  \caption{Mean-pooled embbedings produced by \texttt{Pythia-70m} of elements of \texttt{github} v.s. \texttt{hackernews}.}
  \label{fig:3d_pca}
\end{figure}

\paragraph{Inference of the Propensity Score.} To estimate the propensity score $\pi(\cdot)$ for every text, we follow  \citet{meeus2024sok} and use a bag-of-words tokenizer in combination with a random forest classifier. The classifier is trained on $80\%$ of the data, and we use the remaining $20\%$ for evaluation. Note that we balance all datasets prior to training to have the same number of members and non-members. Propensity scores are clipped to the range $[0.01, 0.99]$ to avoid numerical errors. Exact details of the tokenizer and the propensity score model are given in Table \ref{tab:details_propensity_score}.

\begin{table}[]
    \centering
    \begin{tabular}{ccc}
    \toprule
        \textbf{Parameter} & \textbf{MIMIR} & \textbf{Trump-Biden} \\
        \toprule
        \texttt{Tokenizer} & \texttt{CountVectorizer} & --- \\
        \texttt{min\_df} & $0.01$ & $0.13$\\
        \texttt{max\_df} & $1$ & ---\\
        \texttt{max\_features} & $1000$ & ---\\
        \midrule 
        \texttt{Propensity Score Model} & \texttt{RandomForest}& ---\\
        \texttt{n\_estimators} & $100$ & ---\\
        \texttt{max\_depth} & $5$ & ---\\
        \texttt{min\_samples\_leaf} & $5$ & ---\\
        \bottomrule
    \end{tabular}
    \vspace{5pt}
    \caption{Parameters for propensity score inference on the MIMIR and Trump-Biden datasets. We report two values for parameters only if they differ. }
    \label{tab:details_propensity_score}
\end{table}

\paragraph{Membership attacks.} On the testing split, we query the perplexity of the model $p_i$ for every datapoint $X_i$.
We then compute the ROC curve for both the standard and reweighted perplexity. 

\subsection{Additional Details on White-House Press Release LLM Experiment}
\label{app:trump_biden}

\paragraph{Data scrapping.} We scrap the titles of all press statements made by the Biden and Trump II administration. For the Biden administration, we scrap all titles starting on the the $15$th of January 2021 from the website \texttt{https://bidenwhitehouse.archives.gov}. For the Trump II administration, we scrap all titles starting from the $20$th of January 2025 from \texttt{https://www.whitehouse.gov/news/}. We will make this data publicaly available. A 2d-PCA of the data is displayed in Figure \ref{fig:trump_biden_pca} and shows a clear distribution shift between the two corpuses. We additionaly display a Umap in Figure \ref{fig:umap_biden_trump} along with the most frequent words in every cluster. 

\begin{figure}
    \centering
    \includegraphics[width=0.6\linewidth]{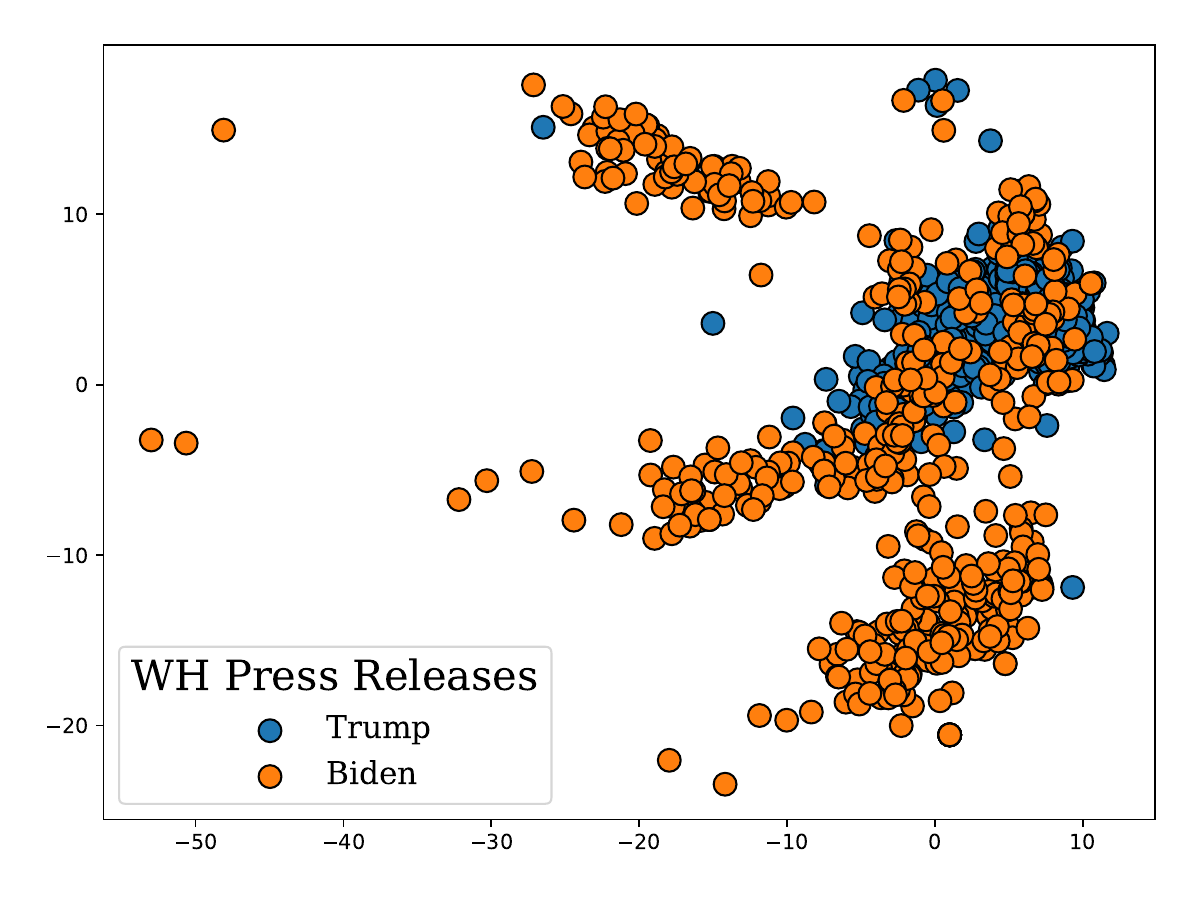}
    \caption{PCA of the mean-pooled internal representation of the non-fine-tuned \texttt{Pythia-70m} model of the two corpuses.}
    \label{fig:trump_biden_pca}
\end{figure}

\begin{figure}
    \centering
    \includegraphics[width=0.45\linewidth]{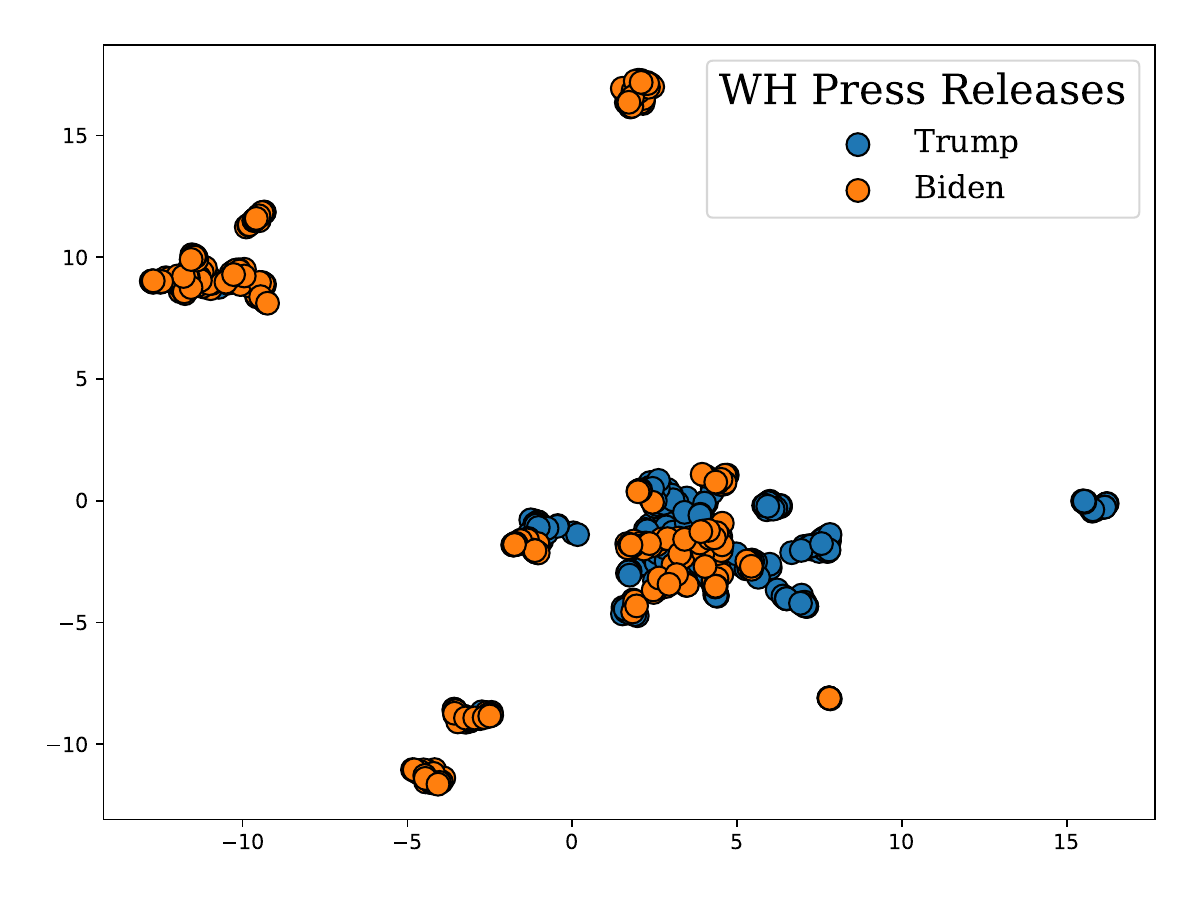}
    \includegraphics[width=0.45\linewidth]{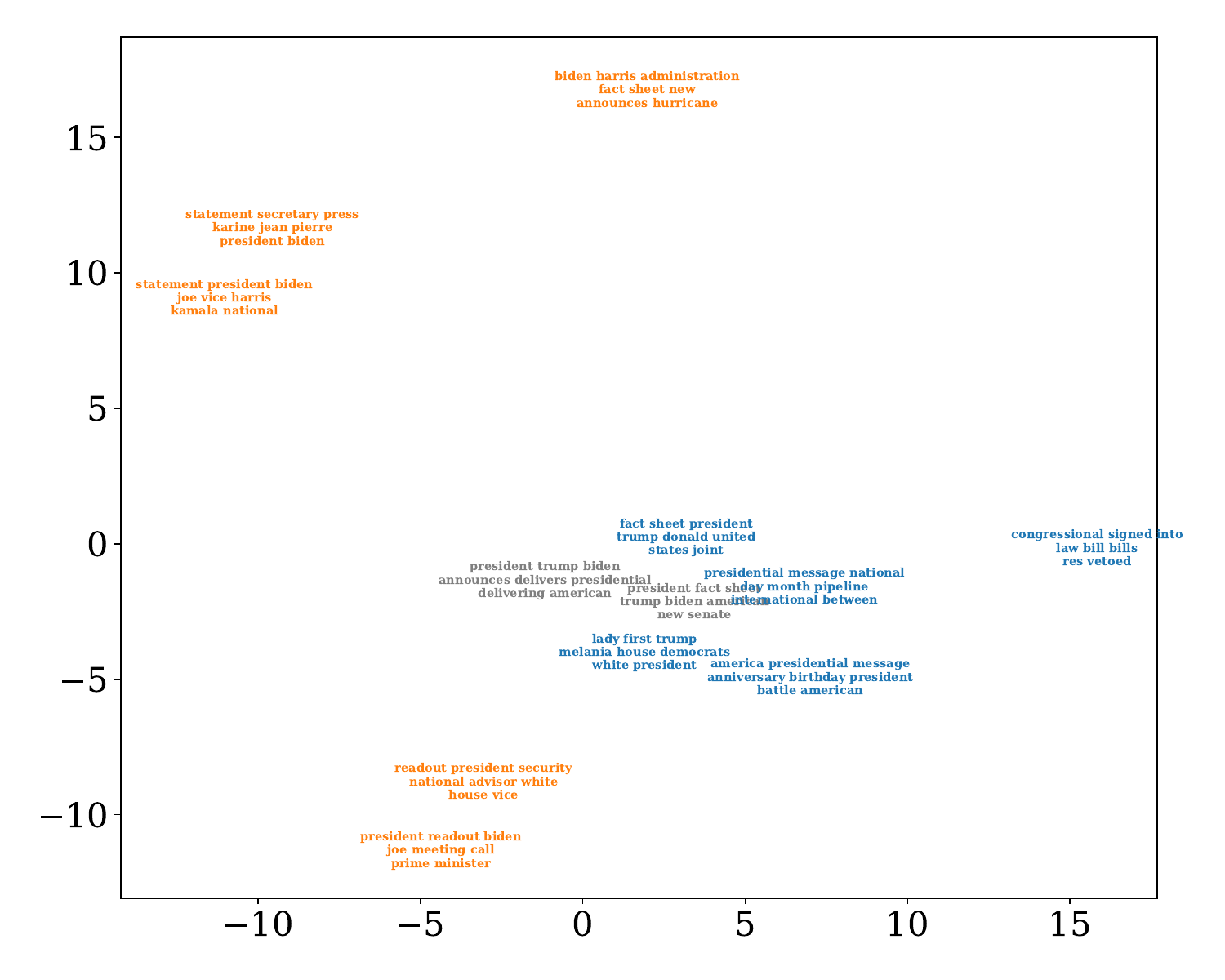}
    \caption{Umap of the titles of press releases from the Biden administration (in orange) and the Trump administration (in blue) on the \textbf{left}. The \textbf{right} plot shows the most frequent words for every cluster in the original umap. We color the words in grey if the percentage of titles in this cluster to belong to the Trump administration is between $30\%$ and $70\%$. If it is bellow $30\%$, the words are colored in orange, and above $70\%$ they are colored in blue.}
    \label{fig:umap_biden_trump}
\end{figure}

\paragraph{Fine-Tuning.} We randomly split the titles from the Biden administration into two halfes of equal size. We then fine-tune the \texttt{Pythia-70m} model for $2$ epoches. Parameters for the fine-tuning are given in Table \ref{tab:fine_tuning}.

\paragraph{Inference of the Propensity Score.} We use the same propensity score model than for the first LLM experiment. Parameters are reported in Table \ref{tab:details_propensity_score}.

\paragraph{Membership attacks.} We conduct the MIAs in the same way than in the previous experiment.

\paragraph{Results.}
Complete results can be found in \Cref{fig:biden_trump_app_total}.

\begin{figure}
    \centering
    \includegraphics[width=\linewidth]{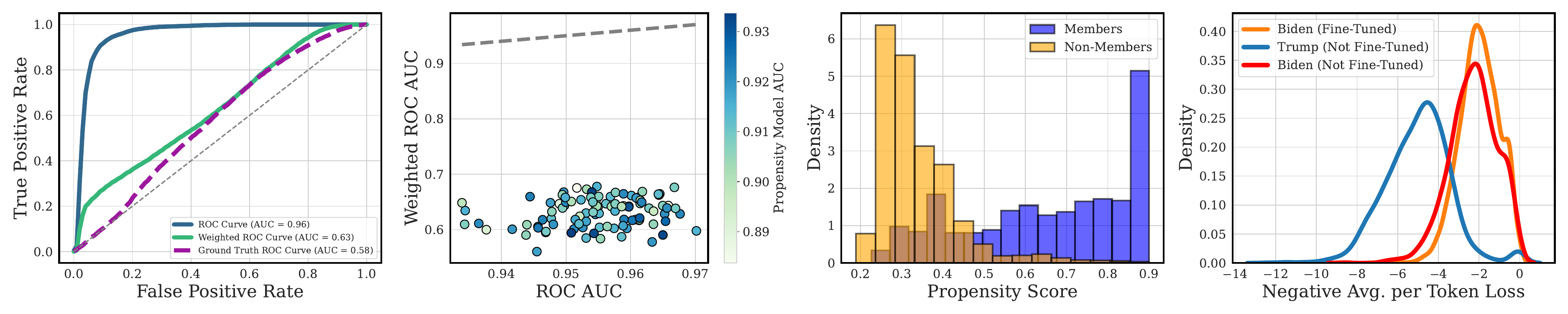}
    \caption{Results of the MIA experiment on the fine-tuned \texttt{Pythia-70m} model. The \textbf{first plot} shows the ROC curves of the vanilla, debiased and pseudo-ground-truth MIAs. The \textbf{second plot} compares the ROC AUC for the vanilla (x-axis) v.s. the debiased MIA (y-axis). Each dot represents one random split of the data into a training and a test set, the color representing the AUC of the propensity score classifier (who classifies based on the text). The \textbf{third plot} shows the propensity scores as infered by the propensity score model. Finally, the \textbf{last plot} shows a KDE plot of the negative per token mean loss of the fine-tuned model for all three datasets.}
    \label{fig:biden_trump_app_total}
\end{figure}

\begin{table}
    \centering
    \begin{tabular}{cc}
    \toprule 
        \textbf{Parameter} & \textbf{Value} \\
        \toprule
         Number of epochs & $2$ \\
         Batch size & $16$ \\
         Weight decay &  $0$\\
         Optimizer & AdamW\\
         Intial Learning Rate & $5\times 10^{-4}$ \\
         Learning Rate Schedule & Linear\\
         \midrule
         \texttt{LoRA} $r$ & $16$ \\
         \texttt{LoRA} $\alpha$ & $32$ \\
         \texttt{LoRA} dropout & $0.05$\\
         Targeted modules & \texttt{q\_proj, v\_proj, query\_key\_value, k\_proj} \\
         \bottomrule
    \end{tabular}
    \vspace{5pt}
    \caption{Parameters for the fine-tuning experiment. Unreported parameters are defaulted to the parameters of \texttt{transformers.TrainingArguments} and \texttt{peft.LoRAConfig}.}
    \label{tab:fine_tuning}
\end{table}

\begin{figure}
    \centering
    \includegraphics[width=\linewidth]{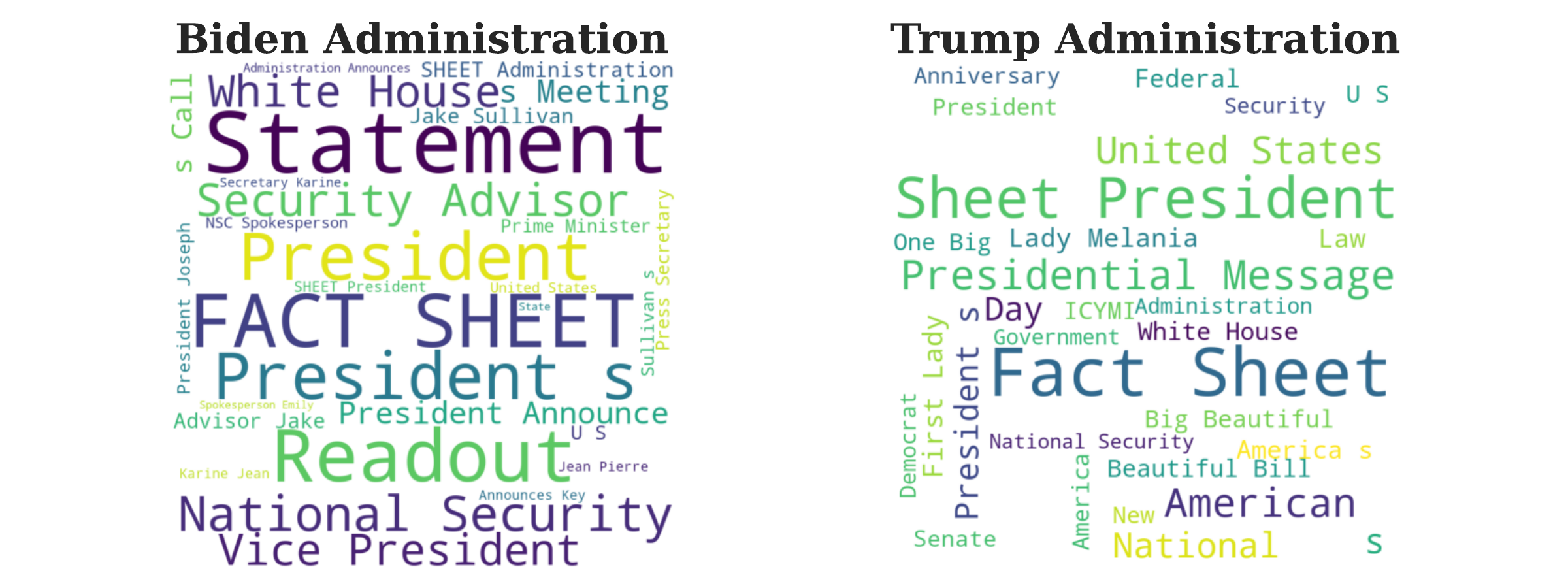}
    \caption{Word clouds for the titles of the press releases of the Biden administration (\textbf{left}) and the Trump I administration (\textbf{right}). To generate these pointclouds, the names of the Presidents and their Vice-Presidents have been masked by [NAME] and [FIRSTNAME] placeholders, which we have redacted from these word clouds for clarity.}
    \label{fig:enter-label3}
\end{figure}


\end{document}

%% file: macros.tex
\usepackage{tikz}
\usetikzlibrary{arrows.meta, positioning}

\usepackage{xcolor}
\definecolor{darkgreen}{RGB}{0,100,0}   
\definecolor{darkred}{RGB}{139,0,0}     

\newcommand{\dtrain}{\mathcal D_\mathrm{train}}
\newcommand{\dtraini}{\mathcal D_\mathrm{train,i}}

\newcommand{\E}{\mathbb{E}}

\newcommand{\indep}{\perp \!\!\! \perp}

\newcommand{\cL}{\mathcal{L}}

\newcommand{\cO}{\mathcal O}

\newcommand{\R}{\mathbb R}

\newcommand{\eqdef}{\stackrel{\text{def}}{=}}

\def\<#1,#2>{\langle #1,#2\rangle}

\usepackage{dsfont}

\def\<{\langle}
\def\>{\rangle}
\def\|{\Vert}

\def\eps{\varepsilon}

\newcommand{\esp}[1]{\mathbb{E}\left[#1\right]}
\newcommand{\espT}[1]{\mathbb{E}_{\cP_T}\left[#1\right]}

\newcommand{\NRM}[1]{{{\left\| #1\right\|}}} 
\newcommand{\set}[1]{{\left\{ #1\right\}}} 
\newcommand{\proba}[1]{\mathbb{P}\left(#1\right)}
\newcommand{\probaT}[1]{\mathbb{P}_{\cP_T}\left(#1\right)}

\renewcommand{\P}{\mathbb{P}}
\newcommand{\cA}{\mathcal{A}}

\newcommand{\cN}{\mathcal{N}}
\newcommand{\cX}{\mathcal{X}}

\newcommand{\dd}{{\rm d}}

\newcommand{\cP}{\mathcal{P}}

\newcommand{\cD}{\mathcal{D}}
\newcommand{\one}{\mathds{1}}

\usepackage{amsfonts}
\usepackage{amsthm}
\usepackage{amsmath}
\usepackage{amssymb}
\usepackage{mathrsfs}
\usepackage{amscd}
\usepackage{caption}
\usepackage{cleveref}

\DeclareMathOperator*{\argmax}{argmax}

%% file: references.bib
@inproceedings{10.1145/3548606.3560663,
author = {Jayaraman, Bargav and Evans, David},
title = {Are Attribute Inference Attacks Just Imputation?},
year = {2022},
booktitle = {PCCS},
}

@inproceedings{10.1145/2810103.2813677,
author = {Fredrikson, Matt and Jha, Somesh and Ristenpart, Thomas},
title = {Model Inversion Attacks that Exploit Confidence Information and Basic Countermeasures},
year = {2015},
booktitle = {CCS},
}

@inproceedings{Keinan2025,
title={How Well Can Differential Privacy Be Audited in One Run?},
author={Amit Keinan and Moshe Shenfeld and Katrina Ligett},
booktitle={The Thirty-ninth Annual Conference on Neural Information Processing Systems},
year={2025},
url={https://openreview.net/forum?id=004uTlSufe}
}

@inproceedings{Xiang2025,
author = {Xiang, Zihang and Wang, Tianhao and Wang, Di},
title = {Privacy audit as bits transmission: (im)possibilities for audit by one run},
year = {2025},
booktitle = {Proceedings of the 34th USENIX Conference on Security Symposium},
}

@article{Liu2025one_run_quantile_regression,
  title = {Enhancing {{One-run Privacy Auditing}} with {{Quantile Regression-Based Membership Inference}}},
  author = {Liu, Terrance and Boglioni, Matteo and Fu, Yiwei and Hu, Shengyuan and Thaker, Pratiksha and Wu, Zhiwei Steven},
  year = {2025},
  journal = {arXiv preprint arXiv:2506:15349}
}

@INPROCEEDINGS{privacy_games,
author = { Salem, Ahmed and Cherubin, Giovanni and Evans, David and Kopf, Boris and Paverd, Andrew and Suri, Anshuman and Tople, Shruti and Zanella-Beguelin, Santiago },
booktitle = {IEEE Symposium on Security and Privacy (S\&P)},
title = {{ SoK: Let the Privacy Games Begin! A Unified Treatment of Data Inference Privacy in Machine Learning }},
year = {2023},}

@article{meeus2024sok,
  title={SoK: Membership Inference Attacks on LLMs are Rushing Nowhere (and How to Fix It)},
  author={Meeus, Matthieu and Shilov, Igor and Jain, Shubham and Faysse, Manuel and Rei, Marek and de Montjoye, Yves-Alexandre},
  journal={arXiv preprint arXiv:2406.17975},
  year={2024}
}

@inproceedings{duan2024membership,
  title={Do Membership Inference Attacks Work on Large Language Models?}, 
  author={Michael Duan and Anshuman Suri and Niloofar Mireshghallah and Sewon Min and Weijia Shi and Luke Zettlemoyer and Yulia Tsvetkov and Yejin Choi and David Evans and Hannaneh Hajishirzi},
  year={2024},
  booktitle={Conference on Language Modeling (COLM)},
}

@book{ImbensRubin2015,
  author    = {Imbens, Guido W. and Rubin, Donald B.},
  title     = {Causal Inference in Statistics, Social, and Biomedical Sciences},
  publisher = {Cambridge University Press},
  address   = {Cambridge, UK},
  year      = {2015},
  isbn      = {978-0521885881}
}

@article{mahloujifar2024auditing,
  title={Auditing $ f $-Differential Privacy in One Run},
  author={Mahloujifar, Saeed and Melis, Luca and Chaudhuri, Kamalika},
  journal={arXiv preprint arXiv:2410.22235},
  year={2024}
}

@inproceedings{yeom2018privacy,
  title={Privacy risk in machine learning: Analyzing the connection to overfitting},
  author={Yeom, Samuel and Giacomelli, Irene and Fredrikson, Matt and Jha, Somesh},
  booktitle={2018 IEEE 31st computer security foundations symposium (CSF)},
  pages={268--282},
  year={2018},
  organization={IEEE}
}

@article{sobel2006randomized,
  title={What do randomized studies of housing mobility demonstrate? Causal inference in the face of interference},
  author={Sobel, Michael E},
  journal={Journal of the American Statistical Association},
  volume={101},
  number={476},
  pages={1398--1407},
  year={2006},
  publisher={Taylor \& Francis}
}

@misc{even2025winratio,
      title={Rethinking the Win Ratio: A Causal Framework for Hierarchical Outcome Analysis}, 
      author={Mathieu Even and Julie Josse},
      year={2025},
      eprint={2501.16933},
      archivePrefix={arXiv},
      primaryClass={stat.ME},
      url={https://arxiv.org/abs/2501.16933}, 
}

@article{Mao2017,
	title        = {On causal estimation using $U$-statistics},
	author       = {Mao,  Lu},
	year         = {2017},
	month        = {dec},
	journal      = {Biometrika},
	publisher    = {Oxford University Press (OUP)},
	volume       = {105},
	number       = {1},
	pages        = {215–220},
	issn         = {1464-3510}
}

@article{Buyse2010,
	title        = {Generalized pairwise comparisons of prioritized outcomes in the two‐sample problem},
	author       = {Buyse,  Marc},
	year         = {2010},
	month        = {dec},
	journal      = {Statistics in Medicine},
	publisher    = {Wiley},
	volume       = {29},
	number       = {30},
	pages        = {3245–3257}
}

@article{Pocock2011winratio,
	title        = {The win ratio: a new approach to the analysis of composite endpoints in clinical trials based on clinical priorities},
	author       = {Pocock,  S. J. and Ariti,  C. A. and Collier,  T. J. and Wang,  D.},
	year         = {2011},
	month        = {sep},
	journal      = {European Heart Journal},
	publisher    = {Oxford University Press (OUP)},
	volume       = {33},
	number       = {2},
	pages        = {176–182},
	issn         = {1522-9645}
}

@article{Wilcoxon1945,
	title        = {Individual Comparisons by Ranking Methods},
	author       = {Wilcoxon,  Frank},
	year         = {1945},
	month        = {dec},
	journal      = {Biometrics Bulletin},
	publisher    = {JSTOR},
	volume       = {1},
	number       = {6},
	pages        = {80},
	issn         = {0099-4987}
}

@article{Mann1947,
	title        = {On a Test of Whether one of Two Random Variables is Stochastically Larger than the Other},
	author       = {Mann,  H. B. and Whitney,  D. R.},
	year         = {1947},
	month        = {mar},
	journal      = {The Annals of Mathematical Statistics},
	publisher    = {Institute of Mathematical Statistics},
	volume       = {18},
	number       = {1},
	pages        = {50–60},
	issn         = {0003-4851}
}

@article{tian2000probabilities,
  title={Probabilities of causation: Bounds and identification},
  author={Tian, Jin and Pearl, Judea},
  journal={Annals of Mathematics and Artificial Intelligence},
  volume={28},
  number={1},
  pages={287--313},
  year={2000},
  publisher={Springer}
}

@article{ogburn2024causal,
  title={Causal inference for social network data},
  author={Ogburn, Elizabeth L and Sofrygin, Oleg and Diaz, Ivan and Van der Laan, Mark J},
  journal={Journal of the American Statistical Association},
  volume={119},
  number={545},
  pages={597--611},
  year={2024},
  publisher={Taylor \& Francis}
}

@inproceedings{carlini2019secret,
  title={The secret sharer: Evaluating and testing unintended memorization in neural networks},
  author={Carlini, Nicholas and Liu, Chang and Erlingsson, {\'U}lfar and Kos, Jernej and Song, Dawn},
  booktitle={28th USENIX security symposium (USENIX security 19)},
  pages={267--284},
  year={2019}
}

@article{bertran2023scalable,
  title={Scalable membership inference attacks via quantile regression},
  author={Bertran, Martin and Tang, Shuai and Roth, Aaron and Kearns, Michael and Morgenstern, Jamie H and Wu, Steven Z},
  journal={Advances in Neural Information Processing Systems},
  volume={36},
  pages={314--330},
  year={2023}
}

@misc{hernan2010causal,
  title={Causal inference},
  author={Hern{\'a}n, Miguel A and Robins, James M},
  year={2010},
  publisher={CRC Boca Raton, FL}
}

@article{rubin1974estimating,
  title={Estimating causal effects of treatments in randomized and nonrandomized studies.},
  author={Rubin, Donald B},
  journal={Journal of educational Psychology},
  volume={66},
  number={5},
  pages={688},
  year={1974},
  publisher={American Psychological Association}
}

@inproceedings{shokri_enhanced,
author = {Ye, Jiayuan and Maddi, Aadyaa and Murakonda, Sasi Kumar and Bindschaedler, Vincent and Shokri, Reza},
title = {Enhanced Membership Inference Attacks against Machine Learning Models},
year = {2022},
booktitle = {CCS},
}

@inproceedings{shokri2017membership,
  title={Membership inference attacks against machine learning models},
  author={Shokri, Reza and Stronati, Marco and Song, Congzheng and Shmatikov, Vitaly},
  booktitle={2017 IEEE symposium on security and privacy (SP)},
  pages={3--18},
  year={2017},
  organization={IEEE}
}

@inproceedings{DBLP:conf/icml/ZarifzadehLS24,
  author       = {Sajjad Zarifzadeh and
                  Philippe Liu and
                  Reza Shokri},
  title        = {Low-Cost High-Power Membership Inference Attacks},
  booktitle    = {Forty-first International Conference on Machine Learning, {ICML} 2024,
                  Vienna, Austria, July 21-27, 2024},
  publisher    = {OpenReview.net},
  year         = {2024},
  url          = {https://openreview.net/forum?id=sT7UJh5CTc},
  bibsource    = {dblp computer science bibliography, https://dblp.org}
}

@inproceedings{carlini2021extracting,
  title={Extracting training data from large language models},
  author={Carlini, Nicholas and Tramer, Florian and Wallace, Eric and Jagielski, Matthew and Herbert-Voss, Ariel and Lee, Katherine and Roberts, Adam and Brown, Tom and Song, Dawn and Erlingsson, Ulfar and others},
  booktitle={30th USENIX security symposium (USENIX Security 21)},
  pages={2633--2650},
  year={2021}
}

@article{shi2023detecting,
  title={Detecting pretraining data from large language models},
  author={Shi, Weijia and Ajith, Anirudh and Xia, Mengzhou and Huang, Yangsibo and Liu, Daogao and Blevins, Terra and Chen, Danqi and Zettlemoyer, Luke},
  journal={arXiv preprint arXiv:2310.16789},
  year={2023}
}

@inproceedings{meeus2024did,
  title={Did the neurons read your book? document-level membership inference for large language models},
  author={Meeus, Matthieu and Jain, Shubham and Rei, Marek and de Montjoye, Yves-Alexandre},
  booktitle={33rd USENIX Security Symposium (USENIX Security 24)},
  pages={2369--2385},
  year={2024}
}

@article{papadogeorgou2019causal,
  title={Causal inference with interfering units for cluster and population level treatment allocation programs},
  author={Papadogeorgou, Georgia and Mealli, Fabrizia and Zigler, Corwin M},
  journal={Biometrics},
  volume={75},
  number={3},
  pages={778--787},
  year={2019},
  publisher={Wiley Online Library}
}

@article{manski2013identification,
  title={Identification of treatment response with social interactions},
  author={Manski, Charles F},
  journal={The Econometrics Journal},
  volume={16},
  number={1},
  pages={S1--S23},
  year={2013},
  publisher={Oxford University Press Oxford, UK}
}

@inproceedings{bhattacharya2020causal,
  title={Causal inference under interference and network uncertainty},
  author={Bhattacharya, Rohit and Malinsky, Daniel and Shpitser, Ilya},
  booktitle={Uncertainty in Artificial Intelligence},
  pages={1028--1038},
  year={2020},
  organization={PMLR}
}

@article{cinelli2025challenges,
  title={Challenges in statistics: A dozen challenges in causality and causal inference},
  author={Cinelli, Carlos and Feller, Avi and Imbens, Guido and Kennedy, Edward and Magliacane, Sara and Zubizarreta, Jose},
  journal={arXiv preprint arXiv:2508.17099},
  year={2025}
}

@article{Dong22GDP,
  title={Gaussian differential privacy},
  author={Dong, Jinshuo and Roth, Aaron and Su, Weijie J},
  journal={Journal of the Royal Statistical Society: Series B (Statistical Methodology)},
  volume={84},
  number={1},
  pages={3--37},
  year={2022},
  publisher={Wiley Online Library}
}

@article{pitcan2017note,
  title={A note on concentration inequalities for U-statistics},
  author={Pitcan, Yannik},
  journal={arXiv preprint arXiv:1712.06160},
  year={2017}
}

@article{wang2017g,
  title={G-computation of average treatment effects on the treated and the untreated},
  author={Wang, Aolin and Nianogo, Roch A and Arah, Onyebuchi A},
  journal={BMC medical research methodology},
  volume={17},
  number={1},
  pages={3},
  year={2017},
  publisher={Springer}
}

@article{shu2018improved,
  title={Improved estimation of average treatment effects on the treated: Local efficiency, double robustness, and beyond},
  author={Shu, Heng and Tan, Zhiqiang},
  journal={arXiv preprint arXiv:1808.01408},
  year={2018}
}

@article{rosenbaum1983central,
  title={The central role of the propensity score in observational studies for causal effects},
  author={Rosenbaum, Paul R and Rubin, Donald B},
  journal={Biometrika},
  volume={70},
  number={1},
  pages={41--55},
  year={1983},
  publisher={Oxford University Press}
}

@article{jagielski2020auditing,
  title={Auditing differentially private machine learning: How private is private sgd?},
  author={Jagielski, Matthew and Ullman, Jonathan and Oprea, Alina},
  journal={Advances in Neural Information Processing Systems},
  volume={33},
  pages={22205--22216},
  year={2020}
}

@article{kazmi2024panoramia,
  title={Panoramia: Privacy auditing of machine learning models without retraining},
  author={Kazmi, Mishaal and Lautraite, Hadrien and Akbari, Alireza and Tang, Qiaoyue and Soroco, Mauricio and Wang, Tao and Gambs, S{\'e}bastien and L{\'e}cuyer, Mathias},
  journal={Advances in Neural Information Processing Systems},
  volume={37},
  pages={57262--57300},
  year={2024}
}

@inproceedings{nasr2023tight,
  title={Tight auditing of differentially private machine learning},
  author={Nasr, Milad and Hayes, Jamie and Steinke, Thomas and Balle, Borja and Tram{\`e}r, Florian and Jagielski, Matthew and Carlini, Nicholas and Terzis, Andreas},
  booktitle={32nd USENIX Security Symposium (USENIX Security 23)},
  pages={1631--1648},
  year={2023}
}

@article{hudgens2008toward,
  title={Toward causal inference with interference},
  author={Hudgens, Michael G and Halloran, M Elizabeth},
  journal={Journal of the american statistical association},
  volume={103},
  number={482},
  pages={832--842},
  year={2008},
  publisher={Taylor \& Francis}
}

@article{DBLP:journals/popets/KulynychYCVT22,
  author       = {Bogdan Kulynych and
                  Mohammad Yaghini and
                  Giovanni Cherubin and
                  Michael Veale and
                  Carmela Troncoso},
  title        = {Disparate Vulnerability to Membership Inference Attacks},
  journal      = {Proc. Priv. Enhancing Technol.},
  volume       = {2022},
  number       = {1},
  pages        = {460--480},
  year         = {2022},
}

@article{Wager03072018,
author = {Stefan Wager and Susan Athey},
title = {Estimation and Inference of Heterogeneous Treatment Effects using Random Forests},
journal = {Journal of the American Statistical Association},
volume = {113},
number = {523},
pages = {1228--1242},
year = {2018},
}

@misc{wager2024causal,
  title={Causal inference: A statistical learning approach},
  author={Wager, Stefan},
  year={2024},
  publisher={Technical report, Stanford University}
}

@inproceedings{carlini2022membership,
  title={Membership inference attacks from first principles},
  author={Carlini, Nicholas and Chien, Steve and Nasr, Milad and Song, Shuang and Terzis, Andreas and Tramer, Florian},
  booktitle={2022 IEEE symposium on security and privacy (SP)},
  pages={1897--1914},
  year={2022},
  organization={IEEE}
}

@article{splawa1990application,
	title        = {On the application of probability theory to agricultural experiments. Essay on principles. Section 9.},
	author       = {Splawa-Neyman, Jerzy and Dabrowska, Dorota M and Speed, Terrence P},
	year         = {1990},
	journal      = {Statistical Science},
	publisher    = {JSTOR},
	pages        = {465--472}
}

@article{steinke2023privacy,
  title={Privacy auditing with one (1) training run},
  author={Steinke, Thomas and Nasr, Milad and Jagielski, Matthew},
  journal={Advances in Neural Information Processing Systems},
  volume={36},
  pages={49268--49280},
  year={2023}
}

@inproceedings{hardt2016train,
  title={Train faster, generalize better: Stability of stochastic gradient descent},
  author={Hardt, Moritz and Recht, Ben and Singer, Yoram},
  booktitle={International conference on machine learning},
  pages={1225--1234},
  year={2016},
  organization={PMLR}
}

@article{hu2022lora,
  title={Lora: Low-rank adaptation of large language models},
  author={Hu, Edward J and Shen, Yelong and Wallis, Phillip and Allen-Zhu, Zeyuan and Li, Yuanzhi and Wang, Shean and Wang, Liang and Chen, Weizhu and others},
  journal={ICLR},
  year={2022}
}

@article{miles2019causal,
  title={Causal inference when counterfactuals depend on the proportion of all subjects exposed},
  author={Miles, Caleb H and Petersen, Maya and van der Laan, Mark J},
  journal={Biometrics},
  volume={75},
  number={3},
  pages={768--777},
  year={2019},
  publisher={Oxford University Press}
}

@article{li2022random,
  title={Random graph asymptotics for treatment effect estimation under network interference},
  author={Li, Shuangning and Wager, Stefan},
  journal={The Annals of Statistics},
  volume={50},
  number={4},
  pages={2334--2358},
  year={2022},
  publisher={Institute of Mathematical Statistics}
}

@article{forastiere2024causal,
  title={Causal inference on networks under continuous treatment interference},
  author={Forastiere, Laura and Del Prete, Davide and Sciabolazza, Valerio Leone},
  journal={Social Networks},
  volume={76},
  pages={88--111},
  year={2024},
  publisher={Elsevier}
}

@article{clark2021approach,
  title={An approach to causal inference over stochastic networks},
  author={Clark, Duncan A and Handcock, Mark S},
  journal={arXiv preprint arXiv:2106.14145},
  year={2021}
}

@article{bousquet2002stability,
  title={Stability and generalization},
  author={Bousquet, Olivier and Elisseeff, Andr{\'e}},
  journal={Journal of machine learning research},
  volume={2},
  number={Mar},
  pages={499--526},
  year={2002}
}

@inproceedings{tople2020alleviating,
  title={Alleviating privacy attacks via causal learning},
  author={Tople, Shruti and Sharma, Amit and Nori, Aditya},
  booktitle={International Conference on Machine Learning},
  pages={9537--9547},
  year={2020},
  organization={PMLR}
}

@misc{noauthor_nyt_nodate,
	title = {{NYT} v. {OpenAI}: {The} {Times}’s {About}-{Face}},
    author = { {New York Times v. OpenAI}},
    shorttitle = {{NYT} v. {OpenAI}},
	url = {https://harvardlawreview.org/blog/2024/04/nyt-v-openai-the-timess-about-face/},
	language = {en-US},
	urldate = {2025-12-26},
	journal = {Harvard Law Review},
    year=2023
}

@misc{noauthor_concord_nodate,
	title = {Concord {Music} {Group}, {Inc}. v. {Anthropic} {PBC}, 5:24-cv-03811 - {CourtListener}.com},
    author = { {Concord Music Group v. Anthropic}},
	url = {https://www.courtlistener.com/docket/68889092/concord-music-group-inc-v-anthropic-pbc/},
	language = {en\_us},
	urldate = {2025-12-26},
	journal = {CourtListener},
    year=2023,
}

@misc{edpb,
	title = {Opinion 28/2024 on certain data protection aspects related to the processing of personal data in the context of AI models},
    author = { {European Data Protection Board}},
	url = {https://www.edpb.europa.eu/our-work-tools/our-documents/opinion-board-art-64/opinion-282024-certain-data-protection-aspects_en},
	language = {en-US},
	urldate = {2026-01-15},
    year=2024
}

@inproceedings{10.1145/3548606.3560694,
author = {Baluta, Teodora and Shen, Shiqi and Hitarth, S. and Tople, Shruti and Saxena, Prateek},
title = {Membership Inference Attacks and Generalization: A Causal Perspective},
year = {2022},
booktitle = {CCS},
}

@article{nasr2023scalable,
  title={Scalable extraction of training data from (production) language models},
  author={Nasr, Milad and Carlini, Nicholas and Hayase, Jonathan and Jagielski, Matthew and Cooper, A Feder and Ippolito, Daphne and Choquette-Choo, Christopher A and Wallace, Eric and Tram{\`e}r, Florian and Lee, Katherine},
  journal={arXiv preprint arXiv:2311.17035},
  year={2023}
}

@article{hayes2025strong,
  title={Strong membership inference attacks on massive datasets and (moderately) large language models},
  author={Hayes, Jamie and Shumailov, Ilia and Choquette-Choo, Christopher A and Jagielski, Matthew and Kaissis, George and Lee, Katherine and Nasr, Milad and Ghalebikesabi, Sahra and Mireshghallah, Niloofar and Sundaram Mutu Selva Annamalai, Meenatchi and others},
  journal={arXiv e-prints},
  pages={arXiv--2505},
  year={2025}
}

@article{barbero2025extracting,
  title={Extracting alignment data in open models},
  author={Barbero, Federico and Gu, Xiangming and Choquette-Choo, Christopher A and Sitawarin, Chawin and Jagielski, Matthew and Yona, Itay and Veli{\v{c}}kovi{\'c}, Petar and Shumailov, Ilia and Hayes, Jamie},
  journal={arXiv preprint arXiv:2510.18554},
  year={2025}
}

@article{zhang2024membership,
  title={Membership inference attacks cannot prove that a model was trained on your data},
  author={Zhang, Jie and Das, Debeshee and Kamath, Gautam and Tram{\`e}r, Florian},
  journal={arXiv preprint arXiv:2409.19798},
  year={2024}
}

@InProceedings{eichler_nob-mias_2025,
author="Eichler, C{\'e}dric
and Champeil, Nathan
and Anciaux, Nicolas
and Bensamoun, Alexandra
and H. Arcolezi, H{\'e}ber
and De Fuentes, Jos{\'e} Maria",
editor="Barhamgi, Mahmoud
and Wang, Hua
and Wang, Xin",
title="Nob-MIAs: Non-biased Membership Inference Attacks Assessment on Large Language Models with Ex-Post Dataset Construction",
booktitle="Web Information Systems Engineering -- WISE 2024",
year="2025",
publisher="Springer Nature Singapore",
address="Singapore",
pages="441--456",
isbn="978-981-96-0570-5"
}

@inproceedings{mattern-etal-2023-membership,
    title = "Membership Inference Attacks against Language Models via Neighbourhood Comparison",
    author = {Mattern, Justus  and
      Mireshghallah, Fatemehsadat  and
      Jin, Zhijing  and
      Sch{\"o}lkopf, Bernhard  and
      Sachan, Mrinmaya  and
      Berg-Kirkpatrick, Taylor},
    editor = "Rogers, Anna  and
      Boyd-Graber, Jordan  and
      Okazaki, Naoaki",
    booktitle = "Findings of the Association for Computational Linguistics: ACL 2023",
    month = jul,
    year = "2023",
    address = "Toronto, Canada",
    publisher = "Association for Computational Linguistics",
    url = "https://aclanthology.org/2023.findings-acl.719/",
    doi = "10.18653/v1/2023.findings-acl.719",
    pages = "11330--11343"
}

@inproceedings{maini_llm_nodate,
author = {Maini, Pratyush and Jia, Hengrui and Papernot, Nicolas and Dziedzic, Adam},
title = {LLM dataset inference: did you train on my dataset?},
year = {2024},
isbn = {9798331314385},
publisher = {Curran Associates Inc.},
address = {Red Hook, NY, USA},
booktitle = {Proceedings of the 38th International Conference on Neural Information Processing Systems},
articleno = {3941},
numpages = {24},
location = {Vancouver, BC, Canada},
series = {NIPS '24}
}

@misc{maini_dataset_2021,
    title = {Dataset {Inference}: {Ownership} {Resolution} in {Machine} {Learning}},
    shorttitle = {Dataset {Inference}},
    url = {http://arxiv.org/abs/2104.10706},
    doi = {10.48550/arXiv.2104.10706},
    language = {en},
    urldate = {2025-12-29},
    publisher = {arXiv},
    author = {Maini, Pratyush and Yaghini, Mohammad and Papernot, Nicolas},
    month = apr,
    year = {2021},
    note = {arXiv:2104.10706 [stat]},
}

@inproceedings{sablayrolles2019white,
  title={White-box vs black-box: Bayes optimal strategies for membership inference},
  author={Sablayrolles, Alexandre and Douze, Matthijs and Ollivier, Yann and Schmid, Cordelia and J{\'e}gou, Herv{\'e}},
  booktitle={ICML},
  year={2019}
}

@misc{sablayrolles_radioactive_2020,
    title = {Radioactive data: tracing through training},
    shorttitle = {Radioactive data},
    url = {http://arxiv.org/abs/2002.00937},
    doi = {10.48550/arXiv.2002.00937},
    language = {en},
    urldate = {2025-12-29},
    publisher = {arXiv},
    author = {Sablayrolles, Alexandre and Douze, Matthijs and Schmid, Cordelia and Jégou, Hervé},
    month = feb,
    year = {2020},
    note = {arXiv:2002.00937 [stat]},
}

@misc{kirchenbauer_watermark_2024,
    title = {A {Watermark} for {Large} {Language} {Models}},
    url = {http://arxiv.org/abs/2301.10226},
    doi = {10.48550/arXiv.2301.10226},
    language = {en},
    urldate = {2025-12-29},
    publisher = {arXiv},
    author = {Kirchenbauer, John and Geiping, Jonas and Wen, Yuxin and Katz, Jonathan and Miers, Ian and Goldstein, Tom},
    month = may,
    year = {2024},
    note = {arXiv:2301.10226 [cs]},
}

@inproceedings{he2016deep,
  title = {Deep Residual Learning for Image Recognition},
  booktitle = {2016 {{IEEE}} Conference on Computer Vision and Pattern Recognition ({{CVPR}})},
  author = {He, Kaiming and Zhang, Xiangyu and Ren, Shaoqing and Sun, Jian},
  year = {2016},
}

@INPROCEEDINGS{5206848,
  author={Deng, Jia and Dong, Wei and Socher, Richard and Li, Li-Jia and Kai Li and Li Fei-Fei},
  booktitle={2009 IEEE Conference on Computer Vision and Pattern Recognition}, 
  title={ImageNet: A large-scale hierarchical image database}, 
  year={2009},
  volume={},
  number={},
  pages={248-255},
  doi={10.1109/CVPR.2009.5206848}}

@article{krizhevsky2009learning,
  title={Learning multiple layers of features from tiny images},
  author={Krizhevsky, Alex and others},
  year={2009}
}

@inproceedings{biderman2023pythia,
  title={Pythia: A suite for analyzing large language models across training and scaling},
  author={Biderman, Stella and Schoelkopf, Hailey and Anthony, Quentin Gregory and Bradley, Herbie and O’Brien, Kyle and Hallahan, Eric and Khan, Mohammad Aflah and Purohit, Shivanshu and Prashanth, USVSN Sai and Raff, Edward and others},
  booktitle={International conference on machine learning},
  pages={2397--2430},
  year={2023},
  organization={PMLR}
}

@article{korner2022linguistic,
  title={How the linguistic styles of Donald Trump and Joe Biden reflect different forms of power},
  author={K{\"o}rner, Robert and Overbeck, Jennifer R and K{\"o}rner, Erik and Sch{\"u}tz, Astrid},
  journal={Journal of Language and Social Psychology},
  volume={41},
  number={6},
  pages={631--658},
  year={2022},
  publisher={Sage Publications Sage CA: Los Angeles, CA}
}
